\newtheorem{conjecture}{Conjecture}[section]
\def\environment{\mathcal{E}}
\def\proxy{\tilde{\environment}}
\def\target{\chi}
\def\actions{\mathcal{A}}
\def\observations{\mathcal{O}}
\def\histories{\mathcal{H}}
\def\states{\mathcal{S}}
\def\Regret{{\rm Regret}}
\def\KL{\mathbf{d}_{\mathrm{KL}}}
\def\E{\mathbb{E}}
\def\F{\mathbb{F}}
\def\H{\mathbb{H}}
\def\I{\mathbb{I}}
\def\Pr{\mathbb{P}}
\def\R{\mathbb{R}}
\def\1{\mathbf{1}}
\def\Var{\mathrm{Var}}
\DeclareMathOperator*{\argmax}{arg\,max}
\DeclareMathOperator*{\argmin}{arg\,min}
\newcommand{\Ic}{\mathcal{I}}
\newcommand{\Xc}{\mathcal{X}}
\newcommand{\Yc}{\mathcal{Y}}
\newcommand{\Qc}{\mathcal{Q}}
\newcommand{\Ec}{\mathcal{E}}
\newcommand{\Ac}{\mathcal{A}}
\newcommand{\Bc}{\mathcal{B}}
\definecolor{darkred}{rgb}{0.7,0,0}
\definecolor{umblue}{rgb}{0.0,0.153,0.215}
\definecolor{ummaize}{rgb}{1.0,0.79,0.02}
\definecolor{darkblue}{rgb}{0.0, 0.0, 0.55}
\newcommand{\kibitz}[2]{\ifnum\Comments=1{\textcolor{#1}{\textsf{\footnotesize #2}}}\fi}
\newcommand{\bsuite}{\texttt{bsuite}}
\newcommand{\bsuiteversion}{\texttt{bsuite2019}}
\newcommand{\bsuitegithub}{\texttt{github.com/deepmind/bsuite}}
\newcommand{\bsuitetitle}[1]{

\section{\bsuite\ Report}

}
\newcommand{\bsuiteabstract}{
The \textit{Behaviour Suite for Core Reinforcement Learning}, or \bsuite\ for short, is a collection of carefully-designed experiments that investigate core capabilities of a reinforcement learning (RL) agent.
The aim of the \bsuite\ project is to collect clear, informative and scalable problems that capture key issues in the design of efficient and general learning algorithms and study agent behaviour through their performance on these shared benchmarks.
This report provides a snapshot of agent performance on \bsuiteversion, obtained by running the experiments from \bsuitegithub\ \citep{Osband2020Behaviour}.
}
\newcommand{\bsuiteradarplot}{figures/bsuite_radar}  
\newcommand{\bsuitebarplot}{figures/bsuite_bar}  
\title{Reinforcement Learning, Bit by Bit}
\author[1]{Lu,Xiuyuan}
\author[2]{Van Roy,Benjamin}
\author[3]{Dwaracherla,Vikranth}
\author[4]{Ibrahimi,Morteza}
\author[5]{Osband,Ian}
\author[6]{Wen,Zheng}
\affil[1]{DeepMind; lxlu@deepmind.com}
\affil[2]{DeepMind; benvanroy@deepmind.com}
\affil[3]{DeepMind; vikranthd@deepmind.com}
\affil[4]{DeepMind; mibrahimi@deepmind.com}
\affil[5]{DeepMind; iosband@deepmind.com}
\affil[6]{DeepMind; zhengwen@deepmind.com}
\begin{document}

\makeabstracttitle

\begin{abstract}
Reinforcement learning agents have demonstrated remarkable achievements in simulated environments.  Data efficiency poses an impediment to carrying this success over to real environments.  The design of data-efficient agents calls for a deeper understanding of information acquisition and representation.  We discuss concepts and regret analysis that together offer principled guidance.  This line of thinking sheds light on questions of {\it what information to seek}, {\it how to seek that information}, and {\it what information to retain}.  To illustrate concepts, we design simple agents that build on them and present computational results that highlight data efficiency.
\end{abstract}

\chapter{Introduction}

{``\it Other learning paradigms are about minimization; reinforcement learning is about maximization.''}

The statement quoted above has been attributed to Harry Klopf, though it might only be accurate in sentiment. The statement may sound vacuous, since minimization can be converted to maximization simply via negation of an objective.  However, further reflection reveals a deeper observation.  Many learning algorithms aim to mimic observed patterns, minimizing differences between model and data.  Reinforcement learning is distinguished by its open-ended view.  A reinforcement learning agent learns to improve its behavior over time, without a prescription for eventual dynamics or the limits of performance.  If the objective takes nonnegative values, {\it minimization} suggests a well-defined desired outcome while {\it maximization} conjures pursuit of the unknown.  Indeed, \citeauthor{klopf1982} \citeyear{klopf1982} argued that, by focusing on {\it minimization} of deviations from a desired operating point, then-prevailing theories of homeostasis were too limiting to explain intelligence, while a theory centered around heterostasis {\it could} by allowing for {\it maximization} of open-ended objectives.

\section{Data Efficiency}

In reinforcement learning, the nature of data depends on the agent's behavior.  This bears important implications on the need for data efficiency.  In supervised and unsupervised learning, data is typically viewed as static or evolving slowly.  If data is abundant, as is the case in many modern application areas, the performance bottleneck often lies in model capacity and computational infrastructure.  This holds also when reinforcement learning is applied to simulated environments; while data generated in the course of learning does evolve, a slow rate can be maintained, in which case model capacity and computation remain bottlenecks, though data efficiency can be helpful in reducing simulation time.  On the other hand, in a real environment, data efficiency often becomes the gating factor.

Data efficiency depends on what information the agent seeks, how it seeks that information, and what it retains.  This tutorial offers a framework that can guide associated agent design decisions.  This framework is inspired in part by concepts from another field that has grappled with data efficiency.  In communication, the goal is typically to transmit data through a channel in a way that maximizes throughput, measured in bits per second.  In reinforcement learning, an agent interacts with an unknown environment with an aim to maximize reward.  An important difference that emerges is that bits of information serve as means to maximizing reward and not ends.  As such, an important factor arising in reinforcement learning concerns weighing costs and benefits of acquiring particular bits of information.  Despite this distinction, some concepts from communication can guide our thinking about information in reinforcement learning.

\section{Information Versus Computation}

Communication was a particularly active area of research at the turn of the twentieth century, with an emphasis on scaling up power generation to enable transmission of analog signals over increasing distances.  At the time, encoding and decoding was handled heuristically.  In the 1940s, following Shannon's maxim of ``information first, then computation,'' the focus shifted to understanding what is possible or impossible.  This initiative introduced the {\it bit}\footnote{Originally termed the {\it binary digit}, then the {\it binit}, before the {\it bit}.} as a unit of information and established fundamental limits of communication.  The maxim encouraged understanding possibilities and deferred the study of computation.  Design of encoding and decoding algorithms that attain fundamental limits arrived in the 1960s, with practical implementations emerging in the 1990s.  It is fair to say that this thread of research formed a cornerstone for today's connected world \citep
{jha_2016}.

Reinforcement learning seems to have followed an opposite maxim: ``computation first, then information.''  Beginning with heuristic evolution of algorithmic ideas such as temporal-difference learning \citep{Witten1976,witten1977,sutton1988learning,watkins1989learning} and actor-critic architectures \citep{witten1977,barto1983neuronlike,sutton1984temporal}, followed by demonstrated promise \citep{tesauro1992practical,tesauro1994td}, over the last decades of the twentieth century, much effort was directed toward computational methods, with little regard to data-efficiency \citep{bertsekas1996neuro,sutton2018reinforcement,bertsekas2019reinforcement}.  The past decade has experienced a great deal of further innovation, with an emphasis on scaling up computations and environments, leading to reinforcement learning agents that have produced impressive results in simulated environments and attracted enormous interest \citep{mnih2015human,schrittwieser2020mastering}.  However, data efficiency presents an impediment to the transfer of this success to real environments.  Unlike communication, {\it information} has not been the focus in these lines of research. Questions that are central to data efficiency, such as what information an agent should acquire and the cost of gathering that information, have mostly been ignored.

While much of the focus has been on developing heuristics and scaling up computation, there is a few notable exceptions.  The work of \citet{hutter2007universal} aims to design a ``universal'' agent, building on ideas such as Solomonoff's universal prior while putting aside any computational consideration. It remains unclear, though, how this line of thinking may offer a path towards designing practical, data-efficient agents. There is also a body of work that aims to address data efficiency and derive sample complexity bounds in stylized environments including bandits and Markov decision processes \citep{kearns2002near,brafman2003rmax,jaksch2010near,azar2017minimax,jiang2017contextual,jin2020provably}. However, methods considered in this line of work are not sufficiently scalable to address real, complex environments. The generality of our theoretical framework for thinking about information and data efficiency accommodates reasoning about scalable agent designs. This serves our ultimate goal of designing practical, data-efficient agents for real applications.


\section{Preview}

In this tutorial, we present a framework for studying costs and benefits associated with information.  As we will explain, this can guide how agents represent knowledge and how they seek and retain new information.  In particular, the framework sheds light on the questions of {\it what information to seek}, {\it how to seek that information}, and {\it what information to retain}.

We begin in Chapter \ref{se:agents} with a formalism for studying agents and environments.  We present a simplified version of the DQN agent \citep{mnih-atari-2013,mnih2015human} and an ensemble-DQN agent \citep{osband2016deep,osband2019deep} as examples. Then, in Chapter \ref{se:agent-design}, we discuss conceptual elements arising in the design of practical agents that can operate effectively in complex environments, with particular emphasis on informational considerations.  By interpreting the DQN and ensemble-DQN agents through this lens, we illustrate abstract concepts and highlight sources of inefficiency.  In Chapter \ref{se:cost-benefit}, we study a regret bound that applies to all agents and provides insight into design trade-offs.  We also illustrate insights offered by the bound when used to study particular classes of environments and agents.  As discussed in Chapters \ref{se:retaining-information} and \ref{se:seeking-information}, this bound can be used to think about how to design agents that seek and retain the right information.  In Chapter \ref{se:computation}, we present scalable agent designs.  Computational results reported in Chapter \ref{se:computation} serve to illustrate concepts covered in the tutorial and demonstrate their practical applicability.

\chapter{Environments and Agents}
\label{se:agents}

A reinforcement learning agent interacts with an environment $\environment$ through an interface of the sort illustrated in Figure \ref{fig:RLInterface}.  At each time $t$, the agent executes an action $A_t$, and the environment produces an observation $O_{t+1}$ in response.   The following coin tossing interface serves as an example.
\begin{example}
\label{ex:coin-tossing}
{\bf (coin tossing)} Consider an environment with $M$ possibly biased coins, with probabilities $p_1,\ldots,p_M$ of landing heads.  Each action $A_t \in \{1, \ldots, M\}$ selects and tosses a coin, and the resulting observation $O_{t+1} \in \{0,1\}$ indicates a heads or tails outcome, encoded as $0$ or $1$, respectively.
\end{example}

This coin tossing environment is particularly simple.  There are two possible observations and actions impose no delayed consequences.  In particular, the next observation depends only on the current action, regardless of previous actions.  Dialogue systems offer a context in which delayed consequences play a central role, calling for more sophisticated agent design.

\begin{figure}[htb]
\begin{center}
\includegraphics[scale=0.35]{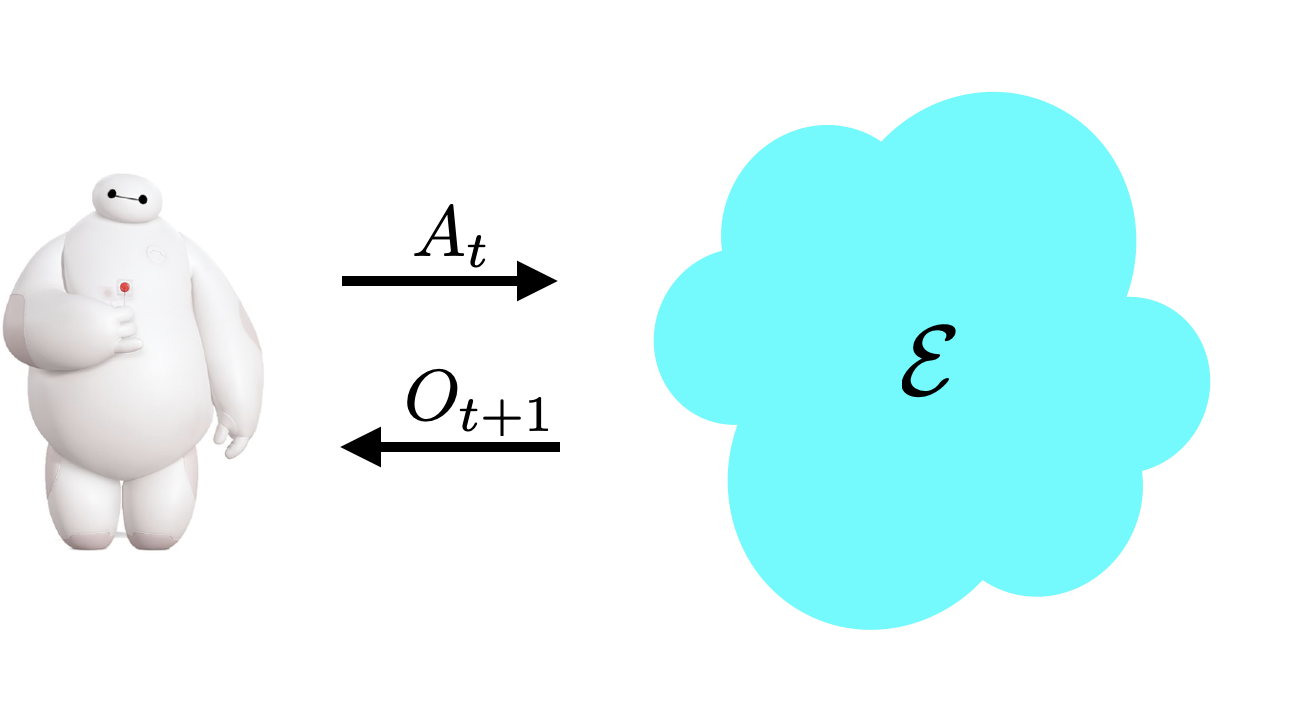}
\caption{The agent-environment interface.}
\label{fig:RLInterface}
\end{center}
\end{figure}

\begin{example}
{\bf (dialogue)}
\label{ex:dialogue}
Consider an agent that engages in sequential correspondence, with information conveyed via text messages exchanged with another party.  The agent first transmits a message $A_0$ and receives a response $O_1$.  Such exchanges continue.  For example, the first message $A_0$ could be ``How can I help you?'', with a response $O_1$ of ``How does one replace a light bulb?''.  Subsequent messages transmitted by the agent could seek clarification regarding the task at hand and guide the process.
\end{example}

Such an agent is designed to achieve goals while interacting with an environment.  In this section, we introduce a framework for modeling such interactions and framing goals.  We also describe a prototypical example of an agent.

\section{Agent-Environment Interface}

We consider a mathematical formulation in which the agent and environment interface through finite\footnote{The formulation and our results can be extended to accommodate infinite action and observation sets (under suitable measure-theoretic conditions), but the mathematics required would complicate our exposition.} action and observation sets $\actions$ and $\observations$.  Interactions make up a history $H_t = (A_0, O_1, A_1, O_2, \ldots, A_{t-1}, O_t)$.  The agent selects action $A_t$ after experiencing $H_t$.  Let $\histories$ denote the set of possible histories of any duration.

From the designer's perspective, an environment can be characterized by a tuple $\environment = (\actions, \observations, \rho)$, identified by a finite set of actions $\actions$, a finite set of observations $\observations$, and a function $\rho$ that for each history $h \in \histories$, action $a \in \actions$, and observation $o \in \observations$, prescribes an observation probability $\rho(o|h,a)$.  Though the internal workings of an environment can be arbitrarily complex, since the agent only interfaces through actions and observations, the designer can think of the environment as simply sampling observation $O_{t+1}$ from $\rho(\cdot|H_t, A_t)$.

From the environment's perspective, an agent samples each action $A_t$ from a probability mass function $\pi(\cdot|H_t)$ that depends on the history $H_t$.  We refer to such a function $\pi$ as a {\it policy}.  We will often use $\pi$ as a dummy variable and denote by $\pi_{\rm agent}$ the specific policy executed by the agent.  However, occasionally, when it is clear from context that we are referring to $\pi_{\rm agent}$, we will drop the subscript and simply use $\pi$ to refer to the agent's policy.

With the coin tossing environment of Example \ref{ex:coin-tossing}, $\actions = \{1,\ldots,M\}$, $\observations = \{0,1\}$, and $\rho(1 | H_t, A_t) = p_{A_t} = 1 - \rho(0 | H_t, A_t)$.  The fact that $\rho(\cdot | H_t, A_t)$ does not depend on $H_t$ indicates the absence of delayed consequences.  Note that $\rho$ identifies the coin biases.  As we will discuss in the next section, we will typically consider the designer to be uncertain about $\rho$ and thus the environment $\environment$.  In the coin tossing context, this motivates designing the agent to learn about coin biases through interactions and to leverage what is learned to improve performance over time.

A dialogue of the kind presented in Example \ref{ex:dialogue} can also be framed in these terms.  With a constrained number of tokens per text message, the set $\actions = \observations$ of possible text messages is finite.  The function $\rho$ assigns probabilities $\rho(\cdot|H_t,A_t)$ conditioned on the history of past messages, representing the manner in which previous exchanges influence what may come next.  For example, if $A_t$ is ``what is the wattage,'' then $\rho(o|H_t,A_t)$ ought to be relatively large for messages $o$ that communicate common wattage ratings.

It is worth noting that our formulation of agent-environment interactions is very general, involving a single stream of experience, without restrictive assumptions commonly made in the literature such as episodicity or that observations are of environment state.  While this formulation bears close resemblance to those studied by \citet{mccallum1995instance,hutter2007universal,daswani2013q,daswani2014feature}, such formulations have not been a focus of work on provably efficient reinforcement learning.   Our work extends regret analysis tools to this setting.

\section{Probabilistic Framework and Notation}

The reinforcement learning literature addresses uncertainty about observations and actions using the tools of probability theory.  However, traditional frameworks of reinforcement learning do not extend this use of probability to model uncertainty about the environment.  We will work with a more general framework that supports coherent reasoning about such uncertainty and how it is shaped by actions and observations.  In this section, we introduce this probabilistic framework and associated notation.  

We build on the foundations of probability, based on the Kolmogorov axioms, defining all random quantities with respect to a probability space $(\Omega, \F, \Pr)$.   Statements and arguments we present have precise meaning within the framework.  However, we often leave out measure-theoretic formalities for the sake of readability.  A mathematically-oriented reader ought to be able fill in these gaps.

The probability of an event $F \in \F$ is denoted by $\Pr(F)$.  For any events $F, G \in \F$ with $\Pr(G) > 0$, the probability of $F$ conditioned on $G$ is denoted by $\Pr(F | G)$.  When $Z$ takes values in $\Re^K$ and has a density $p_Z$ with respect to the Lebesgue measure, though $\Pr(Z=z)=0$ for all $z$, conditional probabilities $\Pr(F| Z=z)$ are well-defined and denoted by $\Pr(F | Z = z)$.  Consider a function $f(z) = \Pr(F | Z=z)$.  Given another random variable $Y$ with the same range as $Z$, we use the assignment symbol $\leftarrow$ to denote $f(Y)$ by $\Pr(F | Z \leftarrow Y)$.  Note that, in general, $\Pr(F | Z \leftarrow Y)$ differs from $\Pr(F | Z = Y)$.  The latter conditions on the event that $Z=Y$, while the former represents a change of measure for the variable $Z$.  Note that $\Pr(F | Z) = \Pr(F | Z \leftarrow Z)$.

For each possible realization $z$, the probability $\Pr(Z=z)$ that $Z = z$ is a function of $z$.  We denote the value of this function evaluated at $Z$ by $\Pr(Z)$.  Note that $\Pr(Z)$ is itself a random variable because it depends on $Z$.  For random variables $Y$ and $Z$ and possible realizations $y$ and $z$, the probability $\Pr(Y=y|Z=z)$ that $Y=y$ conditioned on $Z=z$ is a function of $(y, z)$.  Evaluating this function at $(Y,Z)$ yields a random variable, which we denote by $\Pr(Y|Z)$.

Particular random variables appear routinely throughout the paper.  One is the environment $\environment = (\actions, \observations, \rho)$.  While $\actions$ and $\observations$ are deterministic sets that define the agent-environment interface, the observation probability function $\rho$ is a random variable.  This randomness reflects the agent designer's epistemic uncertainty about the environment.  The probability measure $\Pr(\environment \in \cdot)$ can be thought of as assigning {\it prior probabilities} to sets of possible environments.  We often consider probabilities $\Pr(F|\environment)$ of events $F$ conditioned on the environment $\environment$.

For each policy $\pi$, random variables $A_0^\pi, O_1^\pi, A_1^\pi, O_2^\pi, \ldots$ denote the sequence of interactions generated by selecting actions according to $\pi$.  In particular, with $H_t^\pi = (A_0^\pi, O_1^\pi, \ldots, O_t^\pi)$ denoting the history of interactions through time $t$, we have $\Pr(A^\pi_t|H^\pi_t) = \pi(A^\pi_t|H^\pi_t)$ and $\Pr(O^\pi_{t+1}|H^\pi_t, A^\pi_t, \environment) = \rho(O^\pi_{t+1}|H^\pi_t,A^\pi_t)$ almost surely.  As shorthand, we generally suppress the superscript $\pi$ and instead indicate the policy through a subscript of $\Pr$.  For example,
$$\Pr_\pi(A_t|H_t) =  \Pr(A^\pi_t|H^\pi_t) = \pi(A^\pi_t|H^\pi_t),$$
and
$$\Pr_\pi(O_{t+1}|H_t, A_t, \environment) =  \Pr(O^\pi_{t+1}|H^\pi_t, A^\pi_t, \environment) = \rho(O^\pi_{t+1}|H^\pi_t, A^\pi_t).$$

When expressing expectations, we use the same subscripting notation as with probabilities.  For example, consider a reward function $r$ that maps history, action, and observation to a scalar reward $R^\pi_{t+1} = r(H^\pi_t, A^\pi_t, O^\pi_{t+1})$.  The expected reward is written as $\E[R^\pi_{t+1}] = \E_\pi[R_{t+1}]$, and the expected reward conditioned on the history $H_t$ and action $A_t$ is $\E[R^\pi_{t+1}|H^\pi_t, A^\pi_t] = \E_\pi[R_{t+1}|H_t, A_t]$.

\section{Rewards}

We consider the design of an agent to produce desirable outcomes.  The agent's preferences can be represented by a function $r:\histories \times\actions\times\observations\mapsto \Re$ that maps histories to rewards, incentivizing preferred outcomes.  After executing action $A_t$, the agent observes $O_{t+1}$ and enjoys reward 
$$R_{t+1} = r(H_t, A_t, O_{t+1}).$$
These rewards accumulate to produce, for any horizon $T$, a return $\sum_{t=0}^{T-1} R_{t+1}$.  Note that, unlike the treatment of \citet{sutton2018reinforcement}, we take reward to be a function of actions and observations rather than a designated signal provided by the environment at time $t+1$.  This does not rule out the possibility that a designated reward signal makes up part of the observation $O_{t+1}$ and that the reward function simply takes that to be $R_{t+1}$.

While the reward function expresses preferences for interactions over a single timestep, the agent's preferences may depend on its entire experience.  A natural way of assessing the desirability of a policy is through its {\it value} -- that is, the expected {\it return} -- over a long time horizon $T$:
$$\overline{V}_\pi = \E_\pi\left[\sum_{t=0}^{T-1} R_{t+1} \Big| \environment \right].$$
It is also useful to define notation for the optimal value: $\overline{V}_* = \sup_\pi \overline{V}_\pi$.

\section{Prototypical Examples} \label{se:dqn}

As we introduce abstract concepts to guide the design of data-efficient agents, it will be useful to anchor discussions around interpretation of simple examples.  For this purpose, we will consider simplified versions of the deep Q-network (DQN) agent \citep{mnih-atari-2013,mnih2015human} and the ensemble-DQN agent \citep{osband2016deep,osband2019deep}.  We will revisit these prototypical examples in later sections to illustrate abstract concepts.

The original DQN agent \citep{mnih-atari-2013,mnih2015human} was designed to interface with environments through an arcade game console.  In that context, each action $A_t$ represents a combination of joystick position and activation, and each observation $O_{t+1}$ is an image captured from the video display.  While the DQN agent of \citet{mnih-atari-2013,mnih2015human} was designed for episodic games, which occasionally end and restart, let us not assume any notion of termination and instead consider a perpetual stream of experience.  This could be generated through interaction with a never-ending game or through repeatedly playing a terminating game, continuing with a restart each time an episode ends.

At each time $t$, our simplified version of DQN maintains an action value function $Q_{\theta_t}$ using a neural network with weights $\theta_t$.  For each action $a \in \actions$, this function maps some number $M$ of recent observations, say $S_t = (O_{t-M+1}, \ldots, O_t)$, to a scalar value $Q_{\theta_t}(S_t, a)$.  While DQN can be applied with any reward function, to illustrate one possibility, the reward $R_{t+1}$ could indicate the change in game score as reflected by the score displayed in $O_{t+1}$ versus $O_t$, so long as $O_{t+1}$ does not indicate termination and restart.  Actions are selected via an $\epsilon$-greedy policy, meaning that the next action $A_t$ is sampled uniformly from $\mathcal{A}$, with probability $\epsilon$, and otherwise from among actions that maximize $Q_{\theta_t}(S_t, \cdot)$.  The value $\epsilon$ represents probability that the agent executes a random exploratory action, while $1-\epsilon$ is the probability with which the agent selects an action that maximizes its current value estimate.

Between observing $O_t$ and selecting $A_t$, the agent adjusts neural network weights, transitioning them from $\theta_{t-1}$ to $\theta_t$, via a training algorithm, details of which we will not cover here.  This training makes use of data cached in a {\it replay buffer}.  We consider a simple version, in which the replay buffer $B_t = (S_{t-N}, A_{t-N}, R_{t-N+1}, \ldots, S_{t-1}, A_{t-1}, R_t, S_t)$, for some fixed $N \gg M$, is made up of most recent neural network inputs, actions, and rewards.

Our simplified ensemble-DQN agent is similar to the based DQN agent we have described, except that it maintains an ensemble $Q_{\theta_{t, 1}}$, $\dots$, $Q_{\theta_{t, K}}$ of $K$ action value functions rather than a single point estimate. Each network in the ensemble is trained separately, using the same data but randomly perturbed to diversify the ensemble. Together, the ensemble represents the range of statistically plausible estimates of the optimal action value function. The agent selects actions in a manner inspired by Thompson sampling \citep{DBLP:conf_icml_Strens00,NIPS2013_6a5889bb}. Every $\tau$ timesteps, the agent samples an index $k_t$ uniformly from $\{1, \dots, K\}$ for use over the subsequent $\tau$ timesteps, that is, $k_t = k_{t+1} = \cdots = k_{t + \tau - 1}$. At timestep $t$, the ensemble-DQN agent selects an action that is greedy with respect to the $k_t^{\rm th}$ action value function in the ensemble, that is, $A_t \in \arg\max_{a\in\actions} Q_{\theta_{t, k_t}}(S_t, a)$.

\chapter{Elements of Agent Design}
\label{se:agent-design}

An agent is designed with respect to an environment interface, a reward function, uncertainty about the environment, and computational constraints.  In this chapter, we discuss these design considerations, with an emphasis on the role of information in balancing reward, uncertainty, and computation.  

We will introduce a notion of {\it agent state}, which represents data maintained by the agent in order to select actions.  In particular, the agent state evolves according to
$$X_{t+1} = f_{\rm agent}(X_t, A_t, O_{t+1}, U_{t+1}),$$
where $f_{\rm agent}$ is an agent state update function.  The update can be randomized, with $U_{t+1}$ representing an independent random draw sampled by the agent to allow for that.  This update function represents an important element of the agent design.  The agent's actions depend on the history $H_t$ only through the agent state $X_t$.  This allows the agent to operate within limits of memory rather than store and repeatedly process an ever growing history $H_t$.

An agent can be designed around any choice of agent state update function $f_{\rm agent}$.  In order to structure our thinking about the agent state, we will focus on representations comprising a triple
\[ X_t = \Big(\text{algorithmic state } Z_t, \text{situational state } S_t, \text{epistemic state } P_t\Big), \]
where
\begin{enumerate}
\item The {\bf algorithmic state} $Z_t$ is data cached by the agent that is not intended to represent information about the environment or past observations, but which the agent intends to use in its subsequent computations.
\item The {\bf situational state} $S_t$ represents the agent's summary of its current situation in the environment.
\item The {\bf epistemic state} $P_t$ represents the agent's current knowledge about the environment.
\end{enumerate}
The epistemic state encodes information about the environment extracted from history.  Because the agent must work with limited memory and per timestep computation, when interacting with a complex environment, it typically cannot seek or retain all relevant information.  Rather, the agent must prioritize, and we introduce two constructs that characterize this prioritization:
\begin{enumerate}
\setcounter{enumi}{3}
\item The {\bf environment proxy} $\proxy$ prioritizes information retained by the epistemic state.
\item The {\bf learning target} $\target$ prioritizes information sought by the agent for inclusion in epistemic state.
\end{enumerate}
In this section, we will elaborate on the nature and role of these five constructs, illustrating them by viewing the simplified DQN and ensemble-DQN agents described in Section \ref{se:dqn} through this lens.  Since the three components of agent state address three sources of uncertainty, we begin with a discussion of these sources.

\section{Sources of Uncertainty}

The agent should be designed to operate effectively in the face of uncertainty.  It is useful to distinguish three potential sources of uncertainty:
\begin{itemize}
\item {\bf Algorithmic uncertainty} may be introduced through computations carried out by the agent.  For example, the agent could apply a randomized algorithm to update parameters or select actions in a manner that depends on internally generated random numbers.
\item {\bf Aleatoric uncertainty} is associated with unpredictability of observations that persists even when $\rho$ is known.  In particular, given a history $h$ and action $a$, while  $\rho(\cdot|h,a)$ assigns probabilities to possible immediate observations, the realization is randomly drawn.
\item {\bf Epistemic uncertainty} is due to not knowing the environment -- this amounts to uncertainty about the observation probability function $\rho$, since the action and observation sets are inherent to the agent design.
\end{itemize}


To illustrate, consider design of an agent that interfaces with arcade games, as described in Section \ref{se:dqn}.  In that context, the agent can be applied to any arcade game that shares the prescribed interface.  Epistemic uncertainty arises from the fact that the designer does not know in advance the dynamics of the particular arcade game to which the agent will be applied.  Aleatoric uncertainty, on the other hand, is associated with random outcomes produced by the arcade game.  If the game were Tetris, for example, a source of aleatoric uncertainty is the random shape of each new tetromino.  Finally, algorithmic uncertainty arises in the application of a DQN agent, for example, through randomized selection of exploratory actions, and an ensemble-DQN agent through random draws of ensemble members to guide action selection.

In the parlance of \citet{anscombe1963definition}, aleatoric uncertainty is due to ``roullete lotteries,'' and can be thought of in terms of physical phenomena that generate outcomes for which probabilities can be determined empirically.  Epistemic uncertainty, on the other hand, is due to ``horse lotteries,'' and reflects the designer's beliefs.
Probabilities quantifying such uncertainties, together with Bayes' rule, represent a coherent logic for forming beliefs and making decisions.

\section{Agent State}

The agent state $X_t = (Z_t, S_t, P_t)$ encodes all data that the agent can use to select action $A_t$.  In particular, the agent's behavior can be expressed in terms of an {\bf agent policy} $\pi_{\rm agent}$, which selects each action $A_t$ according to probabilities $\pi_{\rm agent}(\cdot|X_t)$, so that
$$\Pr(A_t| X_t) = \pi_{\rm agent}(A_t | X_t).$$ 
Note that our original definition of {\it policy} indicated dependence on history.  However, with some abuse of notation, when a policy $\pi$ depends on history only through another statistic $\Psi_t$ we write, $\pi(A_t | \Psi_t) \equiv \pi(A_t | H_t)$.  Our notation $\pi_{\rm agent}(A_t|X_t)$ is an example of this usage.

\subsection{Algorithmic State}

The agent executes an algorithm in order to select actions.  The algorithm can be randomized, in which case a random draw $U_{t+1}$ is used in the computations carried out between observing $O_{t+1}$ and selecting $A_{t+1}$.  Some algorithms maintain an internal notion of state that is not intended to encode information about the environment or past observations, but rather, represent a combination of past computations and random draws.  We represent dynamics of the algorithm state using an {\it algorithmic state update function} $f_{\rm algo}$, with
\begin{equation}
\label{eq:algorithmic-state-dynamics}
Z_{t+1} = f_{\rm algo}(X_t, A_t, O_{t+1}, U_{t+1}).
\end{equation}

In the ensemble-DQN example from Section \ref{se:dqn}, we can view the algorithmic state as involving the latest random draw of a member of the ensemble. The algorithmic state $Z_{t+1}$ is updated periodically by sampling uniformly from the set of possible ensemble indices $1, \dots, K$, and $Z_{t+1} = Z_t$ between updates.

As another example of algorithmic state, let us imagine a modification of the DQN agent from Section \ref{se:dqn}.  Suppose that the agent wants to ``mix things up'' by increasing the probability of sampling exploratory actions that have not been selected recently.  In this case, the algorithm might maintain an algorithmic state in $\Re^{\actions}$, initialized with $Z_0 = \1$ and updated according to $Z_{t+1} = 0.9 Z_t + \1_{A_t}$, where $\1_{A_t}$ is a one-hot vector on $A_t$.  Given this algorithmic state, the agent could, as before, select an exploratory action with probability $\epsilon$, but now sample $A_t$ according to probabilities inversely proportional to $Z_{t, A_t}$, the $A_t^{\mathrm{th}}$ component of $Z_t$.  A large value of $Z_{t, a}$ indicates that the action $a$ has recently been executed, making it less likely to be sampled as the next exploratory action.


\subsection{Situational state}

If the environment $\environment = (\actions, \observations, \rho)$ is known, only algorithmic and aleatoric uncertainty remain, and the agent can select actions that depend on $\environment$ and $H_t$.  One might think of this in terms of a two-step process: identify a policy $\pi$ that depends on $\environment$ and generates desirable expected return $\E_\pi[\sum_{t=0}^{T-1} R_{t+1} | \environment]$ and then execute actions $A_t$ by sampling from $\pi(\cdot|H_t)$.  However, general dependence of rewards $R_{t+1} = r(H_t, A_t, O_{t+1})$ and action probabilities $\pi(A_t|H_t)$ on the history $H_t$ is problematic.  Even if compressed, its memory requirements grow unbounded, as does per-timestep computation, if the agent accesses the entire history to assess rewards or select actions.  This necessitates use of a bounded summary that can be updated incrementally, which we refer to as the {\it situational state}.

The design of any practical agent that can operate in a known complex environment over a long duration entails specification of situational state dynamics.
We denote by $\states$ the set of possible situational states.  Initialized with a distinguished element $S_0 \in \states$, the situational state is incrementally updated in response to actions, observations, and possibly, algorithmic randomness.  We express the dynamics using an {\it situational state update function} $f_{\rm situ}$, with state evolving according to
\begin{equation}
\label{eq:aleatoric-state-dynamics}
S_{t+1} = f_{\rm situ}(X_t, A_t, O_{t+1}, U_{t+1}).
\end{equation}
Recall that $U_{t+1}$ represents a random draw that that allows for algorithmic randomization.

In the event that the environment is known, the situational state serves as a summary of history for the purposes of reward assessment and action selection.  With this understanding, from here on we will treat rewards as functions of situational state instead of history, denoting realized reward by $R_{t+1} = r(S_t, A_t, O_{t+1})$.  Similarly, we will often consider policies that select actions based on situational state instead of history, in which case we denote action probabilities by $\pi(A_t|S_t)$.  It is important to note that such restriction can prevent the agent from achieving levels of performance that are possible with unbounded memory and computation.

\begin{figure}[htb]
\begin{center}
\includegraphics[scale=0.3]{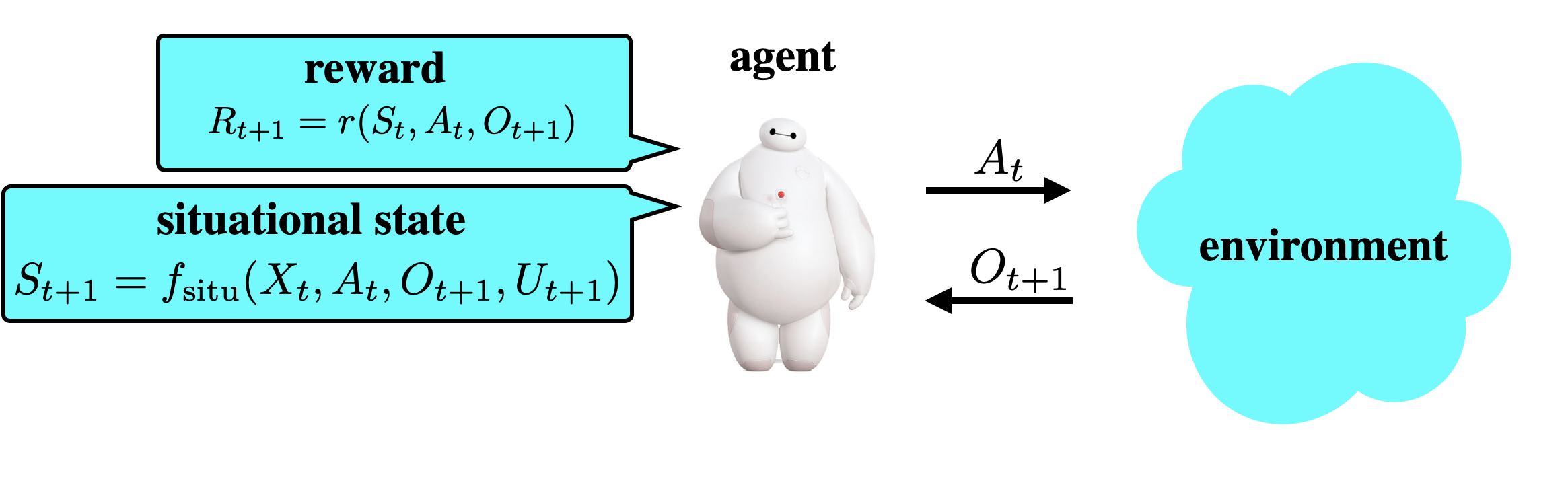}
\caption{The agent maintains a situational state and uses that to compute rewards.}
\label{fig:AleatoricState}
\end{center}
\end{figure}

For example, the situational state of the DQN and ensemble-DQN agents described in Section \ref{se:dqn} is made up of the $M$ most recent observations, $S_t = (O_{t-M+1}, \ldots, O_t)$.  While history grows unbounded, this situational state is bounded and can be updated incrementally by appending the most recent observation and ejecting the least recent one. Further, reward can be assessed from $(S_t, A_t, O_{t+1})$, and given weights $\theta_t$ for the action value function $Q_{\theta_t}$, the agent's action depends on history only through the situational state. As such, per-timestep computation does not grow with time.

\subsection{Epistemic State}

As the agent learns, it needs to represent the knowledge it retains about the unknown environment $\environment = (\actions, \observations, \rho)$.  In combination with the prior, the history $H_t$ serves as a comprehensive representation of this knowledge.  However, a practical agent must operate with bounded memory and per-timestep computation, and thus, cannot store and access an ever growing history.  Rather, the agent must use a bounded knowledge representation that can be updated incrementally, which we refer to as the agent's {\it epistemic state}. The epistemic state evolves as the agent interacts with the environment, at each time encoding knowledge retained by the agent.  The epistemic state is updated after each observation according to an {\it epistemic state update function}
\begin{equation}
\label{eq:epistemic-state-dynamics}
P_{t+1} = f_{\rm epis}(X_t, A_t, O_{t+1}, U_{t+1}).
\end{equation}

In the DQN example from Section \ref{se:dqn}, the weights $\theta_t$ of the action value function together with the replay buffer $B_t$ make up the epistemic state $P_t = (\theta_t, B_t)$.  When observations are registered, new information is incorporated by updating the epistemic state.  For example, the weights of the action value function may be revised by sampling a minibatch from the replay buffer and applying an associated temporal-difference gradient update, as done in \citet{mnih-atari-2013,mnih2015human}, and the replay buffer may be updated by ejecting the least recent and adding the most recent data. 
Similarly, in the ensemble-DQN example from Section \ref{se:dqn}, the epistemic state can be thought of as comprising of an ensemble of network weights together with the replay buffer, and the network weights maybe updated using techniques similar to temporal-difference learning \citep{osband2016deep,osband2019deep}, though randomly perturbed to diversify the ensemble.

Technically, the epistemic state can be any bounded object that is updated incrementally.  Some other examples include estimates of generative models, policies, general value functions, and belief distributions over aforementioned objects.

\section{Information}

We will quantify uncertainty, or equivalently, information, using the tools of information theory \citep{cover1991information}.  In this section, we define information-theoretic concepts and notation that we will use through the remainder of our exposition.

\subsection{Entropy}

A central concept is the entropy $\H(X)$, which quantifies uncertainty about a random variable $X$.  If $X$ takes on values in a countable set $\mathcal{X}$, this is defined by
$$\H(X) = - \sum_{x \in \mathcal{X}} \Pr(X = x) \log \Pr(X=x) = - \E[\log \Pr(X)],$$
with a convention that $0 \log 0 = 0$.  Entropy can alternatively be interpreted as the expected number of bits required to identify $X$, or the information content of $X$.  The realized conditional entropy $\H(X | Y = y)$ quantifies uncertainty remaining after observing $Y=y$.  If $Y$ takes on values in a countable set $\mathcal{Y}$ and $\Pr(Y=y)>0$,
\begin{align*}
\H(X | Y = y) &= - \sum_{x \in \mathcal{X}} \Pr(X = x | Y = y) \log \Pr(X=x | Y = y) \\
&= - \E[\log \Pr(X | Y) | Y = y].
\end{align*}
This can be viewed as a function $f(y)$ of $y$, and we write the random variable $f(Y)$ as $\H(X|Y \leftarrow Y)$.  The conditional entropy $\H(X | Y)$ is its expectation
$$\H(X | Y) = \sum_{y \in \mathcal{Y}} \Pr(Y = y) \H(X|Y=y) = \E[\H(X|Y \leftarrow Y)].$$
Note that $\H(X|X) = \H(X|X \leftarrow X) = 0$, since after observing $X$, all uncertainty about $X$ is resolved.  

\subsection{Mutual Information}

Mutual information quantifies information common to random variables $X$ and $Y$.  If $X$ and $Y$ take on values in countable sets then their mutual information is defined by
$$\I(X; Y) = \H(X) - \H(X|Y) = \H(Y) - \H(Y|X).$$
Note that $\H(X) = \I(X; X)$, since the information $X$ has in common with itself is exactly its information content.  Further, mutual information is always nonnegative, and if $X \perp Y$ then $\I(X; Y) = 0$.  If $Z$ is a random variable taking on values in a countable set $\mathcal{Z}$ and $\Pr(Z=z) > 0$, then the realized conditional mutual information $\I(X; Y| Z=z)$ quantifies remaining common information after observing $Z$, defined by
$$\I(X; Y| Z=z) = \H(X|Z=z) - \H(X|Y, Z=z).$$
The conditional mutual information $\I(X; Y | Z)$ is its expectation
$$\I(X; Y | Z) = \sum_{z \in \mathcal{Z}} \Pr(Z=z) \I(X; Y| Z=z) = \E[\I(X; Y | Z \leftarrow Z)].$$

\subsection{Continuous Variables}

We have defined entropy and mutual information, as well as their conditional counterparts, for discrete random variables.  The set of possible environments can be continuous, and to accommodate that, we generalize these definitions.

For random variables $X$ and $Y$ taking on values in (possibly uncountable) sets $\mathcal{X}$ and $\mathcal{Y}$, mutual information is defined by
$$\I(X; Y) = \sup_{f \in \mathcal{F}_{\text{finite}}, g \in \mathcal{G}_{\text{finite}}} \I(f(X); g(Y)),$$
where $\mathcal{F}_{\text{finite}}$ and $\mathcal{G}_{\text{finite}}$ are the sets of functions mapping $\mathcal{X}$ and $\mathcal{Y}$ to finite ranges.  Specializing to the case where $\mathcal{X}$ and $\mathcal{Y}$ are countable recovers the previous definition.  The generalized notion of entropy is then given by $\H(X) = \I(X; X)$.  Conditional counterparts to mutual information and entropy can be defined in a manner similar to the countable case.

It is worth noting that, when the set of possible environments is continuous, entropy is typically infinite -- that is, $\H(\environment) = \infty$.  However, as we will discuss in Section \ref{se:target-proxy}, an agent can restrict attention to learning a target $\target$ for which only a finite number $\I(\target; \environment)$ of bits must be acquired from the environment.

\subsection{Chain Rules and the Data-Processing Inequality}

The chain rule of entropy decomposes the entropy of a vector-valued random variable into component-wise conditional entropies, according to
$$\H(X_1, \ldots, X_N) = \H(X_1) + \H(X_2|X_1) + \cdots + \H(X_N | X_1,\ldots, X_{N-1}).$$
Mutual information also obeys a chain rule, which takes the form
$$\I(X; Y_1, \ldots, Y_N) = \I(X; Y_1) + \I(X; Y_2|Y_1) + \cdots + \I(X; Y_N | Y_1,\ldots, Y_{N-1}).$$

The data-processing inequality asserts that if $X$ and $Z$ are conditionally independent given $Y$, then $\I(X; Y) \geq \I(X; Z)$. As a special case, if $Z$ is a function of $Y$, then, by the data-processing inequality, $\I(X; Y) \geq \I(X; Z)$.  This relation is intuitive: $Z$ cannot provide more information about $X$ than does $Y$.  If $Z$ also determines $Y$, then both provide the same information, and $\I(X; Y) = \I(X; Z)$.

\subsection{KL-Divergence}

For any pair of probability measures $P$ and $P'$ defined with respect to the same $\sigma$-algebra, we denote KL-divergence by
$$\KL(P \| P') = \begin{cases}
\int P(dx) \log\frac{dP}{dP'}(x) & \text{ if} P \ll P', \\
\infty & \text{ otherwise.}
\end{cases}$$
Gibbs' inequality asserts that $\KL(P\|P') \geq 0$, with equality if and only if $P = P'$ almost everywhere.

Mutual information and KL-divergence are intimately related.  For any probability measure $P(\cdot) = \Pr((X,Y) \in \cdot)$ over a product space $\mathcal{X} \times \mathcal{Y}$ and probability measure $P'$ generated via a product of marginals $P'(dx \times dy) = P(dx) P(dy)$, mutual information can be written in terms of KL-divergence:
\begin{equation}
\label{eq:mutual-information-marginal-distribution}
\I(X; Y) = \KL(P \| P').
\end{equation}
Further, for any random variables $X$ and $Y$,
\begin{equation}
\label{eq:mutual-information-conditional-distribution}
\I(X; Y) = \E[\KL(\Pr(Y \in \cdot|X) \| \Pr(Y \in \cdot))].
\end{equation}
In other words, the mutual information between $X$ and $Y$ is the expected KL-divergence between the distribution of $Y$ with and without conditioning on $X$.  

Pinsker's inequality provides a lower bound on KL-divergence:
\begin{equation}
\label{eq:pinskers-inequality}
\sup_F |P(F) - P'(F)| \leq \sqrt{\frac{\ln 2}{2} \KL(P\|P')}.
\end{equation}
An immediate implication is that, if $P$ and $P'$ share support $[0,1]$, 
\begin{equation}
\int x P(dx) - \int x P'(dx) \leq \sqrt{\frac{\ln 2}{2} \KL(P\|P')}.
\end{equation}

\section{Learning and Prioritization}
\label{se:target-proxy}

We refer to uncertainty about the environment as {\it epistemic}.  Learning is the process of resolving epistemic uncertainty.  In this section, we present an approach to quantifying epistemic uncertainty and its reduction.  We then explain the need for prioritization of information that the agent ought to retain and information the agent ought to seek.  Environment proxies and learning targets are introduced as mechanisms for expressing prioritization.

\subsection{Quantifying Epistemic Uncertainty}

To model epistemic uncertainty, we treat the environment $\environment$ as a random variable.  The entropy $\H(\environment)$ of the environment quantifies the degree of epistemic uncertainty.  This represents the expected number of bits required to identify $\environment$.  If an agent digests all relevant information presented in its history, its remaining uncertainty at time $t$ is expressed by the conditional entropy $\H(\environment | H_t)$.  This is the expected number of bits still to be learned.  Conditioned on the specific realization $H_t$, the number becomes $\H(\environment | H_t \leftarrow H_t)$.

Learning reduces epistemic uncertainty.  The mutual information $\I(\environment; H_t)$ quantifies the extent to which observing the history $H_t$ is expected to reduce uncertainty. In the case of finite $\H(\environment)$ , the mutual information $\I(\environment; H_t) = \H(\environment) - \H(\environment | H_t)$. Note that this is the difference between the number $\H(\environment)$ of bits required to identify the environment and the expected number remaining at time $t$.

\subsection{The Curse of Knowledge}

The expected number of bits $\H(\environment)$ required to identify a complex environment is typically infinite or exceedingly large.  Consider, for example, the DQN agent described in Section \ref{se:dqn}.  If the designer knows in advance that the agent will interface with one of $K$ known arcade games, then $\environment$ is a random variable that takes on $K$ possible values and $\H(\environment) \leq \log K$.  However, the agent may alternatively engage with a complex range of environments. For example, an agent may control a robot that operates in any physical context. Then, the number of possible variations, and thus $\H(\environment)$, becomes intractable.  As $t$ grows, the same tends to be true for the expected number $\I(\environment; H_t)$ of these bits revealed by history.

Consider an epistemic state that encodes in $P_{t+1}$ all information revealed by $(P_t, S_t, A_t, O_{t+1})$ that is relevant to identifying the environment.  The expected number of bits incorporated is given by the mutual information $\I(\environment; S_t, A_t, O_{t+1} | P_t)$.  If the agent does this at every time step, $P_t$ retains all environment-relevant information presented by the history $H_t$, which means that $\I(\environment; P_t) = \I(\environment; H_t)$ for all $t$.  As such, the expected number of bits $\H(P_t) \geq \I(\environment; P_t) = \I(\environment; H_t)$ encoded in $P_t$ can grow too large to be retained by a bounded agent.

\subsection{Environment Proxies}

Since a bounded agent cannot generally retain all environment-relevant information, it must prioritize.  The choice of information retained by an agent is a critical design decision.  One possibility is to design the epistemic state to prioritize knowledge about an {\it environment proxy} $\proxy$.  By allowing the epistemic state to discard bits that are not proxy-relevant, the agent can dramatically reduce memory and computational requirements.

\begin{figure}[htb]
\begin{center}
\includegraphics[scale=0.25]{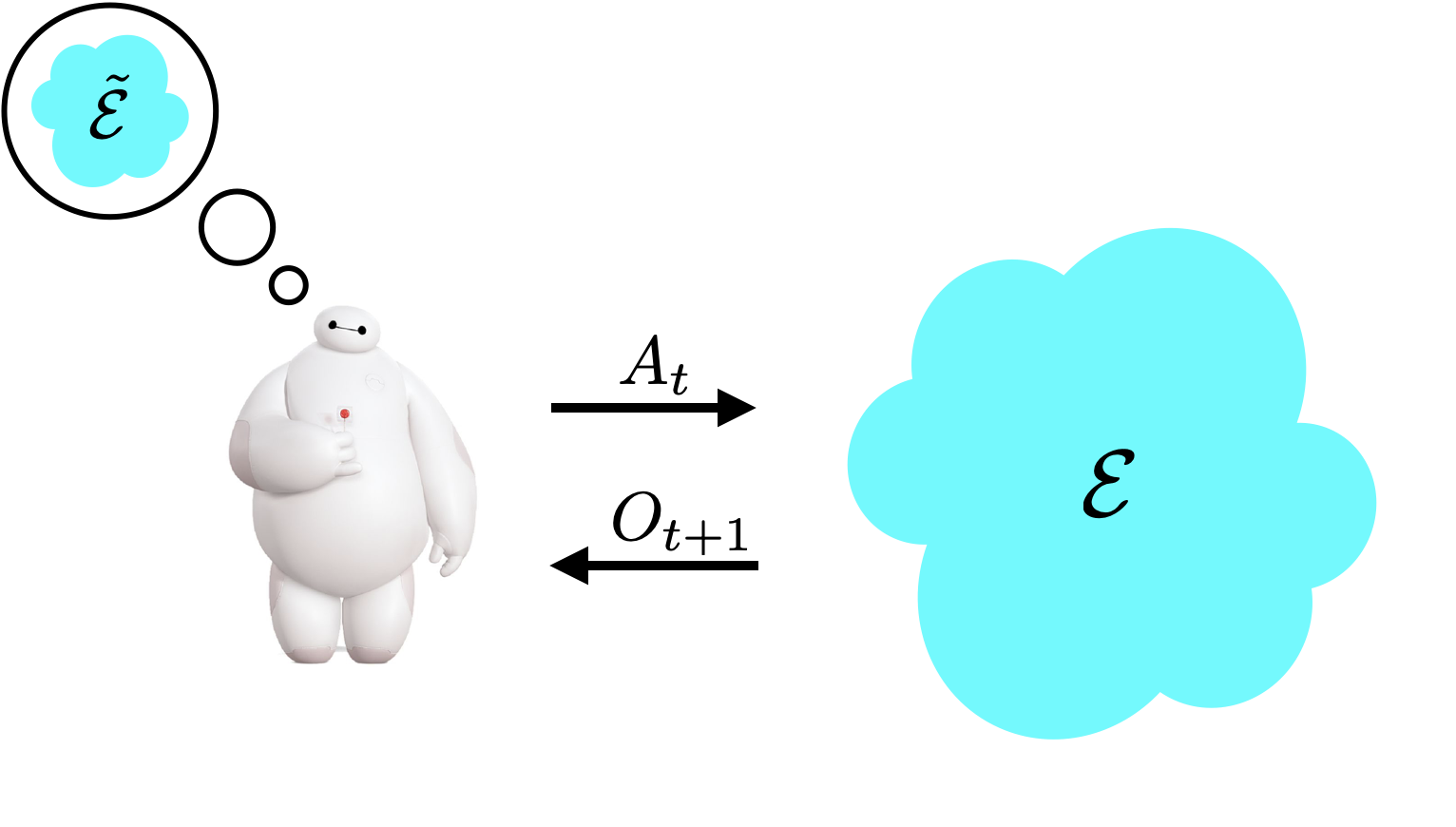}
\caption{The agent aims to retain knowledge about the environment proxy $\proxy$.}
\label{fig:proxy}
\end{center}
\end{figure}

With the DQN and ensemble-DQN agents described in Section \ref{se:dqn}, for example, a desired action value function $\tilde{Q} = Q_{\tilde{\theta}}$, generated by some neural network weights $\tilde{\theta}$, can be thought of as the environment proxy. The epistemic state encodes learned weights and the data buffer. 
Though observed video images can reveal enormous amounts of environment-relevant information, the proxy leads the agents to prioritize retaining information about $\tilde{Q}$ while ignoring other information revealed by observations.

Technically, a proxy could be any random variable $\proxy$.  However, a practical proxy should be designed to encode essential features of the environment $\environment$ with a manageable number $\H(\proxy) \ll \H(\environment)$ of bits.  Further, an effective proxy design should expedite accumulation of useful information.  Aside from action value functions, as used by DQN and ensemble-DQN, other proxy designs commonly used in reinforcement learning include general value functions, simplified models of the environment, and policies.  We will further discuss in Section \ref{se:retaining-information} factors that drive design of an effective proxy.

\subsection{The Curse of Curiosity}

To identify the environment, an agent must uncover $\H(\environment)$ bits.  An agent that indiscriminately gathers all this information is termed {\it curiosity-driven}.  Since $\H(\environment)$ is typically intractably large, or even infinite, a curiosity-driven agent may spend its lifetime gathering irrelevant information.  The potentially catastrophic consequences are crystallized by the {\it noisy TV} study \citep{pathak18largescale}, which demonstrates how irrelevant yet complex patterns can draw the attention of a curious agent and dramatically impede its acquisition of useful information.


As aptly noted by \citet{howard1966information}, not all information an agent can acquire is equally valuable: ``if losing all your assets in the stock market and having a whale steak for supper have the same probability, then the information associated with the occurrence of either event is the same.''  We need a mechanism for prioritizing information the agent can acquire.  One approach is to prioritize proxy-relevant information.  While identifying the proxy requires fewer bits than the environment, since $\H(\proxy) \ll \H(\environment)$, the number can still be too large. Moreover, this tends to be wasteful as much of the information required to identify the proxy may be irrelevant to achieving high expected return.  To understand this, consider using a simplified model of the environment as a proxy and suppose that for all possible realizations, a large fraction of situational states yield large negative rewards.  Then, the agent should avoid those states rather than making sacrifices to learn about their associated dynamics.  A new concept beyond that of a proxy is needed to prioritize information seeking.

\subsection{Learning Targets} \label{se:learning-target}

We will consider designing an agent that seeks knowledge about an alternative object, which we refer to as the {\it learning target}. The learning target $\target$ is a function of the environment proxy $\proxy$. As such, the number of environment-relevant bits encoded by $\target$ is no larger than that encoded by $\proxy$.
Knowledge retained about the proxy serves to inform estimates of the learning target.  The learning target should be designed to simultaneously limit two quantities:
\begin{itemize}
\item {\it information:} Environment-relevant information encoded in $\target$, measured by the mutual information $\I(\target; \environment)$, should make up a modest number of bits.
\item {\it regret:} 
Given $\target$, the agent should be able to execute a target policy $\pi_\target$ that incurs modest regret $\E[\overline{V}_* - \overline{V}_{\pi_\target}]$.
\end{itemize}
These requirements are intuitive -- in a complex environment, the agent should prioritize acquiring a modest amount of information that can be used to produce an effective policy.

\begin{figure}[htb]
\begin{center}
\includegraphics[scale=0.25]{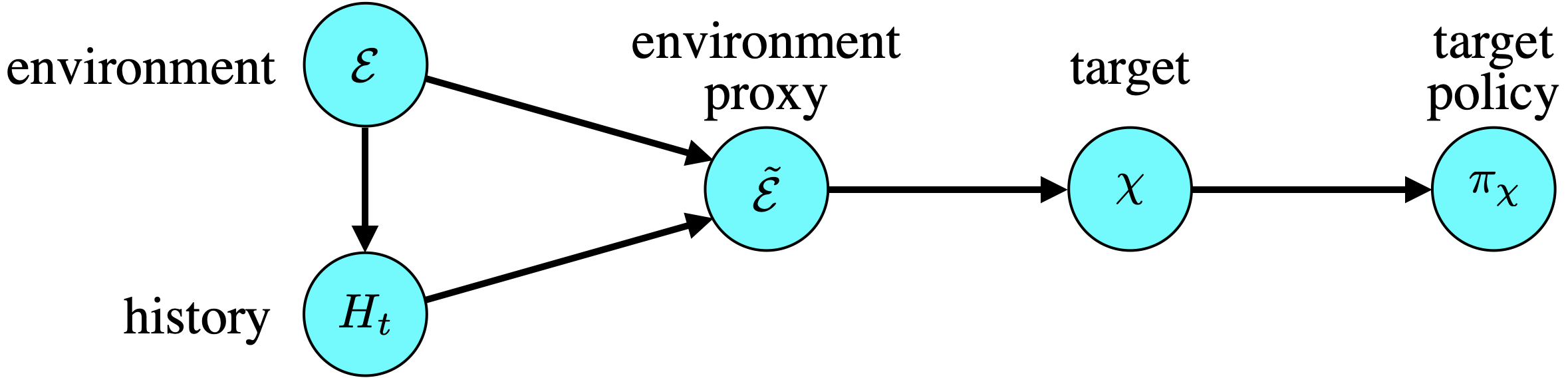}
\caption{A Bayesian network illustrating dependencies between the environment, history, proxy, target, and target policy.}
\label{fig:target-notation}
\end{center}
\end{figure}

It is worth noting that environment proxies and learning targets are abstract concepts that can guide agent design even if they do not explicitly appear in an agent's algorithm.  They reflect a designer's intent with regards to what information the agent should retain and what information the agent should seek.  For the DQN and ensemble-DQN agents described in Section \ref{se:dqn}, it is natural to think of the proxy as an action value function.  However, it is less clear what learning target, if any, motivated the design.  One possibility is the greedy policy with respect to the action value function.  The fact that these agents usually select a greedy action with respect to an action value network might be motivated by this target.


Action value functions and general value functions \citep{sutton2011horde} have served as proxies for reinforcement learning agents, and they can simultaneously serve as learning targets, with target policies taken to be their respective greedy policies.  Alternatively, a designer could take the greedy policies themselves to simultaneously serve as learning targets and target policies.  As another example, with a simplified model of the environment serving as a proxy, the learning target could be a policy generated by a planning algorithm. MuZero \citep{schrittwieser2020mastering}, for example, operates in this manner, with Monte Carlo tree search used for planning.  Our computational studies presented in Section \ref{se:computation} will illustrate more concretely a few specific choices of proxies, learning targets, and target policies.

\chapter{Cost-Benefit Analysis}
\label{se:cost-benefit}

We have highlighted a number of design decisions.  These determine the components of agent state, the environment proxy, the learning target, and how actions are selected to balance between exploiting current knowledge and acquiring new information.  Choices are constrained by memory and per-timestep computation, and they influence expected return in complex ways.  In this chapter, we formalize the design problem and establish a regret bound that can facilitate cost-benefit analysis.

\begin{figure}[htb]
\begin{center}
\includegraphics[scale=0.4]{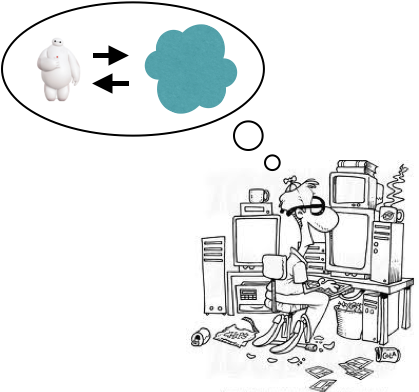}
\caption{Agent design.}
\label{fig:AgentDesign}
\end{center}
\end{figure}

\section{Agent Policy and Regret}

The design problem entails specifying the agent policy $\pi_{\rm agent}$, which at each time produces agent-state-contingent action probabilities $\pi_{\rm agent}(\cdot |X_t)$ from which action $A_t$ will be sampled.  The objective is to maximize the expected return $\E\left[\overline{V}_{\pi_{\rm agent}}\right]$
subject to memory and per-timestep computation constraints.  This expectation is with respect to all uncertainty, while
$$\overline{V}_{\pi_{\rm agent}} = \E\left[\sum_{t=0}^{T-1} R_{t+1} \Big| \environment \right]$$
represents an expectation with respect to aleatoric and algorithmic but not epistemic uncertainty.  It is worth noting that the agent policy $\pi_{\rm agent}$ generally differs from the target policy $\pi_\target$.  The former is the policy executed in the process of learning the latter.

This characterization of agent design in terms of maximizing the expected value $\E\left[\overline{V}_{\pi_{\rm agent}}\right]$ is not new to the reinforcement learning literature.  For example, \citet{duff2003dissertation} considered this characterization and developed computational methods that aim to approximately solve this problem.  However, it is not clear to what extent these methods are scalable and reliable.  Our emphasis is distinct in aiming to offer a general {\it way of thinking} about agents and their relation to this objective.  The intent is to provide a framework through which one can reason about data efficiency of agents that are practical and scalable.

It is useful to define history transition matrices.  For all $h, h' \in \histories$ and $a \in \actions$, let
$$P_{ahh'} = \left\{\begin{array}{ll}
\rho(o|h,a) \qquad & \text{if } h' = (h,a,o) \\
0 \qquad & \text{otherwise.}
\end{array}\right.$$
Here, $(h,a,o)$ denotes the history obtained by concatenating action $a$ and observation $o$ to the history $h$.  We also define, for each policy $\pi$, transition probabilities
$$P_{\pi h h'} = \sum_{a \in \actions} \pi(a|h) P_{a h h'}.$$
We consider $P_a$ and $P_\pi$ to be stochastic matrices.
Further, for each $h \in \histories$ and $a \in \actions$, we define a mean reward 
$$\overline{r}_{a h} = \sum_{o \in \observations} \rho(o|h,a) r(h,a,o),$$
and for each policy $\pi$, $\overline{r}_{\pi h} = \sum_{a \in \actions} \pi(a|h) \overline{r}_{ah}$.  We view $\overline{r}_a$ and $\overline{r}_\pi$ as infinite-dimensional vectors.

The value function of a policy $\pi$ is defined by
$$V_\pi(h) = \sum_{t=|h|}^{T-1} (P_\pi^{t-|h|} \overline{r}_\pi) (h),$$
where for $h = (a_0,o_1,\ldots,a_{t-1},o_t)$, $|h|=t$ is the duration of $h$, and $P_\pi^{t-|h|} \overline{r}_\pi$ is interpreted as an infinite-dimensional matrix-vector product.
The action value function is defined by
$$Q_\pi(h,a) = \overline{r}_{ah} + (P_a V_\pi)(h).$$
The value $V_\pi(h)$ represents the future expected return beginning at history $h$ and executing $\pi$ thereafter, while $Q_\pi(h,a)$ represents the future expected return if action $a$ is executed at $h$ and $\pi$ selects actions thereafter.  Note that $\overline{V}_\pi = V_\pi(H_0)$.  We define the optimal value function by $V_*(h) = \sup_\pi V_\pi(h)$ and the optimal action value function by $Q_*(h,a) = \sup_\pi Q_\pi(h,a)$, and we assume that they are finite.

Klopf's sentiment notwithstanding, maximizing expected return is equivalent to minimizing regret
$$\Regret(T|\pi) = \E[\overline{V}_* - \overline{V}_\pi].$$
Though this frames the problem literally as one of {\it minimization}, it retains the spirit of {\it maximization} in the sense of pursuing open-ended goals, as $\overline{V}_*$ is unknown.  This term is often referred to as {\it Bayesian regret}, though we will omit the {\it Bayesian} designation as we will not be using any other versions of regret.  It is the shortfall in expected return relative to an optimal policy.  We treat regret, rather than expected return, as the design objective.  The advantage is that per-step regret decreases as the agent learns, making bounds on regret easier to interpret than bounds on expected return.

It is worth noting that, by definition, $\Regret(T|\pi)$ is a deterministic quantity.  This is true even if the policy $\pi$ is random.  For example, the target policy $\pi_\target$ is random, but $\Regret(T|\pi_\target)$ is not itself a function of $\pi_\target$, but rather, the expectation of $\overline{V}_* - \overline{V}_{\pi_\target}$, which integrates over $\pi_\target$.

A more general notion is the regret $\E[\overline{V}_{\overline{\pi}} - \overline{V}_\pi]$ relative to a baseline policy $\overline{\pi}$.  The following result decomposes regret across time, offering a useful interpretation of regret as a sum of terms, each of which represents the shortfall $V_{\overline{\pi}}(H_t) - Q_{\overline{\pi}}(H_t, A_t)$ due to executing policy $\pi$ instead of $\overline{\pi}$.  While similar results have appeared in the literature over many decades, such as in \citet{Kakade+Langford:2002}, and an elegant version is presented in \citet{sutton2018reinforcement}, we provide a proof for completeness. 
\begin{theorem}
\label{th:shortfall}
{\bf (shortfall decomposition)}
For all policies $\overline{\pi}$ and $\pi$,
$$\overline{V}_{\overline{\pi}} - \overline{V}_\pi 
= \E_\pi\left[\sum_{t=0}^{T-1} \left(V_{\overline{\pi}}(H_t) - Q_{\overline{\pi}}(H_t, A_t)\right) \Big| \environment\right].$$
\end{theorem}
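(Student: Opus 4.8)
The plan is to establish the identity by a telescoping argument on the value function along the trajectory generated by $\pi$. The key observation is that for the baseline policy $\overline\pi$, the value function satisfies the Bellman-type recursion $V_{\overline\pi}(h) = \overline{r}_{\overline\pi h} + (P_{\overline\pi} V_{\overline\pi})(h)$ for histories of duration less than $T$, and $Q_{\overline\pi}(h,a) = \overline r_{ah} + (P_a V_{\overline\pi})(h)$, both of which follow directly from the definitions $V_\pi(h) = \sum_{t=|h|}^{T-1}(P_\pi^{t-|h|}\overline r_\pi)(h)$ and $Q_\pi(h,a) = \overline r_{ah} + (P_a V_\pi)(h)$ given in the excerpt. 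Note that $(P_a V_{\overline\pi})(h) = \E_\pi[V_{\overline\pi}(H_{|h|+1}) \mid \environment, H_{|h|}=h, A_{|h|}=a]$, since $P_a$ is exactly the history transition matrix induced by taking action $a$.

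First I would write the per-step shortfall as a martingale-difference-like increment. For each $t$, condition on $\environment$ and use the tower property under $\E_\pi[\cdot \mid \environment]$:
$$\E_\pi\!\left[V_{\overline\pi}(H_t) - Q_{\overline\pi}(H_t,A_t) \,\middle|\, \environment\right]
= \E_\pi\!\left[V_{\overline\pi}(H_t) \,\middle|\, \environment\right] - \E_\pi\!\left[\overline r_{A_t H_t} + (P_{A_t} V_{\overline\pi})(H_t) \,\middle|\, \environment\right].$$
The term $\E_\pi[\overline r_{A_t H_t} \mid \environment]$ is $\E_\pi[R_{t+1} \mid \environment]$, and $\E_\pi[(P_{A_t} V_{\overline\pi})(H_t) \mid \environment] = \E_\pi[V_{\overline\pi}(H_{t+1}) \mid \environment]$ by the tower property and the remark above. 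Hence the $t$-th term equals
$$\E_\pi\!\left[V_{\overline\pi}(H_t) \,\middle|\, \environment\right] - \E_\pi\!\left[V_{\overline\pi}(H_{t+1}) \,\middle|\, \environment\right] - \E_\pi\!\left[R_{t+1} \,\middle|\, \environment\right].$$

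Then I would sum over $t = 0, \ldots, T-1$. The $V_{\overline\pi}$ terms telescope, leaving $\E_\pi[V_{\overline\pi}(H_0)\mid\environment] - \E_\pi[V_{\overline\pi}(H_T)\mid\environment]$; since $V_{\overline\pi}(h) = 0$ for $|h| = T$ (empty sum) and $V_{\overline\pi}(H_0) = \overline V_{\overline\pi}$, this collapses to $\overline V_{\overline\pi}$. The reward terms sum to $\E_\pi[\sum_{t=0}^{T-1} R_{t+1} \mid \environment] = \overline V_\pi$ (here $\overline V_\pi$ is the value of $\pi$, computed under its own trajectory distribution, which matches $V_\pi(H_0)$). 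Combining, $\E_\pi[\sum_{t=0}^{T-1}(V_{\overline\pi}(H_t) - Q_{\overline\pi}(H_t,A_t))\mid\environment] = \overline V_{\overline\pi} - \overline V_\pi$, as claimed.

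The main obstacle I anticipate is purely bookkeeping: being careful that all expectations $\E_\pi[\cdot\mid\environment]$ are taken with respect to the \emph{same} policy $\pi$ (the one actually executed), while the value and action-value functions inside are those of $\overline\pi$; it is easy to slip and apply a Bellman recursion for the wrong policy. A second minor point is justifying the interchange of the infinite-dimensional matrix-vector products with expectations and confirming the boundary condition $V_{\overline\pi}(H_T) = 0$ from the convention that a sum with no terms is zero — both are routine given the finite horizon $T$ and finite action/observation sets.
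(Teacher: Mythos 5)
Your proposal is correct and follows essentially the same route as the paper's proof: both arguments telescope $V_{\overline{\pi}}$ along the trajectory generated by $\pi$ and use the tower property to identify $Q_{\overline{\pi}}(H_t,A_t)$ with $\E[R_{t+1}+V_{\overline{\pi}}(H_{t+1})\mid\environment,H_t,A_t]$, together with the boundary facts $V_{\overline{\pi}}(H_0)=\overline{V}_{\overline{\pi}}$ and $V_{\overline{\pi}}(H_T)=0$. The only cosmetic difference is that you take expectations term by term and then sum, whereas the paper manipulates a single expectation before invoking the tower property.
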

\begin{proof}
Since $V_{\overline{\pi}}(H_0) = \overline{V}_{\overline{\pi}}$ and $V_{\overline{\pi}}(H_T) = 0$, we have
\begin{align*}
\overline{V}_{\overline{\pi}} - \overline{V}_\pi 
=& \E_\pi\left[V_{\overline{\pi}}(H_0) - \sum_{t=0}^{T-1} R_{t+1} \Big| \environment\right] \\
=& \E_\pi\left[V_{\overline{\pi}}(H_0) + \sum_{t=0}^{T-1} V_{\overline{\pi}}(H_{t+1}) - \sum_{t=0}^{T-1} (R_{t+1} + V_{\overline{\pi}}(H_{t+1}))  \Big| \environment\right] \\
=& \E_\pi\left[\sum_{t=0}^{T-1} (V_{\overline{\pi}}(H_t) - (R_{t+1} + V_{\overline{\pi}}(H_{t+1})) \Big| \environment\right] \\
=& \E_\pi\left[\sum_{t=0}^{T-1} (V_{\overline{\pi}}(H_t) - \E_\pi[R_{t+1} + V_{\overline{\pi}}(H_{t+1}) | \environment, H_t, A_t]) \Big| \environment\right] \\
=& \E_\pi\left[\sum_{t=0}^{T-1} (V_{\overline{\pi}}(H_t) - Q_{\overline{\pi}}(H_t, A_t)) \Big| \environment\right].
\end{align*}
\end{proof}
An immediate corollary bounds shortfall relative to maximal expected return.
\begin{corollary}
\label{co:shortfall}
For all policies $\pi$,
$$\overline{V}_* - \overline{V}_\pi 
= \E_\pi\left[\sum_{t=0}^{T-1} \left(V_*(H_t) - Q_*(H_t, A_t)\right) \Big| \environment\right].$$
\end{corollary}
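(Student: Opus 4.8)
The plan is to obtain this corollary directly from Theorem~\ref{th:shortfall} (shortfall decomposition) by taking the baseline policy $\overline{\pi}$ to be an optimal policy. The only point requiring care is that $V_*$, $Q_*$, and $\overline{V}_*$ are defined as suprema over all (history-dependent) policies, so I first have to exhibit a single policy $\pi_*$ attaining all of them simultaneously.

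First I would establish existence of such a $\pi_*$. Because $T$ is finite and $\actions$, $\observations$ are finite, the histories of duration at most $T$ form a finite set, and for such histories $V_\pi(h)$ depends only on the restriction of $\pi$ to that set. Define $\widehat{V}$ on histories by backward recursion: $\widehat{V}(h) = 0$ for $|h|\ge T$, and $\widehat{V}(h) = \max_{a\in\actions}\big(\overline{r}_{ah} + \sum_{o\in\observations}\rho(o|h,a)\,\widehat{V}((h,a,o))\big)$ for $|h|<T$; let $\pi_*$ pick, at each $h$ with $|h|<T$, an action attaining this maximum. Using the one-step identity $V_\pi(h) = \sum_{a}\pi(a|h)\big(\overline{r}_{ah} + \sum_{o}\rho(o|h,a)V_\pi((h,a,o))\big)$, a routine backward induction on $|h|$ shows $V_\pi(h)\le\widehat{V}(h)$ for every $\pi$, with equality for $\pi=\pi_*$. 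Hence $V_{\pi_*}(h) = \sup_\pi V_\pi(h) = V_*(h)$ for all $h$, and in particular $\overline{V}_{\pi_*} = V_{\pi_*}(H_0) = V_*(H_0) = \overline{V}_*$.

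Next I would check that $\pi_*$ also attains the supremum defining $Q_*$. From $Q_\pi(h,a) = \overline{r}_{ah} + (P_a V_\pi)(h)$ together with $V_\pi\le V_*$ pointwise and equality at $\pi_*$, and noting $(P_a V_\pi)(h) = \sum_{o}\rho(o|h,a)V_\pi((h,a,o))$ is a convex combination bounded above by $(P_a V_*)(h)$ and met by $\pi_*$, we get $Q_*(h,a) = \sup_\pi Q_\pi(h,a) = \overline{r}_{ah} + (P_a V_*)(h) = \overline{r}_{ah} + (P_a V_{\pi_*})(h) = Q_{\pi_*}(h,a)$.

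Finally, apply Theorem~\ref{th:shortfall} with $\overline{\pi}=\pi_*$: for any policy $\pi$,
$$\overline{V}_* - \overline{V}_\pi = \overline{V}_{\pi_*} - \overline{V}_\pi = \E_\pi\!\left[\sum_{t=0}^{T-1}\big(V_{\pi_*}(H_t) - Q_{\pi_*}(H_t,A_t)\big)\,\Big|\,\environment\right] = \E_\pi\!\left[\sum_{t=0}^{T-1}\big(V_*(H_t) - Q_*(H_t,A_t)\big)\,\Big|\,\environment\right],$$
substituting $V_{\pi_*}=V_*$ and $Q_{\pi_*}=Q_*$ from the previous steps. The main obstacle is the (standard, but not purely cosmetic) verification that one optimal policy realizes all three suprema at once; granting that, the corollary is genuinely immediate.
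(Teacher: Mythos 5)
Your proof is correct and follows exactly the route the paper intends: the paper states Corollary~\ref{co:shortfall} as an immediate consequence of Theorem~\ref{th:shortfall} with the baseline $\overline{\pi}$ taken to be an optimal policy, which is precisely your final step. Your additional backward-induction construction of a policy $\pi_*$ attaining $V_*$, $Q_*$, and $\overline{V}_*$ simultaneously (conditioned on $\environment$, so that $\rho$ is fixed) is the standard finite-horizon dynamic-programming argument that the paper leaves implicit, and it is carried out correctly.
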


\section{Information Gain}

The expected shortfall $\E[V_*(H_t) - Q_*(H_t, A_t)]$ represents a cost, which may be deliberately incurred by an agent as it seeks information.  To reason about benefits that offset this cost, we will devise a measure of information gain.

The conditional mutual information $\I(\target; S_t, A_t, O_{t+1}|P_t)$ offers one notion of information gain.  This value quantifies information about the learning target $\target$ that is revealed by $(S_t, A_t, O_{t+1})$ and absent from the epistemic state $P_t$.  However, an agent may be motivated by delayed rather than immediate information.

It can be in an agent's interest to incur a shortfall in order to position itself to acquire information at a future time.  For example, an effective agent may incur a large shortfall $\E[V_*(H_t) - Q_*(H_t, A_t)]$ at time $t$ that reveals no immediate information -- that is, $\I(\target; S_t, A_t, O_{t+1}|P_t) = 0$ -- if this subsequently leads over, say, $\tau$ timesteps to substantial information $\I(\target; S_t, H_{t:t+\tau}|P_t)$, where $H_{t:t+\tau} = (A_t, O_{t+1}, A_{t+1}, \ldots, A_{t+\tau-1}, O_{t+\tau})$.
The act of sacrificing immediate reward for delayed information is sometimes referred to as {\it deep exploration} \citep{osband2019deep}.

As we will discuss further in Section \ref{se:retaining-information}, the epistemic state $P_{t+\tau}$ does not necessarily retain all information about the learning target $\target$.  As such, instead of the mutual information $\I(\target; S_t, H_{t:t+\tau}|P_t)$, which quantifies new information revealed, we will measure information gain in terms of new information retained. In particular, we measure the decrease $\I(\target; \environment|P_t) - \I(\target; \environment|P_{t+\tau})$ in the conditional mutual information.  This quantifies the increase in environment-relevant information retained about the learning target as the epistemic state transitions from $P_t$ to $P_{t+\tau}$.  In the event that the environment $\environment$ determines the proxy $\proxy$, which in turn determines the learning target $\target$, all information about $\target$ is environment-relevant, and our expression of information gain simplifies, with
$$\I(\target; \environment|P_t) - \I(\target; \environment|P_{t+\tau}) = \H(\target|P_t) - \H(\target|P_{t+\tau}).$$
The use of mutual information $\I(\target; \environment|P_t)$ generalizes entropy $\H(\target|P_t)$, allowing $\target$ to be influenced by algorithmic randomness without counting that as part of the information gain.  Information gain cannot exceed information about the target revealed by observations, or, equivalently, 
\begin{align*}
\I(\target; \environment|P_t) - \I(\target; \environment|P_{t+\tau}) &\leq \I(\target; S_t, H_{t:t+\tau}|P_t) - \I(\target; S_t, H_{t:t+\tau} | P_t, \environment) \\
&\leq \I(\target; S_t, H_{t:t+\tau}|P_t).
\end{align*}
That is because, at best, the agent can reduce its uncertainty about the learning target $\target$ by retaining all $\I(\target; S_t, H_{t:t+\tau}|P_t)$ bits of new information.

\section{The Information Ratio}
\label{se:information_ratio}

As opposed to communication, where efficiency is typically framed in terms of maximizing information throughput, an important consideration in reinforcement learning is how the agent trades off between exploiting current knowledge and acquiring new information.  This can be viewed as a balance between expected immediate shortfall $\E[V_*(H_t) - Q_*(H_t, A_t)]$ and incremental information $\I(\target; \environment|P_t) - \I(\target; \environment|P_{t+\tau})$.

We will assume that uncertainty conditioned on agent beliefs is monotonically nonincreasing, in the sense that $\mathbb{I}(\target; \environment | P_t) \geq \mathbb{I}(\target; \environment | P_{t+\tau})$.  With this in mind, we consider quantifying the manner in which an agent balances immediate shortfall and information via the {\it $\tau$-information ratio}
$$\Gamma_{\tau,t} = \frac{\E[V_*(H_t) - Q_*(H_t, A_t)]^2}{(\I(\target; \environment|P_t) - \I(\target; \environment|P_{t+\tau})) / \tau}.$$
The numerator is squared expected shortfall, while the denominator represents information gain, normalized by duration.  As the numerator grows, actions sacrifice a larger amount of immediate reward.  As the denominator grows, actions more substantially inform the agent.  The ratio reflects the trade-off that the agent is striking.  When both numerator and denominator are zero, we take $\Gamma_{\tau,t}$ to be zero.  For the case of $\tau=1$, we simply refer to this as the information ratio and write $\Gamma_t \equiv \Gamma_{1,t}$.

When designing and analyzing agents, it is often helpful to consider shortfall relative to a suboptimal baseline.  This leads to the notion of a $(\tau,\epsilon)$-information ratio, defined by
$$\Gamma_{\tau,\epsilon,t} = \frac{\E[V_*(H_t) - Q_*(H_t, A_t) - \epsilon_t]_+^2}{(\I(\target; \environment|P_t) - \I(\target; \environment|P_{t+\tau}))/\tau},$$
for the sequence $\epsilon = (\epsilon_0,\ldots, \epsilon_{T-1})$.  We use the subscript plus sign to denote the positive part of a number; in other words, $x_+ = \max(x,0)$.
Note that, if $\epsilon = (0,\ldots,0)$ then $\Gamma_{\tau,\epsilon,t} = \Gamma_{\tau,t}$.

Given a policy $\overline{\pi}$ chosen as a baseline against which the agent is designed to compete, it is natural to consider a variant of the information ratio that depends on shortfall with respect to $\overline{\pi}$, which takes the form
$$\frac{\E[V_{\overline{\pi}}(H_t) - Q_{\overline{\pi}}(H_t, A_t)]^2_+}{(\I(\target; \environment|P_t) - \I(\target; \environment|P_{t+\tau})) / \tau} = \Gamma_{\tau,\epsilon,t},$$
with
$$\epsilon_t = \E[V_*(H_t) - Q_*(H_t, A_t)] - \E[V_{\overline{\pi}}(H_t) - Q_{\overline{\pi}}(H_t, A_t)].$$
When $\epsilon$ is chosen in this manner, we will alternately denote the $(\tau,\epsilon)$-information ratio by $\Gamma_{\tau,\overline{\pi}, t} = \Gamma_{\tau,\epsilon, t}$, in which case we will refer to it as the $(\tau,\overline{\pi})$-information ratio.

These definitions generalize the concept of an information ratio as originally proposed by \citet{russo2016information} as a tool for analyzing Thompson sampling \citep{thompson1933likelihood,thompson1935theory}.  A growing body of work has studied, applied, and extended this concept \citep{russo2014learning,bubeck2015bandit,bubeck2015bandit,bubeck2016multi,NEURIPS2018_f3e52c30,liu2018information,russo2018learning,dong2019performance,lu2019information-confidence,zimmert2019connections,lattimore2019information,bubeck2020first,xlu2020dissertation,russo2020satisficing}.  The definitions of this section unify and extend previous ones to accommodate general learning targets, baseline policies, and delayed information.

\section{A Regret Bound}

It is generally difficult to understand the exact impact of design choices like proxies and targets on regret.  These choices impact what uncertainty the agent aims to resolve, regret incurred to do that, and regret associated with the target policy.  However, we can establish a regret bound that simplifies and offers insight into the tradeoffs.  As a much simpler alternative to minimizing regret, a designer could aim to minimize this bound.  We use $\mathbb{Z}_{++}$ and $\Re_+$ to denote the positive integers and nonnegative reals.
\begin{theorem} \label{th:regret-bound}
If $\mathbb{I}(\target; \environment | P_t)$ is monotonically nonincreasing with $t$, then, for all $\tau \in \mathbb{Z}_{++}$ and $\epsilon \in \Re^T_+$,
\[\Regret(T|\pi_{\rm agent}) \leq \sqrt{\I(\target; \environment) \sum_{t=0}^{T-1} \Gamma_{\tau,\epsilon,t}} + \sum_{t=0}^{T-1} \epsilon_t.\]
\end{theorem}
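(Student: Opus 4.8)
The plan is to bound regret by first applying the shortfall decomposition (Corollary~\ref{co:shortfall}), then using the information ratio to convert per-step shortfall into information gain, and finally telescoping the information gain and applying Cauchy--Schwarz. Starting from Corollary~\ref{co:shortfall}, we have $\Regret(T|\pi_{\rm agent}) = \E\big[\sum_{t=0}^{T-1}(V_*(H_t) - Q_*(H_t,A_t))\big]$. I would split each term as $(V_*(H_t) - Q_*(H_t,A_t)) = [V_*(H_t) - Q_*(H_t,A_t) - \epsilon_t]_+ + (\text{remainder})$ and handle the remainder by bounding it above by $\epsilon_t$ (this is where the $\sum_t \epsilon_t$ term in the bound comes from; one needs $[x]_+ \geq x - \epsilon_t$, i.e.\ $[x]_+ + \epsilon_t \geq x$, which holds for $\epsilon_t \geq 0$). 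So it remains to bound $\sum_{t=0}^{T-1} \E[V_*(H_t) - Q_*(H_t,A_t) - \epsilon_t]_+$.

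For this, I would use the definition of $\Gamma_{\tau,\epsilon,t}$ to write $\E[V_*(H_t) - Q_*(H_t,A_t) - \epsilon_t]_+ = \sqrt{\Gamma_{\tau,\epsilon,t} \cdot (\I(\target;\environment|P_t) - \I(\target;\environment|P_{t+\tau}))/\tau}$, valid because the information gain is nonnegative by the monotonicity hypothesis (and handling the degenerate zero case as stipulated in the text). Then apply the Cauchy--Schwarz inequality: $\sum_{t=0}^{T-1}\sqrt{\Gamma_{\tau,\epsilon,t}}\cdot\sqrt{(\I(\target;\environment|P_t)-\I(\target;\environment|P_{t+\tau}))/\tau} \leq \sqrt{\sum_{t=0}^{T-1}\Gamma_{\tau,\epsilon,t}}\cdot\sqrt{\tfrac{1}{\tau}\sum_{t=0}^{T-1}(\I(\target;\environment|P_t)-\I(\target;\environment|P_{t+\tau}))}$.

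The remaining work is to show $\tfrac{1}{\tau}\sum_{t=0}^{T-1}(\I(\target;\environment|P_t) - \I(\target;\environment|P_{t+\tau})) \leq \I(\target;\environment)$. I expect this telescoping step to be the main obstacle, since the sum is over overlapping windows of length $\tau$ rather than a clean telescoping sum. The idea is to partition $\{0,1,\ldots,T-1\}$ into $\tau$ residue classes modulo $\tau$; within each class $j \in \{0,\ldots,\tau-1\}$, the sum $\sum_{t \equiv j} (\I(\target;\environment|P_t) - \I(\target;\environment|P_{t+\tau}))$ telescopes to $\I(\target;\environment|P_j) - \I(\target;\environment|P_{j'})$ for the largest relevant index $j'$, which is at most $\I(\target;\environment|P_j) \leq \I(\target;\environment)$ (using nonnegativity of the final conditional mutual information and the fact that conditioning on $P_j$, a function of the history, does not increase mutual information with $\environment$ — i.e.\ $\I(\target;\environment|P_j) \leq \I(\target;\environment)$, which itself may need a short justification via the data-processing/chain-rule argument, noting $\H(\target|P_j) \geq \H(\target|H_j)$ is the wrong direction so one instead argues $\I(\target;\environment|P_j) \le \I(\target,P_j;\environment) = \I(P_j;\environment) + \I(\target;\environment|P_j)$ is circular — more carefully, since $P_j$ is determined by $H_j$ and epistemic uncertainty structure, one uses $\I(\target;\environment | P_j) \le \I(\target;\environment)$ by the assumed monotonicity starting from $P_0$, a deterministic initial state, giving $\I(\target;\environment|P_0) = \I(\target;\environment)$). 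Summing over the $\tau$ residue classes gives $\sum_{t=0}^{T-1}(\cdots) \leq \tau\,\I(\target;\environment)$, hence the factor $1/\tau$ cancels and we obtain the bound $\sqrt{\I(\target;\environment)\sum_{t=0}^{T-1}\Gamma_{\tau,\epsilon,t}}$. Adding back the $\sum_t \epsilon_t$ term completes the proof.
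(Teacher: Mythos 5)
Your proposal is correct and follows essentially the same route as the paper's proof: shortfall decomposition (Corollary \ref{co:shortfall}), adding and subtracting $\epsilon_t$, invoking the definition of $\Gamma_{\tau,\epsilon,t}$, Cauchy--Schwarz, and then bounding $\sum_{t=0}^{T-1}\bigl(\I(\target;\environment|P_t)-\I(\target;\environment|P_{t+\tau})\bigr)$ by $\tau\,\I(\target;\environment|P_0)=\tau\,\I(\target;\environment)$. Your residue-class-modulo-$\tau$ telescoping is just a different bookkeeping of the paper's step of expanding each $\tau$-window difference into consecutive differences and swapping the order of summation, and your closing observation that monotonicity plus the deterministic initial epistemic state gives $\I(\target;\environment|P_k)\le\I(\target;\environment|P_0)=\I(\target;\environment)$ matches the paper's steps (c)--(d).
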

\begin{proof}
By Corollary \ref{co:shortfall},  
\begin{align*}
\E[\overline{V}_* - \overline{V}_{\pi_{\mathrm{agent}}}]
=& \sum_{t=0}^{T-1} \E[V_*(H_t) - Q_*(H_t, A_t)] \\
=& \sum_{t=0}^{T-1} \E[V_*(H_t) - Q_*(H_t, A_t) - \epsilon_t] + \sum_{t=0}^{T-1} \epsilon_t \\
\leq& \sum_{t=0}^{T-1} \E[V_*(H_t) - Q_*(H_t, A_t) - \epsilon_t]_+ + \sum_{t=0}^{T-1} \epsilon_t \\
=& \sum_{t=0}^{T-1} \sqrt{\Gamma_{\tau,\epsilon,t} \frac{1}{\tau} \left(\I(\target; \environment|P_t) - \I(\target; \environment|P_{t+\tau})\right)} + \sum_{t=0}^{T-1} \epsilon_t \\
\overset{(a)}{\leq}& \sqrt{\frac{1}{\tau} \sum_{t=0}^{T-1} \left(\I(\target; \environment |P_t) - \I(\target; \environment |P_{t+\tau})\right)} \sqrt{\sum_{t=0}^{T-1} \Gamma_{\tau,\epsilon,t}} + \sum_{t=0}^{T-1} \epsilon_t \\
\overset{(b)}{\leq}& \sqrt{\I(\target; \environment|P_0)} \sqrt{\sum_{t=0}^{T-1} \Gamma_{\tau,\epsilon,t}} + \sum_{t=0}^{T-1} \epsilon_t \\
=& \sqrt{\I(\target; \environment) \sum_{t=0}^{T-1} \Gamma_{\tau,\epsilon,t}} + \sum_{t=0}^{T-1} \epsilon_t,
\end{align*}
where (a) follows from Cauchy-Bunyakovsky-Schwarz and (b) holds because 
\begin{align*}
& \frac{1}{\tau} \sum_{t=0}^{T-1} \left(\I(\target; \environment |P_t) - \I(\target; \environment|P_{t+\tau})\right) \\
=& \frac{1}{\tau} \sum_{t=0}^{T-1} \sum_{k=0}^{\tau-1} (\I(\target; \environment|P_{t+k}) - \I(\target; \environment|P_{t+k+1})) \\
=& \frac{1}{\tau} \sum_{k=0}^{\tau-1} \sum_{t=0}^{T-1} (\I(\target; \environment|P_{t+k}) - \I(\target; \environment|P_{t+k+1})) \\
=& \frac{1}{\tau} \sum_{k=0}^{\tau-1} (\I(\target; \environment|P_k) - \I(\target; \environment|P_{T+k}))\\
\overset{(c)}{\leq}& \frac{1}{\tau} \sum_{k=0}^{\tau-1} \I(\target; \environment|P_k) \\
\overset{(d)}{\leq}& \I(\target; \environment|P_0),
\end{align*}
where (c) holds because mutual information is nonnegative and (d) holds because $\mathbb{I}(\target; \environment | P_t)$ is monotonically nonincreasing by assumption.
\end{proof}

If an agent learns its target $\target$, it can execute the target policy $\pi_\target$.  As such, it is natural to consider $\pi_\target$ as a baseline for assessing the agent's performance.  With
\[\epsilon_t = \E[V_*(H_t) - Q_*(H_t, A_t)] - \E[V_{\pi_\target}(H_t) - Q_{\pi_\target}(H_t, A_t)],\]
Theorem \ref{th:regret-bound} and Theorem \ref{th:shortfall} yield the following result.
\begin{corollary} \label{co:regret-bound}
If $\I(\target; \environment | P_t)$ is monotonically nonincreasing with $t$, then, for all $\tau \in \mathbb{Z}_{++}$,
\[\Regret(T|\pi_{\rm agent}) \leq \sqrt{\I(\target; \environment) \sum_{t=0}^{T-1} \Gamma_{\tau,\pi_\target,t}} + \Regret(T|\pi_\target).\]
\end{corollary}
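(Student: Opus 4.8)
The plan is to obtain Corollary~\ref{co:regret-bound} by specializing the free sequence $\epsilon$ in Theorem~\ref{th:regret-bound} to the baseline $\pi_\target$ and then rewriting the additive term $\sum_t\epsilon_t$ as $\Regret(T|\pi_\target)$. By the definition of the $(\tau,\overline{\pi})$-information ratio given above, $\Gamma_{\tau,\pi_\target,t}$ is by construction equal to $\Gamma_{\tau,\epsilon,t}$ with $\epsilon_t = \E[V_*(H_t)-Q_*(H_t,A_t)] - \E[V_{\pi_\target}(H_t)-Q_{\pi_\target}(H_t,A_t)]$. Hence, assuming as in Theorem~\ref{th:regret-bound} that $\I(\target;\environment|P_t)$ is monotonically nonincreasing, that theorem, applied with $\pi_{\rm agent}$ and this $\epsilon$, gives $\Regret(T|\pi_{\rm agent}) \le \sqrt{\I(\target;\environment)\sum_{t=0}^{T-1}\Gamma_{\tau,\pi_\target,t}} + \sum_{t=0}^{T-1}\epsilon_t$ for every $\tau\in\mathbb{Z}_{++}$.

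It then remains to show $\sum_{t=0}^{T-1}\epsilon_t = \Regret(T|\pi_\target)$. I would get this by subtracting two instances of the shortfall decomposition, each averaged over $\environment$: Corollary~\ref{co:shortfall} gives $\sum_t \E[V_*(H_t)-Q_*(H_t,A_t)] = \E[\overline{V}_*-\overline{V}_{\pi_{\rm agent}}] = \Regret(T|\pi_{\rm agent})$, and Theorem~\ref{th:shortfall} with baseline $\overline{\pi}=\pi_\target$ and executed policy $\pi=\pi_{\rm agent}$ gives $\sum_t \E[V_{\pi_\target}(H_t)-Q_{\pi_\target}(H_t,A_t)] = \E[\overline{V}_{\pi_\target}-\overline{V}_{\pi_{\rm agent}}]$. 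Subtracting these, the $\overline{V}_{\pi_{\rm agent}}$ contributions cancel and $\sum_t\epsilon_t = \E[\overline{V}_*-\overline{V}_{\pi_\target}] = \Regret(T|\pi_\target)$. Plugging this back into the bound from the first paragraph completes the argument.

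The one step that needs care is the appeal to Theorem~\ref{th:regret-bound}: it is stated for $\epsilon\in\Re^T_+$, whereas the $\epsilon$ built here need not be coordinatewise nonnegative -- only its sum, $\Regret(T|\pi_\target)$, is guaranteed nonnegative. This is harmless, since the proof of Theorem~\ref{th:regret-bound} uses the sign of $\epsilon_t$ only through the elementary inequality $x\le x_+$, which holds for every real $x$; the bound therefore applies verbatim, and one may equivalently just re-run that short chain of inequalities with the present $\epsilon$ substituted in. A minor secondary point is that $\pi_\target$ is a random policy, so $V_{\pi_\target}$ and $Q_{\pi_\target}$ are random objects; Theorem~\ref{th:shortfall} still applies because its proof is a conditional identity valid given $\environment$ and $\pi_\target$ -- the environment generates $O_{t+1}$ from $\rho(\cdot|H_t,A_t)$ regardless of $\pi_\target$ -- after which one averages over both.
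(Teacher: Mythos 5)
Your proposal is correct and follows essentially the same route the paper intends: specialize $\epsilon_t = \E[V_*(H_t)-Q_*(H_t,A_t)] - \E[V_{\pi_\target}(H_t)-Q_{\pi_\target}(H_t,A_t)]$ in Theorem~\ref{th:regret-bound} (so that $\Gamma_{\tau,\epsilon,t}=\Gamma_{\tau,\pi_\target,t}$ by definition), then use Corollary~\ref{co:shortfall} and Theorem~\ref{th:shortfall} with baseline $\pi_\target$ to identify $\sum_t\epsilon_t=\Regret(T|\pi_\target)$ via cancellation of the $\overline{V}_{\pi_{\rm agent}}$ terms. Your two caveats — that this $\epsilon$ need not be coordinatewise nonnegative but the proof of Theorem~\ref{th:regret-bound} only uses $x\le x_+$, and that the shortfall decomposition applies conditionally even though $\pi_\target$ is random — are correct fillings-in of details the paper leaves implicit.
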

These results unify and generalize those of \citet{russo2016information,russo2020satisficing,NEURIPS2018_f3e52c30,lu2019information-confidence,xlu2020dissertation}.  Among other things, they address delayed information through an information ratio that is distinguished from prior work by its dependence on an {\it information horizon} $\tau$.  The bounds hold for any value of $\tau$.  Intuitively, $\tau$ should be chosen to cover the duration over which an action may substantially impact information subsequently revealed.

The bound of Corollary \ref{co:regret-bound} isolates three factors.  The mutual information $\I(\target; \environment)$ is the number of bits required to resolve environment-relevant uncertainty about learning target.  Another is the regret of the target policy, $\Regret(T|\pi_\target)$. Once the agent resolves all uncertainty about the learning target, $\Regret(T|\pi_\target)$ measures the performance shortfall of the target policy relative to the optimal policy. The role of the information ratio $\Gamma_{\tau,\pi_\target, t}$ deserves the most discussion.  As mentioned earlier, this quantifies the manner in which the agent trades off between regret and bits of information.  It is through $\Gamma_{\tau,\pi_\target, t}$ that the learning target and proxy influence rates at which relevant bits are acquired and retained in epistemic state.  Examples below and material of Sections \ref{se:retaining-information} and \ref{se:seeking-information} clarify the role that the information ratio can play in analyzing agents and designing ones that are effective at seeking and retaining useful information.

\section{Examples}

We next present several examples that illustrate implications of our regret bound and, in particular, how it can help in understanding an agent's data efficiency. While this regret bound can apply to any agent, for the purpose of illustration, we will focus mostly on Thompson sampling \citep{thompson1933likelihood,russo2018tutorial} and information-directed sampling agents \citep{russo2014learning}. We begin with multi-armed bandit environments, in which actions impact the immediate observation but do not induce delayed consequences.  Then, we consider in Sections \ref{se:episodic-mdp} and \ref{se:chain} examples for which an agent's handling of delayed consequences becomes essential.  

We find that in the mathematical analyses in Sections \ref{se:mab} and \ref{se:episodic-mdp}, it is more natural to measure information in nats rather than bits. For simplicity, we will use the same notation for entropy and mutual information but with the understanding that information is measured in nats.

\subsection{Multi-Armed Bandits}
\label{se:mab}

A multi-armed bandit, or {\it bandit} for short, is an environment $\environment = (\actions, \observations, \rho)$ for which each observation $O_{t+1}$ depends on the history $H_t$ only through the action $A_t$.  As such, the observation probability function can be written as $\rho(o|a) \equiv \rho(o|h,a)$.  We will refer to a reward function $r$ as a {\it bandit reward function} if reward similarly depends on history only through the current action and resulting observation, so that rewards can be written as $r(a,o) \equiv r(h,a,o)$.  To simplify exposition, we will assume that bandit reward functions range within $[0,1]$.  Since observation probabilities and rewards depend on $H_t$ only through $A_t$, there exists an optimal policy $\pi_*$ that selects actions independent from history, assigning a probability $\pi_*(a)$ to each action.

{\it Bandit} is an antiquated term for a slot machine, which ``robs'' the player of his money.  Each action can be viewed as pulling an arm of the machine.  The resulting symbol combination and payout serve as observation and reward.  A two-armed version is depicted in Figure \ref{fig:MAB}.

\begin{figure}[htb]
\begin{center}
\includegraphics[scale=0.25]{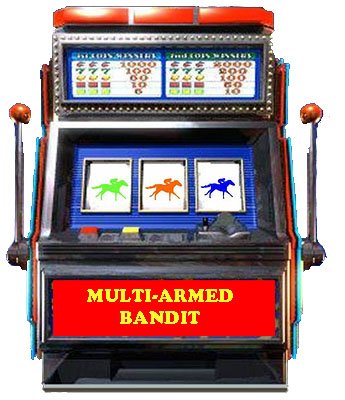}
\caption{A two-armed bandit.}
\label{fig:MAB}
\end{center}
\end{figure}

\vspace{0.1in}
\noindent{\bf Information Ratio and Regret}

\noindent The information ratio and our regret bound simplify when specialized to multi-armed bandits. Optimal action values are given by $Q_*(h, a) = \sum_{o \in \observations} \rho(o|a) r(a, o)$,
which do not depend on $h$.  Letting $\overline{R}_a = Q_*(h, a)$ and $\overline{R}_* = \max_{a \in \actions} \overline{R}_a$, per-period regret can be written as
$$\E[V_*(H_t) - Q_*(H_t, A_t)] = \E[\overline{R}_* - R_{t+1}].$$
For the purpose of examples in this section, we consider immediate information gain and thus the $1$-information ratio $\Gamma_t = \Gamma_{1,t}$, which takes the form
$$\Gamma_t = \frac{\E[\overline{R}_* - R_{t+1}]^2}{\I(\target; \environment|P_t) - \I(\target; \environment|P_{t+1})}.$$

More generally, we can consider an information ratio $\Gamma_{\overline{\pi}, t} = \Gamma_{1,\overline{\pi},t}$ with a baseline $\overline{\pi}$ that assigns a probability $\overline{\pi}(a)$ to each action, given by
$$\Gamma_{\overline{\pi}, t} = \frac{\E[\overline{R}_{\overline{\pi}} - R_{t+1}]_+^2}{\I(\target; \environment|P_t) - \I(\target; \environment|P_{t+1})},$$
where $\overline{R}_{\overline{\pi}} = \sum_{a \in \actions} \overline{\pi}(a) \overline{R}_a$.
Specializing Corollary \ref{co:regret-bound} to this context, with a target policy $\pi_\target$ that selects actions independently from history, we have
\[\Regret(T|\pi_{\rm agent}) \leq \sqrt{\I(\target; \environment) \sum_{t=0}^{T-1} \Gamma_{\pi_\target,t}} + \Regret(T|\pi_\target).\]

\vspace{0.1in}
\noindent{\bf Worst Case}

\noindent Consider an agent that takes the proxy $\proxy = \rho$ to be the observation probability function, the epistemic state $P_t$ to be the posterior distribution $\Pr(\proxy \in \cdot | H_t)$, the learning target $\target \in \argmax_{a \in \actions} \sum_{o \in \observations} \rho(o|a) r(a,o)$ to be an action that maximizes expected reward, and the target policy $\pi_\target$ to be a policy that executes action $\target$.  Suppose the agent applies Thompson sampling, which entails sampling an approximation $\rho_t$ independently from $\Pr(\proxy \in \cdot | P_t)$ and executing $A_t \in \argmax_{a \in \actions} \sum_{o \in \observations} \rho_t(o|a) r(a,o)$.

As established in \citet{russo2016information}, the associated information ratio satisfies $\Gamma_{t} \leq \frac{1}{2}\actions$.  Note that, as shorthand, when this interpretation is clear from context, we use set notation such as $\actions$ to denote cardinality $|\actions|$.  It follows from Theorem \ref{th:regret-bound} that
$$\mathrm{Regret}(T|\pi_{\rm agent}) \leq \sqrt{\frac{1}{2} \actions T \I(\target; \environment)} \leq \sqrt{\frac{1}{2} \actions T \ln \actions}.$$
Note that $\Regret(T|\pi_\target)=0$ here because the target policy represents an optimal policy.
The same bound was established in \citet{russo2016information} via a more specialized analysis.  We refer to this as a worst-case bound because it applies for a Thompson sampling agent so long as the environment $\environment$ is known to be a multi-armed bandit.

\vspace{0.1in}
\noindent{\bf Satisficing}

\noindent When there are many possible actions, it can be advantageous to target a satisficing one rather than search for too long to find an optimal action.  Let us illustrate this issue in the context of many-armed bandits.  While our previous worst-case regret bound grows with the number of actions, we will establish an alternative that relaxes this dependence and is therefore more attractive in the many-action regime.

Without loss of generality, let $\actions = \{1,\ldots,\actions\}$; recall that we use sets to denote their cardinality when that is clear from context.  As a learning target, consider
$$\target = \min\left\{a \in \actions: \sum_{o \in \observations} \rho(o|a) r(a,o) \geq \overline{R}_*-\epsilon\right\},$$
which is the first $\epsilon$-optimal action and can be interpreted as a satisficing one.  If the target policy $\pi_\target$ executes action $\target$ , we have $\Regret(T|\pi_\target) \leq \epsilon T$.

Such a learning target does not necessarily reduce regret.  For example, if all actions yield zero reward except one that yields reward one, the agent cannot do better than trying every action.  To restrict attention to cases where our target is helpful, let us assume that observation probabilities $\rho(\cdot|a)$ are independent and identically distributed across actions.  Then, regret can be bounded in a manner that depends on the probability $p_{\epsilon,\actions} = \Pr(\sum_{o \in \observations} \rho(o|a) r(a,o) \geq \overline{R}_* - \epsilon)$ of $\epsilon$-optimality.

Suppose that the agent applies a variant of Thompson sampling that samples an approximation $\hat{\rho}_t$ independently from the posterior $\Pr(\proxy \in \cdot | P_t)$ and executes 
$$A_t = \min\left\{a \in \actions: \sum_{o \in \observations} \hat{\rho}_t(o|a) r(a,o) \geq \hat{R}_{*,t}-\epsilon\right\},$$
where $\hat{R}_{*,t} = \max_{a\in\actions} \sum_{o\in\observations} \hat{\rho}_t(o|a) r(a,o)$.
In other words, the agent samples $A_t$ from the posterior distribution $\Pr(\target \in \cdot | P_t)$ of the satisficing action.  The analysis of \citet{russo2020satisficing} establishes bounds on entropy
$$\H(\target) \leq 1 + \ln \frac{1}{p_{\epsilon, \actions}},$$
and the information ratio
$$\Gamma_{\pi_\target, t} \leq 2 \left(2 + \frac{1 + \ln T}{p_{\epsilon,\actions}}\right).$$
Combining this with Theorem \ref{th:regret-bound} leads to a regret bound
$$\mathrm{Regret}(T|\pi_{\rm agent}) \leq \sqrt{2 \left(2 + \frac{1 + \ln T}{p_{\epsilon,\actions}}\right) \left(1 + \ln \frac{1}{p_{\epsilon,\actions}}\right) T } + \epsilon T.$$
Note that $p_{\epsilon,\actions}$ monotonically decreases and converges to some $p_{\epsilon, \infty} > 0$ as the number of actions goes to infinity. Therefore, the regret is upper bounded by the same expression before but with $p_{\epsilon,\actions}$ replaced by $p_{\epsilon,\infty}$. For example, if the mean reward of an action $\overline{R}_a = \sum_{o\in\observations} \rho(o|a)r(a,o)$ is positively supported on $[0, 1]$, then the regret bound only depends on $p_{\epsilon, \infty} = \Pr(\overline{R}_a \geq 1 - \epsilon)$ and not on the number of actions.

\vspace{0.1in}
\noindent{\bf Linear Bandit}

\noindent In a linear bandit, observations represent rewards with expectations that depend linearly on features of the action.  In particular, the action set $\actions \subset \{a \in \Re^d: \|a\|_2 = 1\}$ is comprised of $d$-dimensional unit vectors and the expected reward $\E[R_{t+1}|\environment, A_t=a] = \sum_{o \in \observations} \rho(o|a) r(a,o) = \theta^\top a$ depends linearly on a random vector $\theta$.  As established in \citet{russo2016information}, with the proxy and learning target taken to be the parameter vector $\proxy = \theta$ and an action $\target \in \argmax_{a \in \actions} \theta^\top a$ that maximizes expected reward, the associated information ratio is bounded according to $\Gamma_t \leq \frac{1}{2} d$ for Thompson sampling. Theorem \ref{th:regret-bound} then yields a regret bound
$$\mathrm{Regret}(T|\pi_{\rm agent}) \leq \sqrt{\frac{1}{2} d T \I(\target; \environment)} \leq \sqrt{\frac{1}{2} d T \ln \actions},$$
for Thompson sampling. 

An alternative bound, established in \citet{NEURIPS2018_f3e52c30}, relaxes the dependence on the number of actions and is preferable when there are many.  That bound can be produced by taking the proxy to be a lossy compression $\proxy = \tilde{\theta}$ of $\theta$ encoded by $\H(\proxy) = \H(\tilde{\theta}) \ll \H(\theta) = \H(\environment)$ nats.  As explained in \citet{NEURIPS2018_f3e52c30}, there exists a compression $\tilde{\theta}$ with $\H(\tilde{\theta}) \leq d \ln (1+1/\epsilon)$ nats such that a learning target $\target \in \argmax_{a \in \actions} \tilde{\theta}^\top a$ attains $\mathrm{Regret}(T|\pi_\target) \leq \epsilon T$.  With this proxy, Theorem \ref{th:regret-bound} implies
\begin{align*}
\mathrm{Regret}(T|\pi_{\rm agent}) &\leq \sqrt{\frac{1}{2} d T \H(\target)}  + \epsilon T  \\
&\leq \sqrt{\frac{1}{2} d T \H(\tilde{\theta})}  + \epsilon T \\
&\leq d \sqrt{\frac{1}{2} \ln \left(1 + \frac{1}{\epsilon}\right) T}  + \epsilon T,
\end{align*}
for Thompson sampling.  Since actions executed by Thompson sampling do not depend on the proxy $\proxy$, the bound holds for any choice of $\epsilon$.  Minimizing over $\epsilon$, we obtain
$$\mathrm{Regret}(T|\pi_{\rm agent}) \leq d \sqrt{T \ln \left(3 + \frac{3 \sqrt{2T}}{d}\right)},$$
as established in \citet{NEURIPS2018_f3e52c30}.  As is desirable for large action sets, this bound does not depend on the number of actions.

\vspace{0.1in}
\noindent{\bf Information-Directed Sampling}

\noindent Agents considered in the preceding examples employ Thompson sampling.  While this is an elegant approach to action selection, it is possible to design agents that suffer less regret, sometimes with dramatic differences.  One alternative, inspired by our regret bound, is {\it information-directed sampling} (IDS), a concept first developed in \citet{russo2014learning}.  We offer a more extensive discussion of IDS in Section \ref{se:seeking-information}, where we present a more general form intended for environments in which actions induce delayed consequences.  Here, we consider a special case that applies to multi-armed bandits.

To select an action $A_t$, the version of IDS we consider solves
\begin{equation}
\label{eq:MAB-IDS}
\min_{\nu \in \Delta_\actions} \frac{\E\left[\overline{R}_{\pi_\target} - \overline{R}_{\tilde{A}_t} | P_t\right]_+^2}{\E[\I(\target; \environment|P_t \leftarrow P_t) - \I(\target; \environment | P_{t+1} \leftarrow \tilde{P}_{t+1}) | P_t]},
\end{equation}
where $\Delta_\actions$ is the set of action probability vectors, $\tilde{A}_t$ is sampled from $\nu$, and $\tilde{P}_{t+1}$ is the next epistemic state realized as a consequence.  The objective can be thought of as a {\it conditional information ratio}, with the numerator determined by the conditional expectation of shortfall and the denominator a measure of conditional information gain.  To understand the latter, it is helpful to consider the special case in which the target $\target$ is determined by the environment, the epistemic state is the entire history $P_t = H_t$, and the observation $O_{t+1}$ reveals no information beyond the reward $R_{t+1}$.  In this case, the denominator becomes
$$\E[\I(\target; \environment|P_t \leftarrow P_t) - \I(\target; \environment| P_{t+1} \leftarrow \tilde{P}_{t+1}) | P_t] = \I(\target; \tilde{A}_t, \tilde{R}_{t+1} | P_t \leftarrow P_t),$$
where $\tilde{R}_{t+1}$ is the reward realized as a consequence of action $\tilde{A}_t$.  This is the number of bits about $\target$ revealed by $(\tilde{A}_t, \tilde{R}_t)$.

As originally observed by \citet{russo2014learning} and explained in Appendix \ref{ap:support-cardinality}, the objective of (\ref{eq:MAB-IDS}) is convex and the minimum can be attained by randomizing between no more than two actions.  In other words, it suffices to consider two-sparse vectors $\nu$.  This can help to keep the optimization problem computationally manageable.

In bandit environments that call for thoughtful information-seeking behavior, IDS often outperforms Thompson sampling.  As we will demonstrate in Section \ref{se:computation}, the performance difference can be dramatic. 
The conditional information ratio that IDS optimizes is by definition upper bounded by that of Thompson sampling given the epistemic state. For the bandit environments discussed earlier, the bounds on Thompson sampling's information ratios are in fact proven for the conditional version \citep{russo2014learning,russo2020satisficing,NEURIPS2018_f3e52c30}. Thus, the regret bounds discussed earlier, which applied to Thompson sampling agents, also apply to IDS agents.

\vspace{0.1in}
\noindent{\bf Upper-Confidence Bounds}

\noindent Upper-confidence bounds (UCBs) offer an alternative approach to agent design \citep{lai1985asymptotically,lai1987adaptive,auer2002finite,bubeck2012regret}.  As explained in \citet{russo2014posterior}, UCB algorithms are closely related to Thompson sampling and can often be analyzed using similar mathematical techniques.  As discussed in \citet{lu2019information-confidence,xlu2020dissertation}, one can also study UCB algorithms via the information ratio.  In particular, that work bounds information ratios of suitably designed UCB algorithms that address beta-Bernoulli bandits, Gaussian-linear bandits, and tabular Markov decision processes.  The regret bounds presented in that line of work can also be established using Theorem \ref{th:regret-bound}.

\subsection{Thompson Sampling for Episodic ``Ring'' Markov Decision Processes}
\label{se:episodic-mdp}

A Markov decision process (MDP) is an environment $\environment = (\actions, \observations, \rho)$ for which each observation $O_t$ serves as a sufficient statistic of the preceding history $H_t$.  In other words, observation probabilities depend on history only through the most recent observation.  It is natural to take the situational state to be observation $S_t = O_t$, except for the deterministic initial state $S_0$.  Further, let $\states = \observations$ and $\rho(s'|s,a) = \rho(o|h,a)$ if $o=s'$ and $h = (\ldots, s)$.

The use of Thompson sampling for episodic MDPs, as introduced by 
\citet{DBLP:conf_icml_Strens00}, is often referred to as {\it posterior sampling for reinforcement learning} (PSRL).  The algorithm has been analyzed extensively in the literature \citep{NIPS2013_6a5889bb,pmlr-v70-osband17a,NIPS2017_51ef186e,NIPS2017_3621f145,lu2019information-confidence,xlu2020dissertation}, leading to several of regret bounds.  Here we examine whether our information theoretic approach, and Theorem \ref{th:regret-bound} in particular, implies a regret bound for PSRL.

\begin{figure}[htb]
\begin{center}
\includegraphics[scale=0.25]{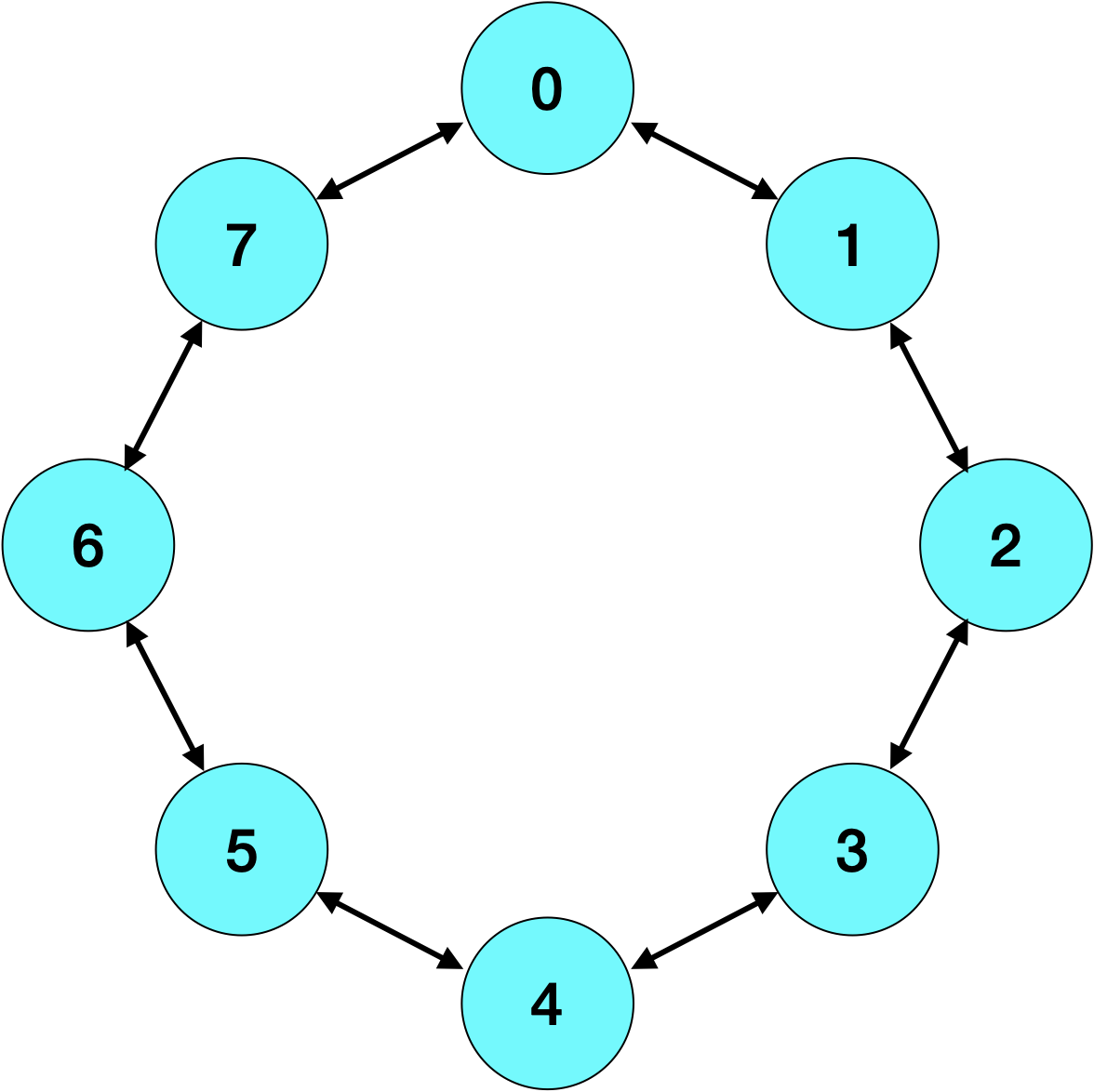}
\caption{In our example of an episodic MDP, state encodes location and phase.  Within an episode, location increases or decreases by one, modulo the number $M$ of locations, over each timestep.  This figure illustrates locations and possible intraepisodic transitions for the case of $M=8$.  At the end of each episode, the location transitions to $0$.}
\label{fig:EpisodicRing}
\end{center}
\end{figure}

In this section, we consider an MDP with which an agent interacts over episodes of fixed duration $\tau$.  In particular, the state sequence renews at the end of each episode, when it returns to a distinguished state $S_0$.

Conditioned on an episodic MDP of the sort we have described, an optimal policy can be derived by planning over $\tau$ timesteps.  To keep its structure simple, we take the state space to be $\states = \{0,\ldots,M-1\} \times \{0,\ldots,\tau-1\}$ for some positive integer $M$, so that each state is a pair $(m,k)$. Let $\states_k = \{(m, k): m \in \{0, \dots, M-1\}\}$. The optimal action value function $Q_{\tau,\rho}$ for the planning problem uniquely solves Bellman's equation
$$Q_{\tau,\rho}(s,a) = \left\{\begin{array}{ll}
\sum_{s' \in \states} \rho(s'|s,a) (r(s,a,s') + \max_{a' \in \actions} Q_{\tau,\rho}(s',a')) \quad & \text{if } s \notin \states_{\tau-1} \\
r(s, a, S_0) \quad & \text{otherwise.}
\end{array}\right.$$
Given $Q_{\tau,\rho}$, any policy that selects greedy actions $A_t \in \argmax_{a \in \actions} Q_{\tau,\rho}(S_t, a)$ is optimal.  Further, letting $V_{\tau,\rho}(s) = \max_{a \in \actions} Q_{\rho,\tau}(s,a)$, we have $V_*(H_t) - Q_*(H_t,\cdot) = V_{\tau,\rho}(S_t) - Q_{\tau,\rho}(S_t,\cdot)$.

We will make an additional assumption that simplifies our example without forgoing essential insight.  Given a state $s = (m,k) \in \states$, we write $s+1$ and $s-1$ as shorthand for $(m + 1 \mod{M}, k + 1 \mod{\tau})$ and $(m - 1 \mod{M}, k + 1 \mod{\tau})$.  We will assume that from any state $s \notin \states_{\tau-1}$ it is only possible to transition to $s+1$ and $s-1$.  With this assumption, the environment can be identified by assigning a probability $\rho(s+1|s,a)$ to each state-action pair $(s,a)$.  To keep things simple, let us assume that each $\rho(s+1|s,a)$ is independent and uniformly distributed over the unit interval.


Let us take the epistemic state $P_t$ to be the posterior distribution $\Pr(\cdot | H_t)$.  
Note that, since the uniform distribution is a beta distribution, conditioned on the history $H_t$, each $\rho(s+1|s,a)$ remains independent and beta distributed.

As an agent policy $\pi_{\rm agent}$, consider a version of Thompson sampling \citep{DBLP:conf_icml_Strens00,NIPS2013_6a5889bb}, referred to as $\pi_{\rm TS}$, which at the beginning of each $\ell^{\mathrm{th}}$ episode, executes the following steps: first, it samples $\hat{\rho}_\ell$ independently from $P_{\ell \tau}$; second, it computes the associated action value function $\hat{Q}_\ell = Q_{\tau,\hat{\rho}_\ell}$; finally, it generates a policy $\hat{\pi}_\ell$ that is greedy with respect to $\hat{Q}_\ell$, for which
$$\mathrm{support}(\hat{\pi}_\ell(\cdot|s)) \subseteq \argmax_{a \in \actions} \hat{Q}_\ell(s,a).$$

As is detailed in Appendix~\ref{se:mdp-analysis}, for any integer $m \geq 2$ and its reciprocal $\delta=1/m$, we define a learning target $\target$ to be a quantized approximation of $\rho$ for which $\target(s+1|s,a) = \delta \lceil \rho(s+1|s,a)/\delta \rceil$ and $\target(s-1|s,a) = 1 - \target(s+1|s,a)$. With this quantized learning target $\target$, Theorem~\ref{th:mdp-regret-bound} in Appendix~\ref{se:mdp-analysis} shows that under an ``optimism conjecture'' (Conjecture \ref{conj:ts-optimism}), which we support through simulations but leave as an open problem, we have
\begin{align*}
    \Regret(T | \pi_{\rm TS}) \leq &  \, \mathcal{O} \left(
    \tau^2 \sqrt{\ln \left(\frac{1}{\delta} \right) \ln \left(\frac{\states \actions}{\delta} \right)} \left[\sqrt{\states \actions T } + T \sqrt{\delta} \right]
    \right).
\end{align*}
At a high level, this regret bound is derived as follows. We first derive an upper bound $\bar{\Gamma}$ on the $(\tau, \epsilon)$-information ratio $\Gamma_{\tau, \epsilon, t}$ under $\pi_{\rm TS}$, with
\[
\epsilon= \left[3 \delta + \sqrt{ 6 \max \left \{ 3, \ln \left(\frac{2}{\delta} \right)\right \} \delta \ln \frac{2 \states \actions}{\delta}} \right] \tau^2.
\]
Then, we apply Theorem~\ref{th:regret-bound} to derive a regret bound based on $\bar{\Gamma}$, $\I(\target; \environment)$, and $\epsilon$. Finally, we bound the mutual information $\I(\target; \environment)$ by
$
\I(\target; \environment) \leq \H(\target) \leq \states \actions \ln \frac{1}{\delta}$. 


\subsection{Information-Directed Sampling with Delayed Consequences}
\label{se:chain}

Let us now consider a version of IDS designed for scalability and delayed consequences.  While we provide a more extensive discussion of IDS in Chapter \ref{se:seeking-information}, we describe here a simple version and demonstrate that it efficiently explores in some environments that require deep exploration.  Understanding the extent to which this capability extends to other environments remains an intriguing research direction.

\subsubsection{Algorithm}

We take our learning target to be a target policy $\target=\pi_\target$.  The version of IDS we consider is similar to (\ref{eq:MAB-IDS}), which was suitable for multi-armed bandits.  That objective balances expected shortfall against information revealed by the immediate reward $R_{t+1}$.  The version we consider here operates similarly, except for pretending that the action value $Q_{\pi_\target}(H_t, A_t)$ would be observed instead of $R_{t+1}$, and that the information is only relevant to the target through $\pi_\target(\cdot|S_t)$.  In particular, the objective becomes
\begin{equation}
\label{eq:basic-value-IDS}
\min_{\nu \in \Delta_\actions} \frac{\E\left[V_{\pi_\target}(H_t) - Q_{\pi_\target}(H_t, \tilde{A}_t) \big| X_t\right]_+^2}{\I(\pi_\target(\cdot|S_t); \tilde{A}_t, Q_{\pi_\target}(H_t, \tilde{A}_t) | X_t \leftarrow X_t)},
\end{equation}
where $\tilde{A}_t$ is sampled from $\nu$.  This is a special case of {\it value-IDS}, which we will present at greater length in Section \ref{se:value-IDS}.

\subsubsection{Analysis}

Analysis of value-IDS remains an active area of research.  Computational results in Chapter \ref{se:computation} demonstrate promise for value-IDS as a scalable and data-efficient approach to action selection.  As a sanity check, we apply Theorem \ref{th:regret-bound} and establish in Appendix \ref{ap:chain} a regret bound for the version of IDS specified by (\ref{eq:basic-value-IDS}) applied to a simple class of environments.

Consider an episodic environment $\environment = (\actions, \observations, \rho)$ with actions $\actions = \{0,1\}$ and observations $\observations = \{(0,0), (0,1), 0, 1, \ldots,2\tau-2\}$.  The environment is parameterized by $r_0,\ldots,r_{\tau-2} \in [0,1)$ and $r_{\tau-1} \in \{0,1\}$.  An illustration of the state dynamics and rewards can be found in Figure \ref{fig:chain}. Conditioned on $\environment$, observations are deterministic, with
$$O_{t+1} = \left\{\begin{array}{ll}
O_t + \tau \qquad & \text{if } O_t \in \{0, \ldots, \tau-2\} \text{ and } A_t = 0 \\
O_t + 1 \qquad & \text{if } O_t \in \{0, \ldots, \tau-2\} \text{ and } A_t = 1 \\
\tau \qquad & \text{if } O_t \in \{(0,0), (0,1)\} \text{ and } A_t = 0 \\
1 \qquad & \text{if } O_t \in \{(0,0), (0,1)\} \text{ and } A_t = 1 \\
(0,r_{\tau-1}) \qquad & \text{if } O_t = \tau-1 \\
O_t + 1 \qquad & \text{if } O_t \in \{\tau,\ldots, 2\tau-3\} \\
0 \qquad & \text{if } O_t = 2\tau-2.
\end{array}
\right.
$$
This environment is deterministic, in the sense that $O_{t+1}$ is determined by $O_t$ and $A_t$.  The environment is also episodic, since $O_t \in \{(0,0), (0,1), 0\}$ if $t$ is a multiple of $\tau$.  It is natural to take the situational state to be $S_t = 0$ if $O_t = (0, r_{\tau-1})$ and, otherwise, $S_t = O_t$.  Rewards are determined by state and action, according to
$$R_{t+1} = \left\{\begin{array}{ll}
r_{S_t} \qquad & \text{if } S_t \in \{0,\ldots, \tau-2\} \text{ and } A_t = 0 \\
r_{S_t} \qquad & \text{if } S_t = \tau-1 \\
0 \qquad & \text{otherwise.}
\end{array}
\right.
$$

\begin{figure}[htb]
\begin{center}
\includegraphics[scale=0.25]{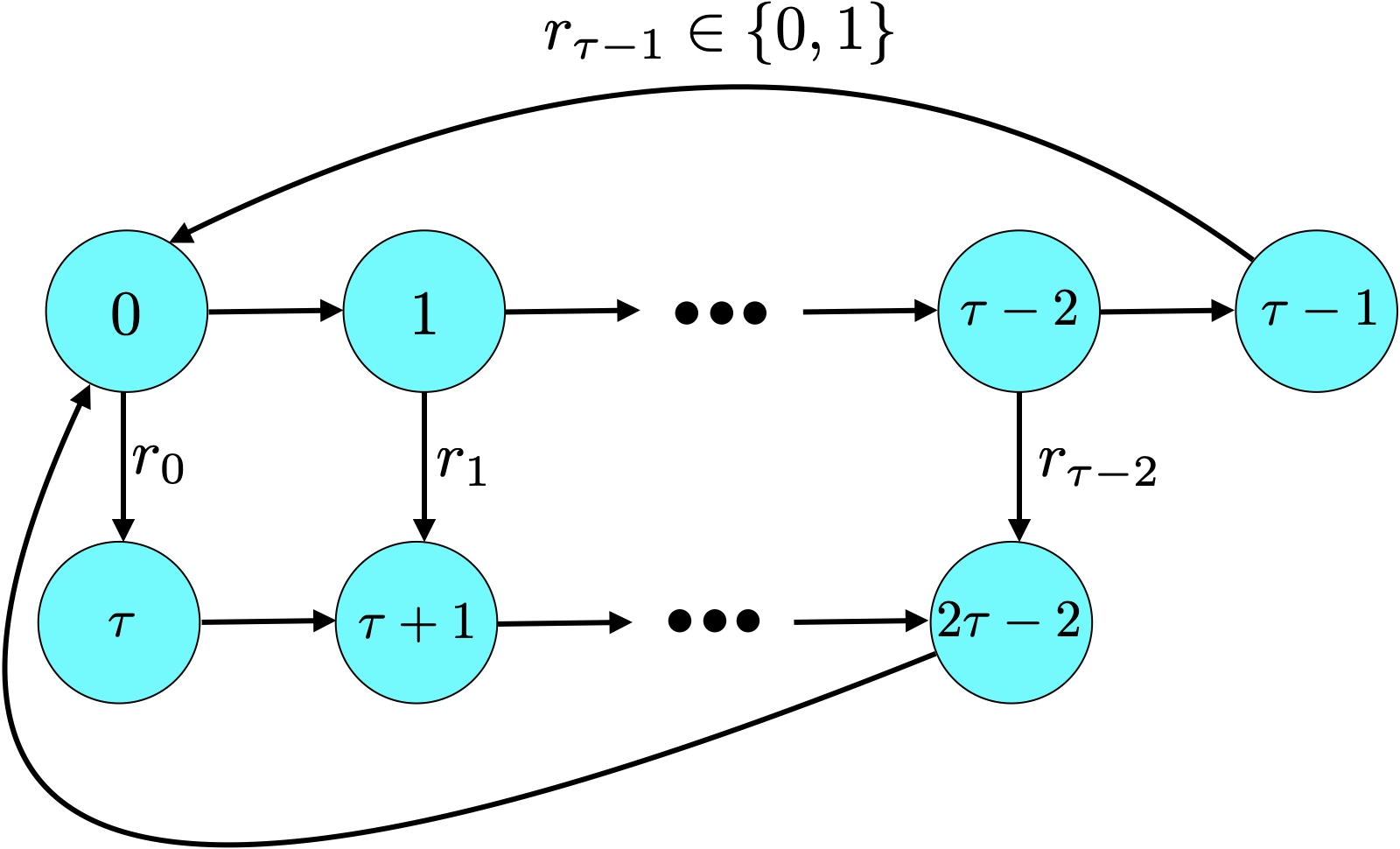}
\caption{A simple class of deterministic episodic environments, parameterized by episode duration $\tau$.  Edges represent possible state transitions, labeled with rewards.  The absence of a label indicates zero reward.}
\label{fig:chain}
\end{center}
\end{figure}

We consider a prior distribution $\Pr(\environment \in \cdot)$ with only two instances in its support.  Epistemic uncertainty arises only from an unknown value of $r_{\tau-1}$, for which $\Pr(r_{\tau-1} = 0) = \Pr(r_{\tau-1} = 1) = 1/2$.  Transition dynamics and the remaining reward values $r_0,\ldots,r_{\tau-2}$ are known.  Hence, to identify $\environment$, the agent need only observe the reward $r_{\tau-1}$ received upon leaving state $S_t = \tau-1$. For simplicity, assume that the exiting rewards are in a decreasing order, $1 > r_0 > r_1 > \dots > r_{\tau-2} > 0$, although the results in this section generalize to allow for any exiting rewards in $[0, 1)$.

This environment features delayed consequences. In particular, the agent needs to consecutively execute action $1$ in order to observe $r_{\tau-1}$. Note that the 1-information ratio is insufficient to guide efficient exploration in this example, because there is no information to be gained in the first $\tau - 1$ steps, resulting in infinite 1-information ratios. In general, 1-information ratios are unsuitable for representing the trade-off between sacrificing immediate reward and acquiring delayed information. For this environment, the regret bound in Theorem \ref{th:regret-bound} would be vacuous for all algorithms if using 1-information ratios, as some of the ratios are infinite. In contrast, a $\tau$-information ratio that considers information gain over multiple timesteps is more suitable for reasoning about sacrificing immediate reward for delayed information. Coincidentally, $\tau$ is also the horizon of this episodic environment, but any $\tau$ that is greater than the episode duration would be able to account for delayed information in this example.

The challenge in this environment is that the agent needs to consistently execute action $1$ in an episode in order to observe $r_{\tau - 1}$.  Simple dithering exploration schemes, such as $\epsilon$-greedy, can require in expectation an exponential number of episodes to reveal the value of $r_{\tau-1}$. This leads to regret exponential in $\tau$ (except for special cases in which the gap $1-r_s$ also decreases exponentially with $\tau$ for any $s$).

As established in Appendix \ref{ap:chain}, value-IDS interestingly avoids this exponential dependence for any $r_0, \dots, r_{\tau-2} \in [0, 1)$, despite the fact that it may randomize among actions during each timestep.  Because there are only two possible outcomes for $\pi_*$ with equal probability, we have $\H(\target) = \H(\pi_*) = 1$ bit, that is, the agent need only learn a single bit of information.  Further, Appendix \ref{ap:chain} establishes that the information ratio satisfies $\Gamma_{\tau, t} \leq \tau/8$, which together with Theorem \ref{th:regret-bound} yields
$$\Regret(T|\pi_{\rm agent}) \leq \frac{1}{2} \sqrt{\frac{\tau}{2} T}.$$
The regret bound avoids an exponential dependence on $\tau$, which implies that value-IDS efficiently explores in this environment, relative to dithering schemes.  In Chapter \ref{se:computation}, we further illustrate how variants of value-IDS scale to more complex environments.

\subsubsection{Limitations and Open Issues}

Let us close this section with a couple comments that may temper what readers infer from this regret bound.  First, while it is a special case of Theorem \ref{th:regret-bound}, the analysis presented in Appendix \ref{ap:chain} requires complicated calculations from which it is difficult to draw insight.  This subject would benefit from a more general and transparent analysis.  Second, \citet{qin2023technote} recently presented a similar class of environments that do not satisfy graceful regret bounds.  To offer a representative example, suppose that the exiting rewards are $r_s = 1 - \alpha^{\tau-1-s}$ for $s=0,\ldots,\tau-2$ and some $\alpha \in (0, 1)$, and $\Pr(r_{\tau-1} = 1) = \delta = 1- \Pr(r_{\tau-1} = 0)$ for some $\delta \in (0,1)$.  When $\delta=1/2$, we recover an example that satisfies the above regret bound, which scales with $\sqrt{\tau}$.  However, results of \citet{qin2023technote} establish that for sufficiently small $\delta$, $\sup_T \Regret(T|\pi_{\mathrm{agent}}) / \sqrt{T}$ grows exponentially in $\tau$.  While this result raises concerns about performance, simulations paint a qualitatively different picture.  In particular, Figure \ref{fig:chain_efficiency} plots the number of episodes required to attain expected average regret within tolerance $\epsilon$ as a function of $\epsilon$.  In contrast to what is suggested by \citet{qin2023technote}, the numbers point out that value-IDS learns efficiently across different combinations of $\alpha$, $\delta$, and $\tau$. Specifically, given tolerance $\epsilon$, Figure \ref{fig:chain_efficiency} shows that the number of episodes required to attain average regret under $\epsilon$ scales gracefully with $\tau$.  Understanding whether this sort of graceful behavior extends to other environments and why the mathematical framing of efficiency in terms of such regret bounds does not reflect that pose intriguing directions for future research.

\begin{figure}[htb]
\begin{center}
\includegraphics[scale=0.38]{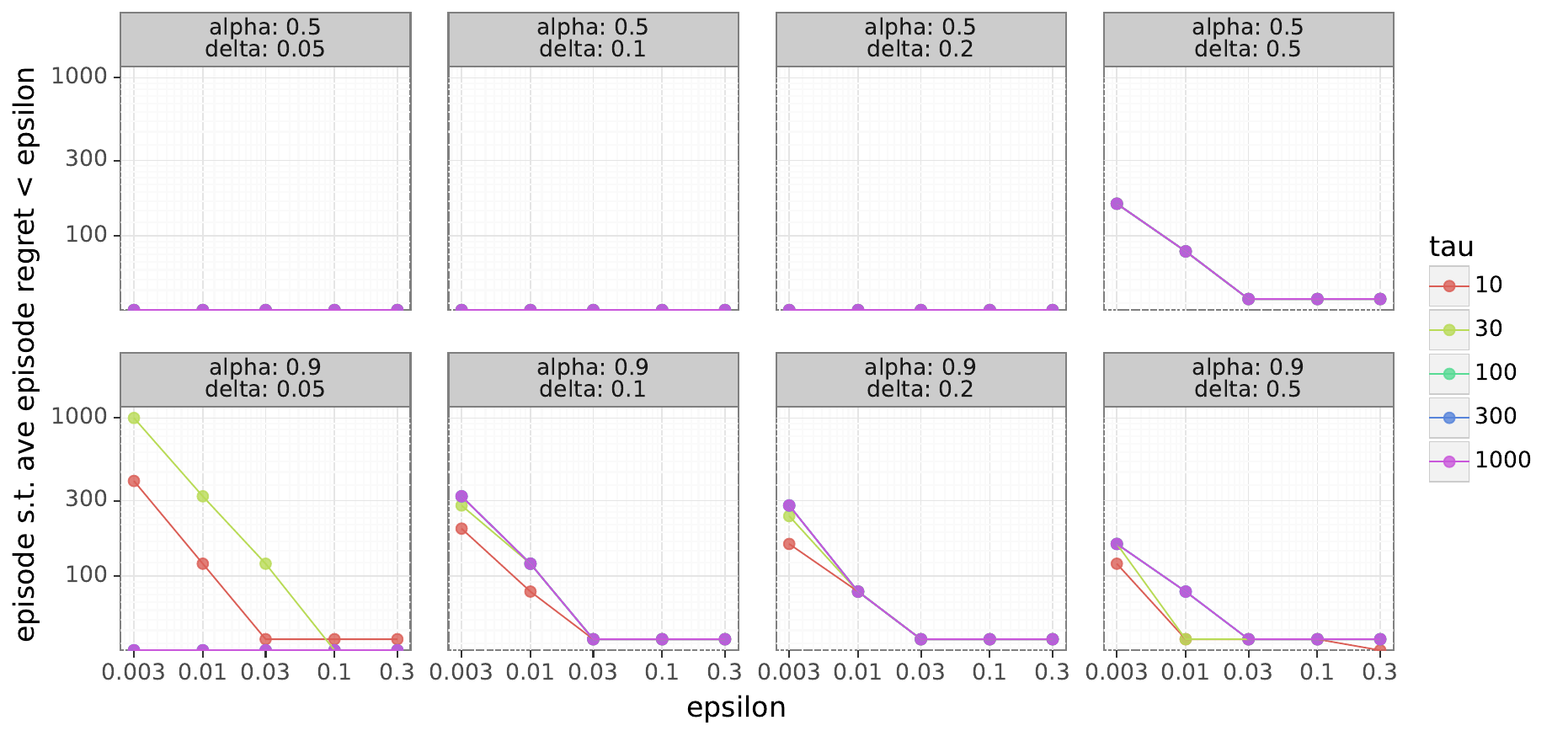}
\caption{The number of episodes required for value-IDS applied to a chain environment posed by \citet{qin2023technote} to attain expected average regret within tolerance $\epsilon$.  These results suggest that value-IDS learns efficiently across all horizons $\tau$, terminal reward prior probabilities $\delta$, and reward parameters $\alpha$.}
\label{fig:chain_efficiency}
\end{center}
\end{figure}

\chapter{Retaining Information}
\label{se:retaining-information}

In this chapter, we aim to provide insight into how the design of environment proxies and epistemic state dynamics influence information retention and regret.  We begin in Section \ref{se:point-estimate} by discussing the importance of retaining an epistemic state more than a point estimate of the proxy.  Section \ref{se:epistemic-update} follows up with considerations pertinent to design of epistemic state dynamics.  It may seem natural to take the environment proxy to be a target policy, but as we will discuss in Section \ref{se:target-vs-proxy}, using a richer proxy typically improves performance.  We then discuss a few possibilities, including value functions in Section \ref{se:vf-proxy}, general value functions in Section \ref{se:gvf}, and generative models in Section \ref{se:generative-models}.

\section{Epistemic State}

Recall that the agent represents its knowledge about the environment via an epistemic state $P_t$, which evolves according to
$$P_{t+1} = f_{\rm epis}(X_t, A_t, O_{t+1}, U_{t+1}),$$
for some update function $f_{\rm epis}$ and source $U_{t+1}$ of algorithmic randomness.  Epistemic state dynamics are generally designed to prioritize retention of information about an environment proxy $\proxy$.  This is quantified by the mutual information $\I(\proxy; P_t)$, which measures the number of proxy-relevant bits retained by $P_t$.

\subsection{Point Estimates Versus More Informative Epistemic States}
\label{se:point-estimate}

Common agent designs retain a point estimate of an environment proxy such as a policy, a value function, a general value function, or a generative model of environment dynamics.  For example, value function learning agents typically maintain and incrementally update an action value function, which can be thought of as a {\it best guess} of $Q_*$.  Such agents often retain no additional information that characterizes uncertainty about the point estimate.  However, representing such epistemic uncertainty is critical both to updating the epistemic state in a way that retains essential new information and gathering informative data.  As explained in \citet{osband2019deep}, for example, such point estimates are not sufficient to enable deep exploration.

Limitations of point estimates can be demonstrated via the simple coin tossing environment of Example \ref{ex:coin-tossing}.  Suppose that an agent's epistemic state is a vector of point estimates $P_t \in [0,1]^M$ of the heads probabilities for each of the $M$ coins.  Without further information retained to estimate epistemic uncertainty, the agent cannot efficiently update its estimate $P_{t, A_t}$ upon observing a toss of coin $A_t$.  In particular, the amount by which the point estimate changes ought to depend on the agent's confidence about its current value, which is not encoded in the point estimate.  Beyond that, epistemic uncertainty is also essential to seeking useful data.  The need to represent information beyond point estimates has been emphasized by the Bayesian reinforcement learning literature, as surveyed in \citet{vlassis2012bayesian,ghavamzadeh2012bayesian}.

Despite the aforementioned limitations, an agent can be designed to operate with point estimates as epistemic state.  For example, an agent could select, with probability $1-\epsilon$, the coin with the largest point estimate, and otherwise, sample uniformly.  The point estimate can then be adjusted in response to the observed outcome via an incremental stochastic gradient step that reduces cross-entropy prediction error.  However, because this exploration scheme and update process do not account for epistemic uncertainty, the information ratio is typically much larger than what is achievable.  Analogous shortcomings arise with value function learning agents that operate in more complex environments by incrementally updating point estimates and engaging in $\epsilon$-greedy exploration.


\subsection{Epistemic State Dynamics}
\label{se:epistemic-update}

Even when the epistemic state retains information that can be used to estimate epistemic uncertainty, state dynamics prescribed by the update function $f_{\rm epis}$ influence the regret.  Our regret bound suggests an interpretation of this dependence through the information ratio, which balances between expected shortfall and information gain.  Recall that the latter is quantified by $\I(\target; \environment|P_t) - \I(\target; \environment|P_{t+\tau})$, which is the incremental number of environment-relevant bits retained about $\target$.  A well-designed agent ought to trade off between maximizing information gain and minimizing expected shortfall.

Consider, for example, a linear bandit environment $\environment = (\actions, \observations, \rho)$ for which $\actions \subset \Re^d$, $\observations \subset \Re$, and
$$R_{t+1} = O_{t+1} = \theta^\top A_t + W_{t+1},$$
for an unknown $d$-dimensional vector $\theta$ distributed as $N(\mu_0, \Sigma_0)$ and i.i.d. noise $(W_{t+1}:t=0,1,2,\ldots)$ with $\E[W_t]=0$ and $\Var[W_t]=\sigma^2$.  Let us take $\proxy = \theta$ to be the environment proxy and an optimal action $A_* \in \argmax_{a \in \actions} \theta^\top a$ to be the learning target $\target = A_*$.  If $d$ is not too large, a natural choice of epistemic state is given by $P_t = (\mu_t, \Sigma_t)$, updated via the Kalman filter according to
$$\Sigma_{t+1} = (\Sigma_t^{-1} + A_t A_t^\top / \sigma^2)^{-1}$$
$$\mu_{t+1} = \Sigma_{t+1} (\Sigma_t^{-1} \mu_t + A_t R_{t+1} / \sigma^2).$$
With Gaussian noise, this update retains all relevant information, in the sense that $\Pr(\proxy \in \cdot | P_t) = \Pr(\proxy \in \cdot | H_t)$.  This is not necessarily the case, though it may still hold in some approximate sense, with non-Gaussian noise.

The epistemic state we have described entails maintaining $d^2$ parameters and incurring per-timestep computation scaling with $d^2$.  When $d$ is very large, this becomes infeasible.  One can instead consider alternative epistemic states and dynamics that approximate this process with $O(d)$ parameters.  For example, incremental gain adaptation schemes by \citet{sutton1992gain} can serve this need.  Such epistemic state dynamics decrease information retention relative to the Kalman filter.  However, this reduction may be required due to computational considerations, and the information ratio may remain sufficiently small to yield a reasonably data-efficient agent.

\section{Environment Proxies}

The choice of environment proxy can play a crucial role in guiding epistemic state dynamics to retain some and discard other information.  We will discuss in this section several proxies that are commonly used in reinforcement learning, but before starting, to anchor our discussion of abstract concepts, let us revisit the role played by a proxy in the context of the DQN agent discussed in Section \ref{se:dqn}.

The DQN agent relies on a proxy $\proxy = Q_{\tilde{\theta}}$ that is a neural network representation of an action value function, with weights $\tilde{\theta}$.  One can take $\proxy$ to be the neural network produced by minimizing a loss function that assesses its desirability given the environment $\environment$.  In this case, assuming there is a unique minimizer, $\proxy$ is determined by $\environment$ and can be thought of as a lossy compression.  More generally, one could take $\proxy$ to be what would be produced by some optimization algorithm, for example, stochastic gradient descent, that minimizes loss given data generated from interacting with the environment for, say, a trillion timesteps.  In this case, $\proxy$ may be determined not solely by the environment but also by algorithmic and aleatoric randomness.

\subsection{Policies}
\label{se:target-vs-proxy}

The environment proxy $\proxy$ prioritizes information to retain while the learning target $\target$ prioritizes information to seek.  Suppose that the designer uses a class of policies $\pi_\theta$, parameterized by $\theta$.  The learning target could then be a policy $\target = \pi_{\tilde{\theta}}$ that delivers desirable performance.  One could also consider taking the proxy to be the policy $\proxy=\target=\pi_{\tilde{\theta}}$.  While it may be a suitable choice in some contexts, as we will explain, this can lead to an information ratio far worse than the best achievable because retaining additional information facilitates efficient subsequent learning.

To keep things simple, suppose that an optimal policy $\pi_* = \pi_{\tilde{\theta}}$ is within the parameterized class and taken to be the learning target.
If all and only information about $\pi_*$ is retained by the epistemic state then the agent maintains exactly enough to construct the posterior distribution $\Pr(\pi_* \in \cdot|H_t) = \Pr(\pi_* \in \cdot | P_t)$.  This knowledge is insufficient to efficiently update the epistemic state.  To understand why, let us consider the coin tossing environment in Example \ref{ex:coin-tossing}.

For this environment, it is natural to take the set of situational states to be a singleton, in which case $\pi_*$ and $r$ need not depend on the situational state.  Without loss of generality, consider an optimal policy $\pi_*$ that assigns probability one to the first optimal action $A_* = \min\{a \in \actions: p_a \geq \max_{a' \in \actions} p_{a'}\}$.  Then, the posterior distribution $\Pr(\pi_* \in \cdot|P_t)$ is equivalent to $\Pr(A_* \in \cdot|P_t)$.  With some abuse of notation, we will write $P_t(\cdot) = \Pr(A_* = \cdot|P_t)$.

If the epistemic state $P_t$ retains only enough information to recover this posterior distribution, then it cannot be incrementally updated in a manner that makes effective use of new information. To see this, suppose that $P_t(1) = P_t(2) = 1/2$. If $A_t = 1$ and $O_{t+1} = 1$ then $P_{t+1}(1)$ should be assigned a larger value than $P_t(1)$, but neither epistemic state nor observation can guide magnitude of the difference.  If the posterior distribution $P_t(\cdot)$ is highly concentrated, $P_{t+1}(1)$ should not differ substantially from $P_t(1)$.  On the other hand, if the distribution is diffuse, $P_{t+1}(1)$ can be much larger than $P_t(1)$.  Indeed, $P_t$ is insufficient to retain much new information.

The issue we have highlighted applies to all so-called {\it incremental policy learning} approaches to reinforcement learning, which retain in epistemic state {\it only} information about policies and discard past observations.  Perhaps this is a reason why such approaches are so inefficient in their data usage and are applied almost exclusively to simulated environments.

\subsection{Value Functions}
\label{se:vf-proxy}

Value functions are commonly used as environment proxies.  The most common design, as employed by DQN, involves an approximation $\proxy = Q_{\tilde{\theta}}$, within a parameterized class, to the optimal action value function $Q_*$.  The learning target could be, for example, the proxy itself or a greedy policy with respect to $Q_{\tilde{\theta}}$.  The latter target avoids actively seeking information to refine action value estimates that will not improve decisions.

As discussed earlier, incremental Q-learning algorithms maintain a point estimate of an action value function, though in the interest of data-efficiency, it is important to retain a richer epistemic state.  DQN agents retain a somewhat richer epistemic state through the addition of a replay buffer.  With the right update scheme, this may dramatically improve data efficiency in some simple environments, as demonstrated in \citet{dwaracherla2020langevin} using a variation of Langevin stochastic gradient descent \citep{WelTeh2011a}. However, scaling this approach to address complex environments is likely to require prohibitive increases in memory and computation, as the associated data needs to be stored and repeatedly processed.

An alternative is to use an ensemble of point estimates, as in the simplified ensemble-DQN example from Section \ref{se:dqn} and more sophisticated approaches such as those discussed in \citet{osband2016deep,osband2018randomized}.  In this case, the variation among point estimates reflects epistemic uncertainty.  This approach has proven fruitful in addressing environments that require deep exploration.  One drawback of ensemble-based approaches is that computational demands grow with the number of elements, and because of this, agent designs typically use no more than ten point estimates, which limits the fidelity of uncertainty estimates.  Design of neural network architectures that enable the benefits of large ensembles with manageable computational resources is an important and active area of research \citep{Dwaracherla2020Hypermodels,oswald2021neural}.  As we will discuss further in Section \ref{se:computation}, {\it epistemic neural networks} offer a general framing of such architectures.

The aforementioned approaches can be thought of as approximating the posterior distribution of a value function.  A growing literature develops approaches beyond ensembles to approximating such posterior distributions \citep{o2018uncertainty,o2018variational,Dwaracherla2020Hypermodels} or, alternatively, posterior distributions over temporal-differences \citep{flennerhag2020temporal}.   Approximations to value function posterior distributions have been studied in the more distant past by \citet{engel2003bayes,engel2005reinforcement}.  That work proposed Gaussian representations of posterior distributions that can be incrementally updated via temporal-difference learning.  The computational methods we apply in Section \ref{se:computation} share much of this spirit.

Another form of proxy augmentation entails tracking {\it counts} of state-action pairs sampled in the history used to train a point estimate \citep{NIPS2017_3a20f62a,NEURIPS2018_d3b1fb02}.  Uncertainty in an action value can then be inferred from the associated counts.  While this approach enables deep exploration, one shortcoming is that it does not represent interdependencies across state-action pairs.  A richer epistemic state can capture these interdependencies and enhance data efficiency.

While the use of action value functions as proxies enables more sophisticated and data-efficient behavior relative to policies, it imposes fundamental limitations.  In particular, action value functions predict future rewards, and, as such, retain only information relevant to making such predictions.  Hence, associated agents learn only from rewards.  Observations typically carry much more information that can help an agent learn how to operate effectively. Especially in a complex environment, forgoing this information can hurt data efficiency.  In terms of our regret bound, learning from rich observations can dramatically reduce the information ratio, as it increases information retention and also potentially enables more sophisticated action selection schemes. An example of what additional information to retain can be found in the next section, and a concrete computational example may be found in Section \ref{se:gvf_comp}.

\subsection{General Value Functions}
\label{se:gvf}

General value functions (GVFs) serve as a proxy that can guide an agent to retain useful information from observations beyond rewards \citep{sutton2011horde,pmlr-v37-schaul15}.  To understand their role, first recall that an action value function 
$$Q_\pi(h,a) = \overline{r}_{ah} + \left(P_a \sum_{t=|h|+1}^{T-1} P_\pi^{t-|h|-1} \overline{r}_\pi\right)(h)$$
represents expected cumulative reward from taking action $a$ and then executing policy $\pi$.  This notion can be generalized to discounted value
$$Q_{\pi,\gamma}(h,a) = \overline{r}_{ah} + \gamma \left(P_a \sum_{t=|h|+1}^{T-1} \gamma^{t-|h|-1} P_\pi^{t-|h|-1} \overline{r}_\pi\right)(h),$$
where $\gamma$ is a discount factor in $[0,1]$.  If $\gamma = 1$ then $Q_{\pi,\gamma} = Q_\pi$.  For $\gamma < 1$, this value function emphasizes near term over subsequent rewards.  We can further generalize by replacing the reward function $r$ with a cumulant function $c:\states\times\actions\times\observations\mapsto\Re$, arriving at
$$Q_{\pi,\gamma, c}(h,a) = \overline{c}_{ah} + \gamma \left(P_a \sum_{t=|h|+1}^{T-1} \gamma^{t-|h|-1} P_\pi^{t-|h|-1} \overline{c}_\pi\right)(h),$$
where $\overline{c}$ is to $c$ as $\overline{r}$ is to $r$.
If $c=r$ then $Q_{\pi,\gamma, c} = Q_{\pi,\gamma}$.  Cumulants serve to measure statistics of future observations that are not necessarily reflected by rewards.  They can be thought of as {\it features of the future}.

Consider using as a proxy $\proxy$ a parameterized representation $G_{\tilde{\theta}}: \states\times\actions \mapsto \Re^N$ intended to approximate a vector of $N$ GVFs
$$G_{\tilde{\theta}}(s,a) \approx \left[\begin{array}{c}
Q_{\pi_1,\gamma_1, c_1}(h,a) \\
\vdots \\
Q_{\pi_N,\gamma_N, c_N}(h,a) \\
\end{array}\right],$$
where $\pi$, $\gamma$, and $c$ are vectors of policies, discount factors, and cumulants.  Note that each policy $\pi_n$ could be a random policy like $\pi_*$ or $\pi_\target$, which is initially unknown, or a fixed policy.  Such a proxy can represent much more than an approximation to $Q_*$.  For example, $Q_* = Q_{\pi_*, 1,r}$ could be a single component, while others could reflect statistics that if learned would be helpful to decision-making, either to minimize expected shortfall or maximize information gain.  By using such a proxy, the agent expands the scope of information retained from observations.  As suggested by results of \citet{NIPS2017_1264a061}, which provides a case study that extends a DQN agent using GVFs specialized to the arcade game Ms. PacMan, such a proxy can reduce data requirements by orders of magnitude.  An important area of research in reinforcement learning concerns the design of agents that discover useful general value functions rather than leverage prespecified ones \citep{NEURIPS2019_10ff0b5e}.  In such situations, the proxy can be thought of as encoding, among other things, the vectors $\pi$, $\gamma$, and $c$, which the agent could learn, so that the epistemic state retains information relevant to identifying these objects.

Benefits of GVFs can be studied through the lens of our regret bound.  Well-designed GVFs can dramatically reduce the information ratio.  This may reflect the gains in data efficiency reported in \citet{NIPS2001_1e4d3617}. In Section \ref{se:gvf_comp}, we present simple computational examples that demonstrate some of these benefits.

\subsection{Generative Models}
\label{se:generative-models}

The types of proxies we have discussed -- policies, value functions, and general value functions -- do not enable simulation of agent-environment interactions.  It may sometimes be beneficial to instead take the proxy to be a generative model $\proxy = \tilde{\environment}_{\tilde{\theta}} = (\actions, \observations, \tilde{\rho}_{\tilde{\theta}})$ within a parameterized class.  To keep computational demands manageable, this would typically be much simpler than the true environment $\environment = (\actions, \observations, \rho)$, while still enabling simulation of interactions.  The MuZero agent, for example, is designed around such a proxy \citep{schrittwieser2020mastering}.

Given the class of environments $\tilde{\environment}_{\theta} = (\actions, \observations, \tilde{\rho}_{\theta})$ parameterized by $\theta$, the choice of $\tilde{\theta}$ and thus the proxy $\proxy$ could be specified as a minimizer of a loss function that compares $\tilde{\environment}_{\tilde{\theta}}$ to $\environment$.  The MuZero agent can be viewed as using a proxy based on a loss function that minimizes differences between GVFs associated with $\tilde{\environment}_{\theta}$ versus $\environment$.  Such a loss function may be seen as following the value equivalence principle \citep{GrimmNeuripsValueEquivalence}. With this view, the generative model $\tilde{\environment}_{\tilde{\theta}}$ can be thought of simply as a way of {\rm representing} a proxy comprised of GVFs.

Much remains to be understood about the differing levels of data-efficiency afforded by alternative proxies.  It is possible that generative models will play a critical role in the design of data-efficient agents. The information ratio may offer a useful mechanism for studying the trade-offs.  Further, with a generative model, computationally intensive planning algorithms, such as Monte Carlo tree search \citep{chang2005adaptive,coulom2006efficient,kocsis2006bandit}, can be applied to produce actions.  While this remains to be more fully understood, sentiment arising from the experimental process of designing MuZero suggests that there are benefits to data-efficiency from incorporation of such planning tools \citep{schrittwieser2020mastering}.  It is conceivable that taking such an action generation process to be the learning target and target policy yields a desirable information ratio.


\chapter{Seeking Information}
\label{se:seeking-information}

To learn quickly, an agent must efficiently gather the right data.  This chapter addresses elements of agent design that guide how an agent seeks information.  We first revisit the concept of a learning target.  We then consider issues arising in the design of agents that balance between exploration and exploitation.  Finally, we present information-directed sampling (IDS) as an approach to striking this balance.  We view these concepts and their influence on data-efficiency through the lens of the information ratio and our regret bound.  In fact, IDS is motivated by an intention to minimize the information ratio.

\section{Learning Targets}

While the environment proxy $\proxy$ prioritizes information to retain, the learning target $\target$ prioritizes which information to seek.  One might consider taking the proxy itself to be the learning target.  However, this can sacrifice a great deal of data efficiency.  As an example, consider the many-armed bandit problem described in Section \ref{se:mab}, for which it can be important to seek a satisficing action 
$$\target = \min\left\{a \in \actions: \sum_{o \in \observations} \rho(o|a) r(a,o) \geq \overline{R}_*-\epsilon\right\},$$
rather than an optimal one.  The issue was that the information $\I(\environment; A_*)$ about the environment required to identify the optimal action $A_*$ grows with the cardinality of the action set $\actions$, which could be extremely large.

The satisficing action $\target$ is designed to moderate entropy.  The first plot of Figure \ref{fig:satisficing} presents entropy as a function of the satisficing action parameter $\epsilon$ for a particular distribution over environments.  Each is a multi-armed bandit problem with 5000 arms, with independent mean rewards $\overline{R}_a$ each uniformly distributed over $[0,1]$.  As can be seen from the plot, with $\epsilon=0$, the target and optimal action have identical entropy $\H(\target) = \H(A_*)$, but the entropy of the former decays rapidly as $\epsilon$ increases.  For comparison, we also plot entropy of the optimal action $\H(A_*)$, which does not depend on $\epsilon$.

\begin{figure}[ht]
    \centering
    \includegraphics[scale=0.49]{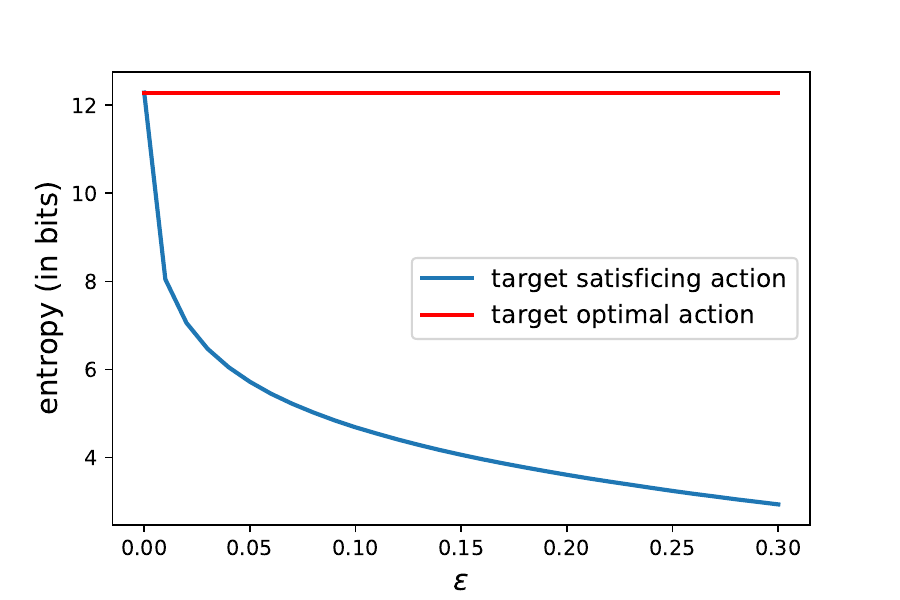}
    \includegraphics[scale=0.49]{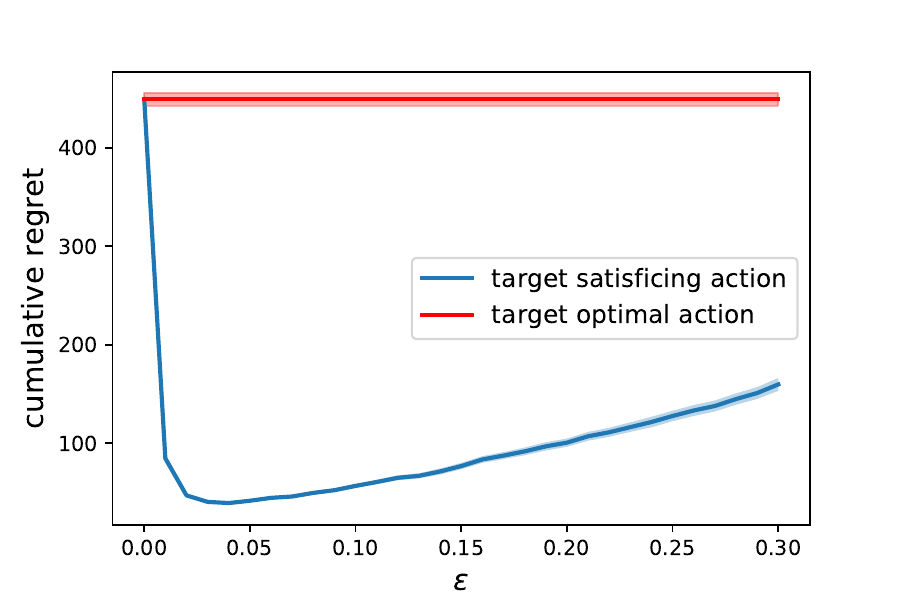}
    \caption{Entropy of the learning target and cumulative regret over 1000 timesteps, as functions of $\epsilon$.}
    \label{fig:satisficing}
\end{figure}

The second plot is of the regret generated by a satisficing Thompson sampling agent after $T=1000$ timesteps, averaged over $200$ independent simulations.  For comparison, we also plot the regret incurred by a Thompson sampling agent with target taken to be the optimal action $A_*$.  The satisficing Thompson sampling agent can be viewed as targeting the satisficing action $\target$.  With $\epsilon = 0$, the learning target is identical to the environment proxy $\proxy = \target$, and thus, regret is identical to that of the Thompson sampling agent.  As $\epsilon$ increases, regret first declines rapidly, but then pivots and grows.  Regret is minimized when $\epsilon$ strikes an optimal balance between $\H(\target)$ and $\Regret(T|\pi_\target)$.

What we have observed in our simple example extends much more broadly.  Rather than an optimal policy, it is often advantageous to target one that is effective but requires fewer bits of information.  The learning target is a tool for prioritizing those bits when the agent explores.  Our example involved a hand-crafted learning target.  Recent work proposes computational methods that automate selection of a learning target to strike a balance between information and regret \citep{arumugam2021deciding}.



\section{Exploration, Exploitation, and the Information Ratio}

A data-efficient agent must select actions that suitably balance between exploration and exploitation.  An optimal balance can be achieved by designing the agent to minimize regret
$$\min_{\pi_{\rm agent}} \Regret(T|\pi_{\rm agent}),$$
subject to bounds on memory and per-timestep computation required to execute $\pi_{\rm agent}$.  In simple special cases, like some multi-armed bandits environments with independent arms, one can tractably solve variations of this problem via Gittin's indices \citep{gittins1974dynamic,gittins1979dynamic}.  More broadly, the problem is typically intractable, and efforts to directly approximate solutions through intensive numerical computation have not led to approaches that scale well to complex environments.

As an alternative to minimizing regret, to simplify the problem, we consider how design decisions influence the regret upper bound established by Corollary \ref{co:regret-bound},
\[\Regret(T|\pi_{\rm agent}) \leq \sqrt{\I(\target; \environment) \sum_{t=0}^{T-1} \Gamma_{\tau,\pi_\target,t}} + \Regret(T|\pi_\target).\]
Choice of the learning target plays an important role here.  A well-designed learning target trades off effectively between regret $\Regret(T|\pi_\target)$ of the target policy, the number of bits $\I(\target; \environment)$ the agent must learn about the environment, and the $(\tau,\pi_\target)$-information ratio $\Gamma_{\tau,\pi_\target,t}$, which reflects how costly it is to learn those bits.  The latter is also heavily impacted by how the agent selects actions that balance between exploration and exploitation.

Recall that the $(\tau,\pi_\target)$-information ratio is given by
$$\Gamma_{\tau, \pi_\target, t} = \frac{\E\left[V_{\pi_\target}(H_t) - Q_{\pi_\target}(H_t,A_t) \right]_+^2}{(\I(\target; \environment|P_t) - \I(\target; \environment|P_{t+\tau}))/\tau}.$$
The numerator is the squared expected shortfall incurred by the agent, while the denominator represents informational benefits over the next $\tau$ timesteps.  When the information ratio is large, the agent pays a steep price per bit.  When it is small, the agent economically learns about the target.  Hence, the information ratio quantifies the agent's efficacy in balancing exploration and exploitation.  Information-directed sampling aims to strike a balance by minimizing variations of the information ratio.

\section{Information-Directed Sampling}

The agent policy $\pi_{\rm agent}$ impacts the above regret bound through the sum $\sum_{t=0}^{T-1} \Gamma_{\tau,\pi_\target,t}$ of information ratios.  Optimization of such a sum appears complex because it requires consideration of interdependencies across time and agent states.  However, the information ratio motivates elegant objectives that, when optimized at each time to produce an immediate action, can strike an effective balance between exploration and exploitation.  Examples of such action selection schemes, which we refer to collectively as {\it information-directed sampling (IDS)}, appeared earlier in Sections \ref{se:mab} and \ref{se:chain}.  In this section, we motivate the design of IDS algorithms, develop versions that address complex information structures, and discuss variations that are amenable to efficient computation.

One might consider optimizing at each time the information ratio $\Gamma_{\tau,\pi_\target,t}$, with $\tau$ chosen to reflect the time horizon over which the agent strategizes about information acquisition.  However, two obstacles arise.  The first is that this involves simultaneously optimizing for each agent state, since $\Gamma_{\tau,\pi_\target,t}$ represents an expectation over possibilities.  This issue can be addressed by instead considering a {\it conditional information ratio} conditioned on the agent state at time $t$
\begin{equation}
\label{eq:conditional-information-ratio}
\frac{\E\left[V_{\pi_\target}(H_t) - Q_{\pi_\target}(H_t, A_t) | X_t\right]_+^2}{\E[\I(\target; \environment|P_t \leftarrow P_t) - \I(\target; \environment|P_{t+\tau} \leftarrow P_{t+\tau}) | X_t] / \tau}.
\end{equation}
The numerator and denominator represent squared expected shortfall and $\tau$-step information gain, each conditioned on the agent state $X_t$.  In the case of bandit environments, for which $\tau=1$ oftentimes represents a suitable choice, this can serve as a practical objective, as has been demonstrated in prior work \citep{russo2014learning,russo2018learning}.  However, for $\tau > 1$, another obstacle arises due to the denominator's dependence on a sequence of future actions.  This introduces coupling between actions at different times, which is what we hope to avoid due to scalability concerns.  Value-IDS, as we will introduce in this section, overcomes this obstacle.

\subsection{Approximating Delayed Information Gain}

One of the obstacles to treating (\ref{eq:conditional-information-ratio}) as an objective for selecting each action stems from dependence of the conditional information gain
$$\E[\I(\target; \environment|P_t \leftarrow P_t) - \I(\target; \environment|P_{t+\tau} \leftarrow P_{t+\tau}) | X_t]$$
on subsequent actions.  To address this, we consider using an approximation that reasons about the future only through the manner in which particular predictions about the future would inform the agent.

Let us begin with the case of a single prediction: the action value function $Q_{\pi_\target}(H_t, A_t)$.  This represents a prediction of future return starting at history $H_t$ if the agent executes $A_t$ and, subsequently, $\pi_\target$.  Note that this prediction depends on the environment $\environment$, not just the epistemic state $P_t$.  As an approximation of conditional information gain, we consider a notion of {\it pseudo-information gain}, given by
$$\I(\target; \environment | X_t \leftarrow X_t) - \I(\target; \environment | A_t, Y_{t+1}, X_t \leftarrow X_t),$$
where $Y_{t+1} = Q_{\pi_\target}(H_t, A_t) + W_{t+1}$  and $W_{t+1}$ is independent noise.  The noise could, for example, be distributed $N(0,\sigma_{\rm noise}^2)$,  for some variance parameter $\sigma_{\rm noise}^2$.  We call $Y_{t+1}$ a {\it pseudo-observation}, as the pseudo-information gain measures information about $\target$ provided by this fictitious observation.  Intuitively, the pseudo-information gain should incentivize the agent to take actions that may lead to information about $Q_{\pi_\target}(H_t, A_t)$ that will be useful for identifying $\target$.

If the environment $\environment$ determines the learning target $\target$, the expression for pseudo-information gain simplifies to
$$\I(\target; A_t, Y_{t+1} | X_t \leftarrow X_t) = \I(\target; \environment | X_t \leftarrow X_t) - \I(\target; \environment | A_t, Y_{t+1}, X_t \leftarrow X_t).$$
While value-IDS can address the more general case where $\target$ may depend on aleatoric or algorithmic uncertainty, to reduce complexity of our exposition in the following sections, we will focus on this simpler case, where $\target$ is determined by $\environment$ and thus only epistemic uncertainty.

Action values $Q_{\pi_\target}(H_t, A_t)$ offer only a narrow view of the environment, and it can be important to broaden this scope.  General value functions offer a tool for accomplishing this.  In particular, given a vector $Q_\dagger$ of general value functions, a more general notion of pseudo-information gain is given by taking the pseudo-observation to be $Y_{t+1} = Q_\dagger(H_t, A_t) + W_{t+1}$, where $W_{t+1}$ is now vector-valued random noise.  Note that $Q_\dagger = Q_{\pi_\target}$ constitutes a special case.  An agent guided by pseudo-information gain from a more informative pseudo-observation can produce sophisticated behaviors that improve data efficiency.  An example is provided in Section \ref{se:computation}.

\subsection{Value-IDS}
\label{se:value-IDS}

We consider generating each action $A_t$ by first computing a vector $\nu$ of action probabilities and then sampling the action according to these probabilities.  Value-IDS selects $\nu$ to optimize a single-timestep objective motivated by the information ratio.  The basic version solves
$$\min_{\nu \in \Delta_\actions} \frac{\E\left[V_{\pi_\target}(H_t) - Q_{\pi_\target}(H_t, \tilde{A}_t) \big| X_t \right]_+^2}{\I(\target; \tilde{A}_t, \tilde{Y}_{t+1} | X_t \leftarrow X_t)},$$
where $\tilde{A}_t$ is sampled from $\nu$ and $\tilde{Y}_{t+1} = Q_\dagger(H_t, \tilde{A}_t) + W_{t+1}$ is the resulting pseudo-observation.  The denominator measures pseudo-information gain, and the objective trades that off against shortfall in order to balance exploration versus exploitation.

Value-IDS optimizes over probability vectors $\Delta_\actions$.  As explained in Appendix \ref{ap:support-cardinality}, the objective is convex, and there is always an optimal solution that assigns positive probabilities to no more than two actions.  Hence, the optimization problem can be solved by iterating over action pairs and, for each pair, carrying out one-dimensional convex optimization.  One can also find an approximate solution within a constant factor of the optimal ratio using $O(|\actions|)$ computation \citep{kirschner2021thesis}.

It is often natural to take the learning target to be a policy $\pi_\target = \target$.  In this case, IDS prioritizes seeking of information about this policy.  Accounting for only information about what the policy would do at the current state yields a lower bound on the associated pseudo-information gain by the data processing inequality:
$$\I(\pi_\target(\cdot|S_t); \tilde{A}_t, \tilde{Y}_{t+1} | X_t \leftarrow X_t) \leq \I(\target; \tilde{A}_t, \tilde{Y}_{t+1} | X_t \leftarrow X_t).$$
Using this lower bound leads to a variant of value-IDS
\begin{equation} \label{eq:value-ids-target-policy}
\min_{\nu \in \Delta_\actions} \frac{\E\left[V_{\pi_\target}(H_t) - Q_{\pi_\target}(H_t, \tilde{A}_t) \big| X_t \right]_+^2}{\I(\pi_\target(\cdot|S_t); \tilde{A}_t, \tilde{Y}_{t+1} |X_t \leftarrow X_t)}.
\end{equation}
It can be useful to consider this variant as a means to simplifying computational requirements.  The version of value-IDS (\ref{eq:basic-value-IDS}) studied in Section \ref{se:chain} is a special case of this for which the noise variance is taken to be zero.

Another possible learning target is a vector of functions $\target = Q_\target$ that approximates $Q_\dagger$.  This requires a procedure that, for each state $s$ and action $a$, can produce the vector $Q_\target(s,a)$ if the proxy $\proxy$ is known.  To digest this concept, it is helpful to discuss one simple use case.  Suppose $Q_*(H_t, A_t)$ depends on $H_t$ only through $S_t$.  Then, we can choose the environment proxy and {\it target value function} $\proxy = \target = Q_\target$ to be an action value function for which $Q_\target(S_t,A_t) = Q_*(H_t,A_t)$.  For this choice of learning target, it is natural to take the target policy $\pi_\target$ to be greedy with respect to $Q_\target$.  This target policy generates optimal actions.

When the learning target is an action value function or vector of target GVFs, value-IDS encourages seeking of information that can improve associated predictions.  This can be particularly valuable in problems where learning how to make these predictions will be useful to the agent in different ways over time.  Such contexts arise, for example, in multi-task and hierarchical reinforcement learning.  For the sake of computational efficiency, it is also sometimes worth considering a variant that accounts only for information about predictions at the current state-action pair:
\begin{equation} \label{eq:value-ids-target-gvfs}
\min_{\nu \in \Delta_\actions} \frac{\E\left[V_{\pi_\target}(H_t) - Q_{\pi_\target}(H_t, \tilde{A}_t) \big| X_t \right]_+^2}{\I(Q_\target(S_t, \tilde{A}_t); \tilde{A}_t, \tilde{Y}_{t+1} | X_t \leftarrow X_t)}.
\end{equation}

\subsection{Variance-IDS}
\label{se:variance-IDS}

Although decisions are decoupled across time, versions of value-IDS based on mutual information such as \eqref{eq:value-ids-target-policy} and \eqref{eq:value-ids-target-gvfs} can require intensive computation.  In order to reduce the time required by an agent to select an action, it can be helpful to use an approximation of mutual information based on variance.  In this section, we present associated versions of IDS based on variance approximation, which we refer to as {\it variance-IDS}. While variance-IDS can greatly simplify computations relative to value-IDS, in its pure form, it requires evaluating conditional expectations with respect to the agent state, which is not tractable except in very simple contexts. In Section \ref{se:belief-IDS}, we will consider a generalization of variance-IDS that instead evaluates expectations with respect to belief distributions that represent approximations of posteriors.

To illustrate the idea of variance approximations, let us first consider a version of value-IDS \eqref{eq:value-ids-target-policy} with pseudo-information gain given by
\[ \I(\pi_\target(\cdot|S_t); \tilde{A}_t, \tilde{Y}_{t+1} | X_t \leftarrow X_t), \]
where $\tilde{A}_t$ is sampled from some distribution $\nu \in \Delta_\actions$ that the IDS agent will minimize over, and $\tilde{Y}_{t+1}$ is a pseudo-observation of $Q_\dagger(H_t, \tilde{A}_t)$. The learning target $\target$ here is a policy $\target = \pi_\target$, and the pseudo-information gain about the target is lower bounded by the information gain about what the agent would do at the current state $\pi_\target(\cdot|S_t)$. 

Now, for simplicity, let us assume that each component of $Q_\dagger$ has a span bounded by $M_1$, and the noise $W_{t+1}$ used to generate pseudo-observation $\tilde{Y}_{t+1}$ has a bounded span $M_2$, where the span of a random variable $X$ is defined as $\mathrm{span}(X) = \mathrm{ess}\sup(X) - \mathrm{ess}\inf(X)$.
Letting
\[ v(a | X_t) = \mathrm{tr} \Big( \mathrm{Cov}\Big[ \E\left[Q_\dagger(H_t, a) | X_t, \pi_\target(\cdot|S_t)])\right] \big| X_t \Big] \Big) \]
for all actions $a \in \actions$ and using results established in Appendix \ref{ap:information-variance}, we can lower bound the pseudo-information gain (in nats) by 
\[ \I(\pi_\target(\cdot|S_t); \tilde{A}_t, \tilde{Y}_{t+1} | X_t \leftarrow X_t) \geq \frac{2}{n(M_1 + M_2)^2} \E[v(\tilde{A}_t | X_t) | X_t], \]
where $n$ is the dimension of the GVF vector $Q_\dagger$. This result can also be extended to subgaussian distributions in addition to bounded random variables.
Up to some scaling factor, the mutual information is lower bounded by the average conditional variance of $Q_\dagger(H_t, \tilde{A}_t)$, averaged over components and conditioned on the agent state $X_t = (\emptyset, S_t, P_t)$ and the target policy $\pi_\target(\cdot|S_t)$ at the current situational state $S_t$. To  intuitively see why this makes sense, note that the conditional variance is large when varying $\pi_\target(\cdot|S_t)$ gives widely different values of $Q_\dagger(H_t, \tilde{A}_t)$, implying that trying to learn about $Q_\dagger(H_t, \tilde{A}_t)$ would likely reveal a lot of information about the target $\pi_\target(\cdot|S_t)$. Applying this lower bound on the mutual information, we obtain a version of variance-IDS, which at each timestep solves for a distribution $\nu$ over actions that minimizes
\begin{equation} \label{eq:variance-approx-target-policy}
\min_{\nu \in \Delta_\actions} \frac{\E\left[V_{\pi_\target}(H_t) - Q_{\pi_\target}(H_t, \tilde{A}_t) \big| X_t \right]_+^2}{\E[v(\tilde{A}_t | X_t) | X_t]}.
\end{equation}

A similar variance approximation can be applied to the version of value-IDS specified in \eqref{eq:value-ids-target-gvfs}. This version of value-IDS prioritizes seeking information about $Q_\target$, an approximation to the GVFs $Q_\dagger$. Using results from Appendix \ref{ap:information-variance}, we can lower bound the mutual information in \eqref{eq:value-ids-target-gvfs} by 
\begin{align*}
& \I(Q_\target(S_t, \tilde{A}_t); \tilde{A}_t, \tilde{Y}_{t+1} | X_t \leftarrow X_t) \\
& \qquad \geq \frac{2}{n(M_1+M_2)^2} \E \left[ \mathrm{tr}\left(\mathrm{Cov}\left[Q_\dagger(H_t, \tilde{A}_t) \big| X_t, \tilde{A}_t \right]\right) | X_t \right].
\end{align*}
Thus, a version of variance-IDS that approximates \eqref{eq:value-ids-target-gvfs} is given by
\begin{equation} \label{eq:variance-approx-target-gvfs}
\min_{\nu \in \Delta_\actions} \frac{\E\left[V_{\pi_\target}(H_t) - Q_{\pi_\target}(H_t, \tilde{A}_t) \big| X_t\right]_+^2}{\E \left[ \mathrm{tr}\left(\mathrm{Cov}\left[Q_\dagger(H_t, \tilde{A}_t) \big| X_t, \tilde{A}_t \right]\right) | X_t \right]}.
\end{equation}

Similar to value-IDS, the objectives of variance-IDS are also convex, and as established in Appendix \ref{ap:support-cardinality}, there is always an optimal solution that assigns positive probabilities to no more than two actions.

\subsection{Belief Distributions and Computation}
\label{se:belief-IDS}

In the previous section, we have simplified the computation of value-IDS with variance-based approximations. However, objectives such as \eqref{eq:variance-approx-target-policy} and \eqref{eq:variance-approx-target-gvfs} are still typically not computationally tractable, as the expected one-step regret and information gain depend on the distribution of $\environment$ conditioned on the agent state, which is expensive to compute. In this section, we introduce a further approximation based on belief distributions that are not necessarily the exact conditional distributions.

For the purpose of this section, let us focus on the following setting. Suppose we take the proxy $\proxy = \tilde{Q}_\dagger$ to be an approximation to $Q_\dagger$, which we assume includes the optimal action value function, and the target $\target$ to be a greedy policy with respect to the approximate optimal action value function. It is natural for the proxy $\tilde{Q}_\dagger$ to depend on the situational state $S_t$ rather than history $H_t$.

In variance-IDS \eqref{eq:variance-approx-target-policy} and \eqref{eq:variance-approx-target-gvfs}, instead of assessing the conditional expectations exactly, we could have a mechanism that, based on the epistemic state, produces beliefs of proxy $\tilde{Q}_\dagger$ which represent approximations to the conditional distribution of $\tilde{Q}_\dagger$. For example, in Section \ref{se:enn}, we will introduce one possible incremental method for encoding reasonable belief distributions via epistemic neural networks and temporal-difference style updates. The belief distribution is trained to be consistent, in some approximate sense, with the prior distribution and past observations. We will use belief distributions to approximate conditional regret and information gain.

With some abuse of notation, let us use $P_t(\cdot)$ to denote such belief probabilities over the proxy given epistemic state $P_t$. If the belief distribution does perfect inference, $P_t(\cdot) = \Pr(\proxy \in \cdot | X_t)$, although in general it will likely be an approximation. Conditioned on the agent state, let $\hat{Q}_{\dagger, t}$ be an independent sample drawn from $P_t(\cdot)$, $\hat{Q}_{*,t}$ be the optimal action value function which is a component of $\hat{Q}_{\dagger, t}$, and $\hat{\pi}_t$ be the greedy policy with respect to $\hat{Q}_{*,t}$. A variant of variance-IDS that approximates \eqref{eq:variance-approx-target-policy} is given by solving for an action distribution $\nu$ that minimizes
\begin{equation} \label{eq:variance-ids-target-policy}
\min_{\nu \in \Delta_\actions} \frac{\E\left[\max_{a \in \actions}\hat{Q}_{*,t}(S_t, a) - \hat{Q}_{*,t}(S_t, \tilde{A}_t) \,\big|\, X_t \right]^2}{\E \left[ \mathrm{tr}\left(\mathrm{Cov}\left[\E\left[\hat{Q}_{\dagger,t}(S_t, \tilde{A}_t) | X_t, \tilde{A}_t, \hat{\pi}_t(\cdot|S_t) \right] \big| X_t, \tilde{A}_t \right]\right) | X_t \right]}.
\end{equation}
Comparing \eqref{eq:variance-ids-target-policy} to \eqref{eq:variance-approx-target-policy}, note that first, we approximate the joint conditional distribution of $Q_\dagger$ and target policy $\pi_\target$ with belief over the proxy $\tilde{Q}_\dagger$ and target policy. Further, we approximate the conditional distribution of the action value function of the target policy, $Q_{\pi_\target}$, with belief over $\tilde{Q}_*$.
With a mechanism that can generate samples of $\hat{Q}_{\dagger, t}$ from the belief based on epistemic state $P_t$, we can approximately optimize this objective through Monte Carlo approximation. We will describe the sample-based algorithm in Section \ref{se:computation}. 

Similarly, a variant of variance-IDS that approximates \eqref{eq:variance-approx-target-gvfs} is given by
\begin{equation} \label{eq:variance-ids-target-gvfs}
\min_{\nu \in \Delta_\actions} \frac{\E\left[\max_{a \in \actions}\hat{Q}_{*,t}(S_t, a) - \hat{Q}_{*,t}(S_t, \tilde{A}_t) \,\big|\, X_t \right]^2}{\mathrm{tr}\left(\mathrm{Cov}\left[\hat{Q}_{\dagger,t}(S_t, \tilde{A}_t) \big| X_t\right]\right)}.
\end{equation}
We will similarly discuss a sample-based algorithm in Section \ref{se:computation}. 

Finally, as with our earlier versions of variance-IDS, the objectives for approximate variance-IDS are convex and there is always an optimal solution that has support two.

\chapter{Computational Examples}
\label{se:computation}

In earlier chapters, we have outlined abstract concepts that can inform the design of information seeking agents.
In this chapter, we show that these concepts can be put to work in practical and scalable reinforcement learning agents.
The insights afforded by our analysis are supported by the computational results in two key ways:
\begin{enumerate}
	\item Proxy design can have a large impact on data efficiency (via information retention).
	\item Agents that actively seek out useful information can be more data efficient.
\end{enumerate}

To show this, we present a series of didactic experiments designed to test key predictions of our theory.
We attempt to provide a simple and clear demonstration of not only the performance on any one problem, but also the \textit{scaling} as we vary agent and environment complexity.
In each case, elements of our environment specification are stylized and simplified, so that we can provide clear insight into elements of reinforcement learning agent design.
However, it is important to stress that we only do this to more clearly demonstrate insights beyond the particular experiments in this paper.
We focus on simplified domains precisely because we are interested in problem domains \textit{beyond} the scope of less efficient learning algorithms.\footnote{This effort mirrors the `behaviour suite' for RL \citep{Osband2020Behaviour}, which aims to test general RL capabilities, but with a targeted focus on the issues raised in this paper. }

We build on prior work of \citet{Dwaracherla2020Hypermodels}, who demonstrate that IDS can effectively leverage hypermodels for more efficient exploration.
Related work has previously demonstrated that an IDS-inspired variant of DQN can improve scores in Atari games \citep{nikolov2019information}.
However, their adaptation to deep Q-networks, especially the epistemic state design and action selection scheme, does not fully leverage the information-seeking behavior that IDS enables, which will be discussed in greater details in the rest of the section. First, we introduce a practical, information seeking agent that brings together the concepts we have discussed in this tutorial.

\section{A Practical Information Seeking Agent}

This section presents a concrete sample-based implementation of an information-seeking agent that extends common building blocks of deep reinforcement learning.
These include: neural network function approximation, incremental learning via SGD and other components
from DQN.
Our example agent is:
\begin{itemize}
    \item \textbf{Minimal}: the key concepts are demonstrated in the simplest possible manner.
    \item \textbf{General}: the agent can operate in any environment.
    \item \textbf{Performant}: the agent effectively seeks, retains, and utilizes information.
\end{itemize}

We specify the agent through the agent state, and the action selection policy $\pi(\cdot | X_t).$
Agent state dynamics are determined by the update rules ($f_{\rm algo}$, $f_{situ}$, $f_{epis}$) introduced in Equations \eqref{eq:algorithmic-state-dynamics}, \eqref{eq:aleatoric-state-dynamics}, and \eqref{eq:epistemic-state-dynamics}.
In the interests of space and clarity, we defer full algorithmic details to Appendix \ref{app:computation}, although we make sure to highlight the key choices in our main text.
Existing tools and notation in RL agent design are mostly focused on algorithmic state and situational state updates, and so we can make use of standard tools in this field.
In order to facilitate an elegant treatment of epistemic state updates we briefly introduce the notion of an \textit{epistemic neural network}.

\section{Epistemic Neural Networks}
\label{se:enn}

An efficient agent needs to represent its state of knowledge, its epistemic state.
The coherent use of Bayes' rule and probability theory are the gold standard for updating beliefs, but exact computation quickly becomes infeasible for even simple problems.
Modern machine learning has developed an effective toolkit for learning in high-dimensional spaces including temporal-difference learning, neural net function approximation and SGD learning.
Together with these tools, is a relatively standard notation that for inputs $x_i \in \Xc$ and outputs $y_i \in \Yc$ we train a function approximator $f_\theta(x)$ to fit the observed data $D = \{(x_i, y_i) \text{ for } i=1,...,N\}$ based on a loss function $\ell(\theta, D) \in \R.$
However, there is no consistent convention as to which elements of the parameters $\theta$ correspond to epistemic components of uncertainty.

To help make this distinction clear, we introduce the concept of an \textit{epistemic index} $z \in \Ic \subseteq \Re^{n_z}$ distributed according to some reference distribution $p_z(\cdot)$, and form the augmented \textit{epistemic} function approximator $f_\theta(x, z)$.
Where the function class $f_\theta(\cdot , z)$ is a neural network we call this an \textit{epistemic neural network} (ENN).
The index $z$ is controlled by the agent's \textit{algorithmic} state, but allows us to unambiguously identify a corresponding function value.

On some level, ENNs are purely a notational convenience and all existing approaches to dealing with uncertainty in deep learning can be rephrased in this way.
For example, an ensemble of point estimates $\{\tilde{f}_{\theta_1}, .., \tilde{f}_{\theta_K}\}$ can be viewed as an ENN with $\theta = (\theta_1,..,\theta_K)$, $z \in \{1,..,K\}$, and $f_\theta(x,z) := \tilde{f}_{\theta_z}(x)$.
Similarly, for a base network $\tilde{f}_{\tilde{\theta}}(x)$ where the weights $\tilde{\theta}$ are given by some hypernetwork $g_\theta(z)$, $z \sim N(0, I)$, we can write $f_\theta(x, z) := \tilde{f}_{g_\theta(z)}(x)$.

However, this simplicity hides a deeper insight: that the process of epistemic update \textit{itself} can be tackled through the tools of machine learning typically reserved for point estimates, through the addition of this epistemic index.
Further, since these machine learning tools were explicitly designed to scale to large and complex problems, they might provide tractable approximations to large scale Bayesian inference even where the exact computations are intractable.

\vspace{0.1in}
\noindent{\bf Epistemic Q-learning}

To illustrate this point, we consider a simple modification of DQN designed to maintain estimates of uncertainty about action values.
Let $f_\theta(s, z)$ be an ENN representation of action values in $\R^\Ac$ and use pythonic index $f_\theta(s, z)[a]$ for the value of action $a$.
We can then define a Q-learning loss,
\begin{equation}
    \label{eq:q_learning_enn}
    \ell^{Q, \gamma}\left(\theta; f, \theta^-, z, (s, a, r, s')\right) := \left(f_\theta(s, z)[a] - (r + \gamma \max_{a' \in \actions} f_{\theta^-}(s', z)[a'] ) \right)^2,
\end{equation}
where $\gamma$ is a discounting hyperparameter used by the algorithm. Note that \eqref{eq:q_learning_enn} is the regular Q-learning update for a single epistemic index $z$.

To specify the epistemic update we first identify the epistemic state $P_t$ through parameters $\theta_t$ and replay buffer $B_t$.
For any loss function 
\mbox{$\ell \left(\theta; f, \theta^-, z, (s, a, r, s')\right)$}, 
we provide a sample-based approach to the epistemic update based on sampling $n_{\rm batch}$ transitions from the replay buffer and $n_{\rm index}$ epistemic indices from the reference distribution and taking a gradient step with step size $\alpha$:
\begin{equation}
    \label{eq:average_enn_learn}
    \theta_{t+1} \leftarrow \theta_t - 
    \alpha \nabla_\theta \left( \frac{1}{n_{\rm index} \times n_{\rm batch}}
    \sum_{i=1}^{n_{\rm index}} 
    \sum_{j=1}^{n_{\rm batch}}
    \ell \left(\theta; f, \theta_t, z_i, (s, a, r, s')_j\right)
    \right) \Bigg|_{\theta = \theta_t}.
\end{equation}

The learning rule in \eqref{eq:average_enn_learn} has been studied in the specific context of an ensemble ENN for $\ell = \ell^{Q,\gamma}$ in the algorithm `bootstrapped DQN' \citep{osband2016deep}.
Importantly, and unlike exact Bayesian inference, this approach is naturally compatible with modern deep learning techniques.
Our next example shows that this simple learning rule can produce reasonable epistemic updates in certain cases. 
Developing reliable and scalable approximate inference algorithms in general is an active area of research and an important component of data-efficient agent design.

\vspace{0.1in}
\noindent{\bf Uncertainty estimates in an ensemble ENN}

Figure \ref{fig:value_concentration} shows the epistemic evolution of an agent trained by \eqref{eq:average_enn_learn} using an ENN comprised of an ensemble of 10 $\times$ [50, 50]-MLPs with randomized prior functions \citep{osband2018randomized} in two environments.
Action selection is taken through information-directed sampling as are described in \ref{se:sample_ids_alg}, but for now it is sufficient to note a few high level observations, which we will then support in more detail.
In each plot, the dashed lines represent the true action values, while the agent's mean beliefs of the action values are given by the solid lines with 2 standard deviations in the shaded region.

Figure \ref{fig:bandit_confidence} shows the agent's action value estimates in a simple three-armed bandit, where each action returns an observation equal to its mean reward plus $N(0, \sigma^2\mkern1.5mu{=}\mkern1.5mu0.01)$ noise.
This problem is trivially amenable to conjugate Bayesian posterior updates, and has no delayed consequences, so the use of ENNs is not really warranted.
In Figure \ref{fig:deep_sea_value}, the environment is described by a difficult 210-state MDP (DeepSea size=20 in \citet{osband2016deep}) and we examine the value estimates in the initial state for each episode.
In this context, exact Bayesian inference would necessitate an explicit model and repeated application of dynamic programming, whereas our ENN implementation provides a constant (and far lower) cost per timestep.
Reassuringly, in both cases we observe the desired phenomena: initially the agent's posterior is diffuse, but as the agent gathers more data, the estimates concentrate around the true value.
Further, once the agent is reasonably confident about the optimal action it does not significantly invest in reducing uncertainty of other actions, consistent with efficient information-seeking behaviour.
The next subsection describes how the agent accomplishes this via targeted action selection.

\begin{figure}[!ht]
\centering
\begin{subfigure}{.5\textwidth}
  \centering
  \includegraphics[width=.99\linewidth]{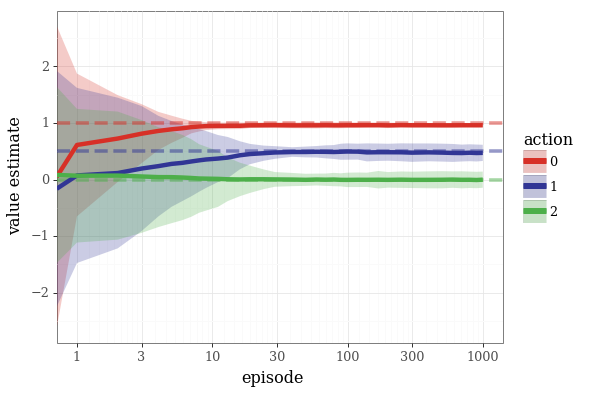}
  \caption{ENN posterior in a simple bandit problem.}
  \label{fig:bandit_confidence}
\end{subfigure}%
\begin{subfigure}{.5\textwidth}
  \centering
  \includegraphics[width=.99\linewidth]{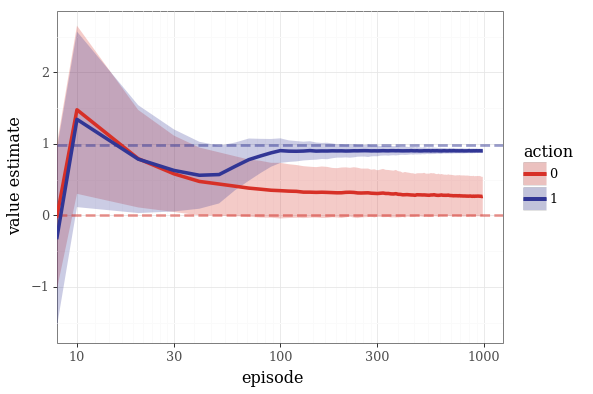}
  \caption{ENN posterior in DeepSea20 initial state.}
  \label{fig:deep_sea_value}
\end{subfigure}
\caption{ENNs are capable of maintaining epistemic state in an effective manner.}
\label{fig:value_concentration}
\end{figure}

\subsection{Sample-Based Action Selection}
\label{se:sample_ids_alg}

We specialize our discussion to feed-forward variants of DQN with aleatoric state $S_t = O_t$ and epistemic state $P_t = (\theta_t, B_t)$.
Here $\theta_t$ represents parameters of an ENN $f$ and $B_t$ an experience replay buffer.
The epistemic state is updated according to \eqref{eq:average_enn_learn} for $\theta_t$ and the first-in-first-out (FIFO) rule for $B_t$.
To complete our agent definition we need to define the action selection policy from agent state $X_t = (Z_t, S_t, P_t)$.
With this notation we can concisely review three approaches to action selection:
\begin{itemize}
    \item \textbf{$\epsilon$-greedy}: algorithmic state $Z_t = \emptyset$; select $A_t \in \argmax_{a} f_{\theta_t}(S_t, Z_t)[a]$ with probability $1-\epsilon$, and a uniform random action with probability $\epsilon$ \citep{mnih-atari-2013}.
    \item \textbf{Thompson sampling (TS)}: algorithmic state $Z_t = Z_{t_k}$, resampled uniformly at random at the start of each episode $k$; select $A_t \in \argmax_a f_{\theta_t}(S_t, Z_t)[a]$ \citep{osband2016deep}.
    \item \textbf{Information directed sampling (IDS)}: algorithmic state $Z_t = \emptyset$; compute action distribution $\nu_t$ (with support 2) that minimizes a sample-based estimate of the information ratio given by \eqref{eq:variance-ids-target-policy} with $n_{\rm IDS}$ samples; sample action $A_t$ from $\nu_t$.
\end{itemize}

We use $\epsilon$-greedy as the classic dithering approach used in DQN, as well as many `deep RL' baselines.
Thompson sampling represents an approach to \textit{deep exploration}, which can be exponentially more efficient than dithering in some environments.
The remainder of this section will focus on demonstrating that:
(1) IDS provides another approach to action selection equally capable at deep exploration as TS, and 
(2) IDS can be exponentially more efficient than existing approaches (including TS) where there are benefits to information-seeking behaviour.
\hspace{-2mm}\footnote{Appendix \ref{app:bsuite-report} provides a full evaluation of all three agents on the general RL benchmark: `the behaviour suite for reinforcement learning' \citep{Osband2020Behaviour}.}

\section{Information Seeking via Variance-IDS}
\label{se:efficient-var-ids}

This section focuses on a like for like comparison of two DQN variants with ENN knowledge representation, but contrasting TS with variance-IDS action selection.
For convenience, we will refer to variance-IDS as simply IDS for the rest of this section.
In both cases we use an ensemble of size 20 with 50-50-MLPs and randomized prior functions, simple replay buffer of 10k transitions, Q-learning according to \eqref{eq:average_enn_learn} with $n_{\rm batch}=128, n_{\rm index}=20$, $\ell=\ell^{Q,\gamma}$ and ADAM learning update rate 0.001 \citep{osband2018randomized,kingma2014adam}; for IDS we set $n_{\rm IDS}=40$.
Our first experiments consider a special family of environments designed to test for \textit{deep exploration}.

\subsection{Deep Exploration}

One of the key challenges in RL is that of \textit{deep exploration}, which accounts not only for the information gained from an action, but also how that action may position the agent to more effectively gather information over subsequent time periods \citep{osband2019deep}.
The DeepSea experiment presents a sequence of environments parameterized by a size $N$ and a random seed.  Agents that explore via dithering (e.g., $\epsilon$-greedy or Boltzmann exploration) take in expectation more than $2^N$ episodes to learn the optimal policy.
To investigate the empirical scaling of our computational approximations we use the standardized DeepSea experiment from \bsuite\ \citep{Osband2020Behaviour}.

\begin{figure}[htb]
\begin{center}
\includegraphics[width=0.9\linewidth]{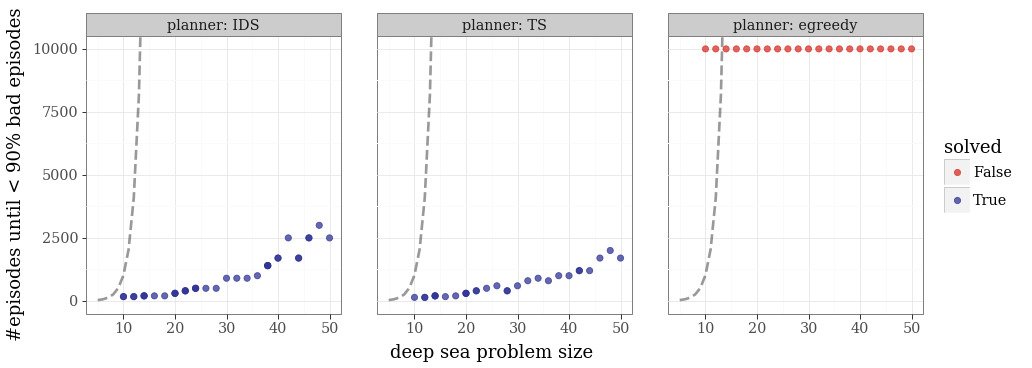}
\caption{Empirical scaling in DeepSea, both IDS and TS demonstrate deep exploration.}
\label{fig:deep_sea_scaling}
\end{center}
\end{figure}

Figure \ref{fig:deep_sea_scaling} shows the number of episodes required by each agent in order to have fewer than 90\% suboptimal episodes as the problem size $N$ grows.
The dashed line shows the scaling $2^N$ and each experiment is run for only 10k episodes.
Empirically, this demonstrates that value-IDS can drive deep exploration in an environment that has been challenging to other agents.

The above simulation results with the DeepSea environment demonstrate a case of successful deep exploration, competitive with TS.  However, the main motivation for IDS is not to match TS in deep exploration, but instead to more efficiently seek information where the opportunity arises, as our next example will show.

\subsection{Targeting Informative Actions}
\label{se:sparse_bandit}

Thompson sampling, like other optimistic approaches, restricts its attention to actions which have nonzero probability in some optimal policy.
In many cases this heuristic is effective, even in some sense optimal \citep{russo2018tutorial}.
However, in some environments this approach leaves a lot of room for improvement,
for example, where there are potentially-informative actions which have no chance of being optimal.
For this experiment we consider a simple environment with this characteristic: a sparse linear model.
We show that IDS can take advantage of the informative actions and perform exponentially better than optimistic approaches.

Consider an environment parameterized by $\phi_* \in \R^N$ with action set $\actions = \left\{a_1, .., a_A\right\} \subset \R^N$ and observation set $\observations = \R$.
Let the observations be deterministic given an action with $O_{t+1} = A_t^\top \phi_*$.
Additionally $\phi_*$ is known to be one-sparse, i.e., $\|\phi_*\|_0 = 1$.
Assume $N=2^d$ and let the action space to be composed of the one-hot vectors $\Ac^{\rm opt} = \{e_1,..,e_N\}$ together with binary search `probes' $\Ac^{\rm probe} = \left\{\frac{x}{2} \mid x \in \Bc_N \right\}$ where $\Bc_N$ is the set of indicator sub-vectors that can be obtained by bisecting the $N$ components \citep{russo2016information,Dwaracherla2020Hypermodels}.
Any TS agent will only select actions from $\Ac^{\rm opt}$, since only these actions have some probability of being optimal.
However, an information-seeking agent may prefer to choose from $\Ac^{\rm probe}$, since it allows the agent locate the unknown optimal arm in $O(\log(N))$ rather than $O(N)$ steps.

\begin{figure}[!ht]
\centering
\begin{subfigure}{.5\textwidth}
  \centering
  \includegraphics[width=.99\linewidth]{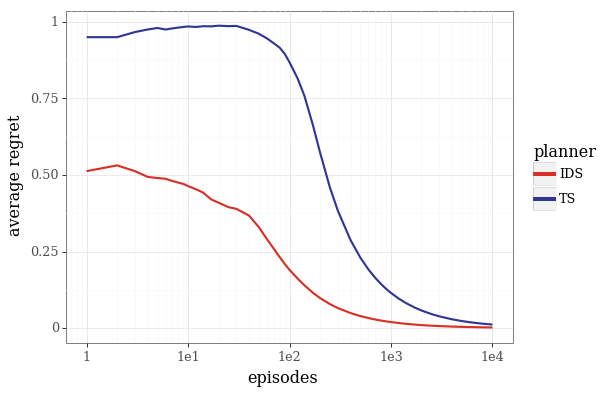}
  \caption{Regret in the bandit problem with 128 arms}
  \label{fig:sparse_bandit_regret}
\end{subfigure}%
\begin{subfigure}{.5\textwidth}
  \centering
  \includegraphics[width=.99\linewidth]{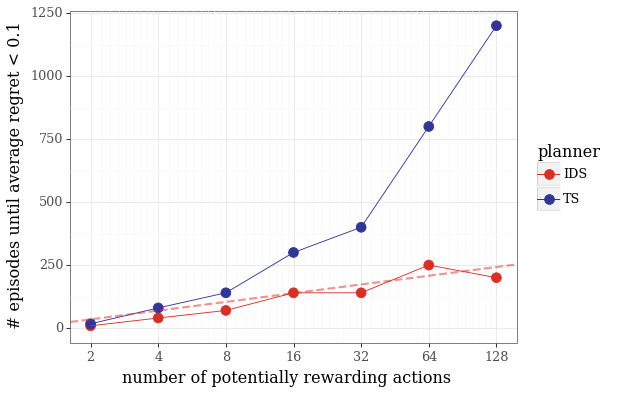}
  \caption{IDS scales better to large number of arms}
  \label{fig:sparse_bandit}
\end{subfigure}
\caption{Contrasting TS and IDS on sparse bandit}
\label{fig:sparse_bandit_all}
\end{figure}

Figure \ref{fig:sparse_bandit_all} shows the results of applying IDS and TS to a series of these environments as the number of potentially optimal actions $N$ grows.
We encode the prior knowledge of sparsity in the ENN architecture by learning an ensemble of 10 logits estimates over $N$ possible components.
Both agents learn by SGD and a cross-entropy loss against observed data.\footnote{Consult Appendix \ref{app:computation} for implementational details.}
Figure \ref{fig:sparse_bandit_regret} shows the regret through time with dimension $N=128$, averaged over 100 seeds.
IDS greatly outperforms TS, since it is able to prioritize informing binary search before settling to the optimal arm.
Figure \ref{fig:sparse_bandit} shows the empirical scaling as we grow the problem dimension $N$, and we see that this difference becomes more pronounced as the problem size grows.
The dashed line shows a perfect logarithmic scaling in $N$, which we see IDS matches well empirically.
These results show how the information seeking behavior enabled by IDS can greatly improve data efficiency, even when the agents match in the environment proxy (action value function) and use the same update rules.
Our next results show that there are settings where agents that maintain a richer epistemic state involving general value functions can greatly outperform those that do not.

\section{Variance-IDS with General Value Functions}
\label{se:gvf_comp}

This paper highlights the role of the \textit{environment proxy} and the \textit{learning target} in the design of an efficient agent.
The results in Section \ref{se:efficient-var-ids} show that IDS can drive efficient exploration where both the proxy and learning target are given by the action value function.
In this section, we show that general value functions (GVFs) can facilitate trade-offs in agent design, by controlling what information is retained (environment proxy) and what information is sought out (learning target).

\subsection{Retaining Information: The Environment Proxy}

A limited agent has to make trade-offs about what information to retain.
Naively, an agent interested only in maximizing cumulative reward, might disregard all observations not directly pertinent to the value function.
However, as we will show, agents that judiciously expand their attention to general value functions can benefit from large gains in efficiency.

Consider a simple environment with an action set $\Ac \subset \R^{d \times K}$ of size $N$, and $\observations = \{0, 1\}^K$.
The environment is parameterized by $\phi_* \in \R^d$, with observation probability function given componentwise as $\rho_{\phi_*}(o_i=1|a) = \left(1 + \exp(-\left<a_i, \phi_* \right>)\right)^{-1}$, i.e., each element of the observation vector $O_t = (O_{t,0},..,O_{t,K-1})$ is an independent binary observations.
This environment can be thought of as a logistic bandit with $O_{t,0}$ as the reward, and $O_{t,i}$ as auxiliary observations for $i > 0$.
In this setting, it may be beneficial to retain information from observation $O_{t,i}$ for $i > 0$, since they provide additional signal related to the unknown parameter $\phi_*$.

We consider a variant of DQN that extends to general value functions, which, in this case, are simply trying to predict observations $O_{t,0}, \dots, O_{t, K-1}$.
The agent is based around an ENN with replay buffer $B_t$, and we define the loss over the first $k$ GVFs to be
\begin{equation}
    \label{eq:log_loss}
    \ell^k(\theta ; f, z, o, a) = \sum_{j=1}^k \frac{o_j + (1 - o_j) \exp(-\left<a_j, f_\theta(1, z) \right>)}{1 + \exp(-\left<a_j, f_\theta(1, z) \right>)}.
\end{equation}
As in the case for Q-learning, we train this ENN via minibatch sampling, together with averaging over epistemic indices $z$.
Our agent implements sample-based variance IDS with $n_{\rm index}=100$ samples and uses a hypermodel ENN trained with perturbed SGD as in \citet{Dwaracherla2020Hypermodels}.\footnote{The results are almost identical when learning with an ensemble ENN.}
We perform our experiment for $D=N=100$ and average each evaluation over 100 random seeds.

\begin{figure}[!ht]
\centering
\begin{subfigure}{.5\textwidth}
  \centering
  \includegraphics[width=.99\linewidth]{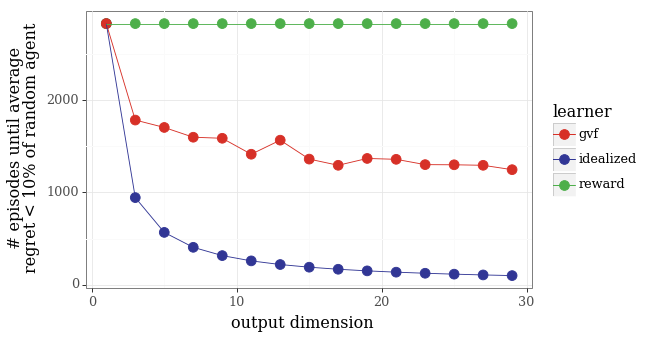}
  \caption{Learning scaling in logistic bandit}
  \label{fig:logistic_scale}
\end{subfigure}%
\begin{subfigure}{.5\textwidth}
  \centering
  \includegraphics[width=.99\linewidth]{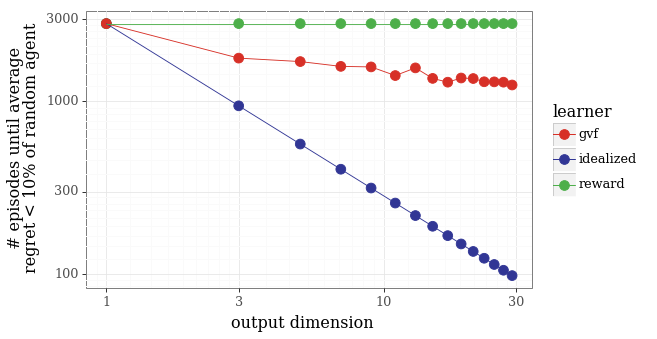}
  \caption{Scaling on the log scale}
  \label{fig:logistic_scale_log}
\end{subfigure}
\caption{Environment proxy can impact information retention.}
\label{fig:logistic_bandit}
\end{figure}

Figure \ref{fig:logistic_bandit} shows the learning time as we vary the number of observation elements used in updating the epistemic state $k=1,..,K$.
We see that the agent can learn significantly faster by learning from more auxiliary observations in addition to the reward compared to learning from reward alone.
However, ingesting $k$ times more observations does not lead to $k$-times faster learning, as shown by the gap between the GVF scaling and the idealized $\frac{1}{k}$ relationship.
This simple example, perhaps unsurprisingly, shows that the use of GVFs can improve data efficiency by improving information retention. The insight ought to carry over to more complex environments.
In the next section, we show that the gains from GVFs as learning \textit{targets} can be even more pronounced.

\subsection{Seeking Information: The Learning Target}

An efficient agent should actively seek information, and it must also be judicious about which information to seek.
The learning target allows an agent to prioritize the information that it seeks.
So far, all of our agents have prioritized information relating to the optimal action $A_*$.
In this section, we will highlight the great potential benefits that taking general value functions as the learning target can afford.

Consider a very simple bandit problem with $N$ arms exactly one of which is rewarding.
The environment is similar to the `sparse bandit' of Section \ref{se:sparse_bandit}, except rather than additional `binary probes' there is a distinguished $N$+1$^{th}$ action that provides no reward, but instead reveals $\phi_*$, the index of the rewarding action, as part of the observation.
Clearly, for large $N$, the optimal policy is to select action $N+1$, find out the rewarding action and then act optimally subsequently.
However, only an agent that actively seeks information would ever select the revealing action, which has no possible reward in itself.

To anchor our conversation we once again consider agents based around DQN with an ensemble ENN that extends to general value functions.
In each case, the environment proxy $\tilde{\Ec}$ is a generalized value function equal to the full observation=(reward, reveal).
Agents learn via SGD on observed data, using a log-loss on their posterior probability against observed data, and their epistemic updates are identical.
The agents only differ in terms of their action selection and, in turn, through their learning target:
\begin{itemize}
    \item \texttt{TS}: Thompson sampling for the optimal action.
    \item \texttt{IDS\_Q}: Sample-based variance-IDS \eqref{eq:variance-ids-target-gvfs} with learning target $\tilde{Q}_*$ (action value only).
    \item \texttt{IDS\_GVF}: Sample-based variance-IDS  \eqref{eq:variance-ids-target-gvfs} with learning target $\tilde{\Ec}$ (full GVF).
\end{itemize}

That Thompson sampling should fail here is perhaps unsurprising: it only selects actions that have some probability of being optimal, and action $N+1$ can never be optimal.
variance-IDS (Equation \ref{eq:variance-ids-target-gvfs}) fails when the learning target is $Q_*$ because the approximation in information gain (based on the variance of $\tilde{Q}_*$ at action $N+1$, which is 0) fails to account for the information gain accrued from the revealing observation not \textit{directly} tied to value.
However, a variant of IDS which includes the revealing observation within the GVF can, and does, prioritize the action $N+1$.
Figure \ref{fig:inform_bandit_log} shows the learning scaling as we increase the number of actions $N$ together with a solid line of $y=x$ to show scaling.

\begin{figure}[!ht]
\centering
\begin{subfigure}{.5\textwidth}
  \centering
  \includegraphics[width=.99\linewidth]{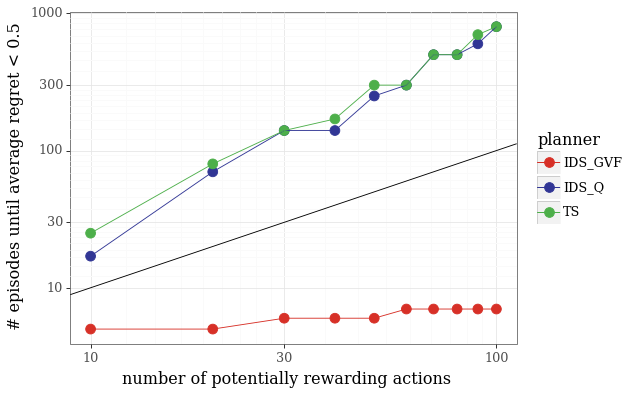}
  \caption{Learning time in the informative bandit}
  \label{fig:inform_bandit_log}
\end{subfigure}%
\begin{subfigure}{.5\textwidth}
  \centering
  \includegraphics[width=.99\linewidth]{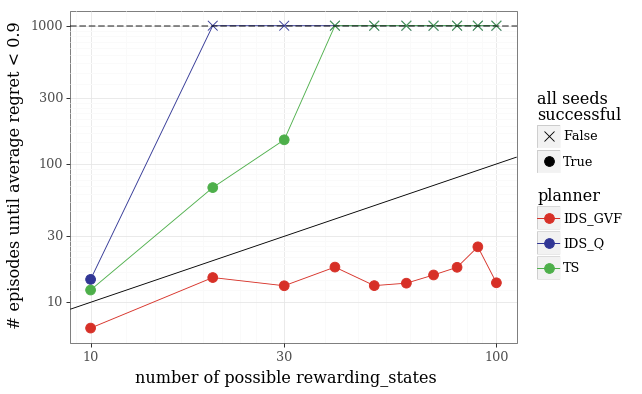}
  \caption{Learning time in the informative chain}
  \label{fig:chain_log}
\end{subfigure}
\caption{Learning targets with GVFs can actively seek out informative states and actions.}
\label{fig:information_action}
\end{figure}

Figure \ref{fig:chain_log} presents a similar result in an RL setting.
The environment shares a similar spirit as the informative bandit problem, but with delayed consequences. The environment is similar to the chain described in Figure \ref{fig:chain}, except that exactly one of $r_0, \dots, r_{\tau-1}$ is $1$ while all the rest are $0$, and that if the agent reaches state $\tau-1$, there will be an additional observation revealing the index of the nonzero reward.
Compared to the bandit problem, this environment presents the additional challenge that an agent seeking the revealing information must \textit{commit} to its plan of action over multiple timesteps.

Nevertheless, we see exactly the same phenomenon as the bandit setting.
We take the cumulants of GVFs to be equal to the observation vector.
Only IDS with GVFs as the learning target is able to effectively prioritize the revealing state, and uses this to learn the optimal policy in episodes close to constant in the number of states.
These examples provide an evocative illustration for the potentially huge benefits from a judiciously information-seeking agent.
However, as we will show in the next section, they also provide an interesting edge case for some of the downsides in IDS' one-step approximation.

\subsection{One-Step Approximations and the Benefits of Pessimism}
\label{se:pessimism}

One surprising blind spot for the IDS agent is that if in any state the agent is certain that a particular action is optimal then it will \textit{always} select an optimal action at that state.
This is because the information ratio is zero, and therefore minimal, for the optimal action.
At first sight this might not seem like a severe problem, but it can have some surprisingly negative consequences.
To highlight this failure mode, we consider the informative chain environment described in the previous section, but modify it so that the final informative state has no chance of being rewarding (see Figure \ref{fig:chain_scale}).
In this setting, an IDS-GVF agent will make it all the way to state $\tau-2$, driven by potential information gain.
However, at state $\tau - 2$ it would never select the `forward' action and enter the revealing state, because it is certain that `down' is an optimal action at state $\tau - 2$.
This would repeat in subsequent episodes so that the agent would require a number of episodes that grows linearly with the number of states to find the rewarding policy, similar to less sophisticated forms of exploration like Thompson sampling.

\begin{figure}[htb]
\begin{center}
\includegraphics[scale=0.25]{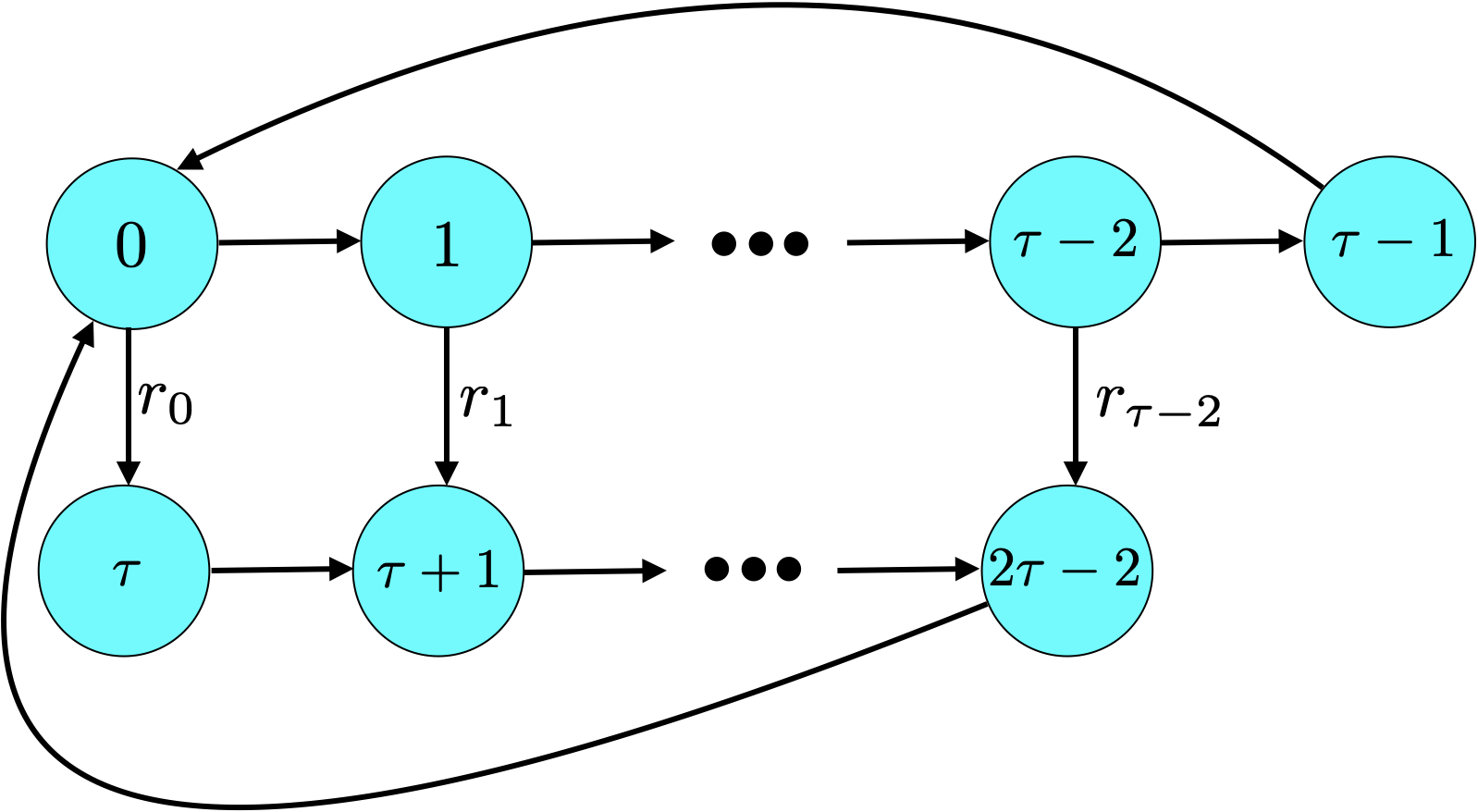}
\caption{A simple class of deterministic episodic environments. 
Edges represent possible state transitions, labeled with rewards $r_0,..,r_{\tau-2}$, exactly one of which gives rewards one and all other transitions provide a reward of zero.
}
\label{fig:chain_scale}
\end{center}
\end{figure}

The issue here is that once an agent is sure of any optimal action in a state, it does not value information gain from other actions.
This can be problematic because it can be worthwhile to take an action \textit{known} to be suboptimal in order to gain more information.
To illustrate this possibility, we modify variance-IDS by incorporating a pessimistic adjustment $\epsilon^{\rm pess}$ to the expected shortfall:
\begin{equation}
\label{eq:ir_pess}
\min_{\nu \in \Delta_\actions} \frac{\E\left[\max_{a \in \actions}\hat{Q}_{*,t}(S_t, a) - \hat{Q}_{*,t}(S_t, \tilde{A}_t) \,\big|\, X_t \right]^2 + \epsilon^{\rm pess}} {\E \left[ \mathrm{tr}\left(\mathrm{Cov}\left[\hat{Q}_{\dagger,t}(S_t, \tilde{A}_t) \big| X_t, \tilde{A}_t \right]\right) | X_t \right]}.
\end{equation}
This additional term ensures that highly-informative actions can still be considered even when the optimal action is known.
We refer to this adjustment as `pessimistic' since it effectively increases the estimated shortfall of each action.
This term also contrasts with the typical heuristic for exploration via optimism in the face of uncertainty, but maintains the judicious balance between regret and information gain.

Figure \ref{fig:intrinsic_motivation} shows the results for the same IDS agent as Figure \ref{fig:chain_log} where the revealing state has no chance of being rewarding.
Without the additional $\epsilon^{\rm pess}$ the IDS agent never selects the informative action.
However, with even the addition of a small $\epsilon^{\rm pess} > 0$ the learning reverts to constant time in the number of states.
Figure \ref{fig:inform_ridge} shows that these results are remarkably robust over many orders of magnitude of $\epsilon^{\rm pess}$.
Better understanding the effect of the pessimism term and, more generally, developing principled guidelines for addressing this downside of IDS is an area for future work.

\begin{figure}[!ht]
\centering
\begin{subfigure}{.5\textwidth}
  \centering
  \includegraphics[width=.99\linewidth]{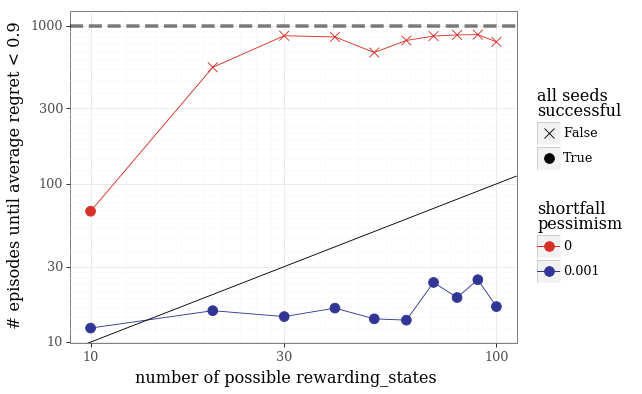}
  \caption{With $\epsilon^{\rm pess} > 0$ IDS scales sublinearly.}
  \label{fig:chain_ridge_log}
\end{subfigure}%
\begin{subfigure}{.5\textwidth}
  \centering
  \includegraphics[width=.99\linewidth]{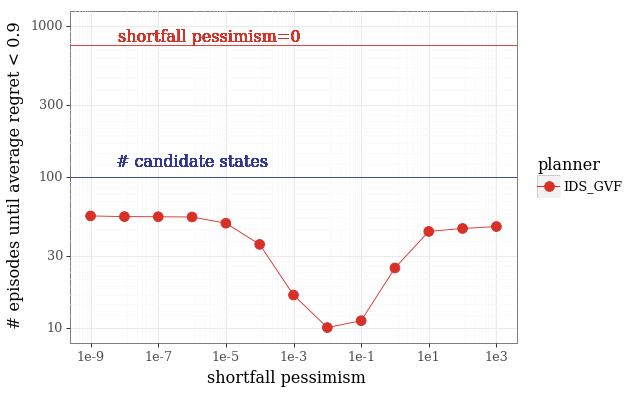}
  \caption{Results are very robust to the scale of $\epsilon^{\rm pess}$.}
  \label{fig:inform_ridge}
\end{subfigure}
\caption{Additional regret pessimism can remedy some of the shortcomings of IDS.}
\label{fig:intrinsic_motivation}
\end{figure}

\chapter{Closing Remarks}

The concepts and algorithms we have introduced are motivated by an objective to minimize regret.  They serve to guide agent design.  Resulting agents are unlikely to attain minimal regret, though these concepts may lead to lower regret than otherwise.  There are many directions in which this line of work can be extended and improved, some of which we now discuss.

We began with a framing of the exploration-exploitation problem as one of figuring out what to do when the agent has not identified its target action, which could, for example, be the optimal action for its current situational state.  As discussed in Section \ref{se:pessimism}, this poses a limitation when there are complex interdependencies across situational states that can make it helpful to execute alternative actions even when the optimal one is known.  That section also introduced a possible remedy, involving injection of pessimism in the information ratio.  Both this specific approach and the broader issue deserve further investigation.

Our regret bound and, more broadly, our discussion of information assumed a fixed learning target, with epistemic state dynamics monotonically reducing uncertainty about it.  This allowed us to think of an agent as converging over time on the performance of a baseline policy, as indicated by sublinear accumulation of regret.  However, in a sufficiently complex environment, it may be advantageous for the agent to adapt its learning target based on agent state.  For example, if an agent with bounded memory lives in one country but moves to another, it ought to forget information specific to its previous home in order to make room required for retaining new information.  It would be interesting to extend our information-theoretic concepts and results to accommodate such settings.

We have taken the learning target to be fixed and treated the target policy as a baseline.  An alternative could be to prescribe a class of learning targets, with varying target policy regret.  The designer might then balance between the number of bits required, the cost of acquiring those bits, and regret of the resulting target policy.  This balance could also be adapted over time to reduce regret further.  While the work of \citet{arumugam2021deciding} presents an initial investigation pertaining to very simple bandit environments, leveraging concepts from rate-distortion theory, much remains to be understood about this subject.

More broadly, one could consider simultaneous optimization of proxies and learning targets.  In particular, for any reward function and distribution over environments, the designer could execute an algorithm that automatically selects a learning target and proxy, possibly from sets she specifies.  This topic could be thought of as {\it automated architecture design}.

Concepts and algorithms introduced in this paper require quantification of information, and for this we have used entropy to assess the number of bits required to identify a learning target.  A growing body of related work in the area of bandit learning considers alternative information measures \citep{frankel2019quantifying,kirschner2020asymptotically,kirschner2020information,lattimore2020mirror,ryzhov2012kg,powell2012optimal,devraj2021information}.  Indeed, the {\it bit} was introduced for studying communication systems, and it is conceivable that reinforcement learning calls for a different notion of information.  It remains to be seen whether such alternatives will lead to improved reinforcement learning agents.

\begin{acknowledgements}
Our thinking about the relation between information and sequential decision was shaped by an earlier collaboration with Dan Russo, which focused on bandit environments.  Tor Lattimore offered many helpful comments on an earlier draft.  Johannes Kirschner provided valuable feedback during the review process that helped improve the paper significantly.  Chao Qin's careful study of value-IDS revealed technical limitations, as captured by his insightful result that we cite and discuss at the end of Chapter \ref{se:cost-benefit}.  We also would like to thank Chao Qin for the helpful discussion on the ``optimism conjecture'' in the Thompson sampling analysis. The paper also benefited from discussions with and feedback from Dilip Arumugam, Seyed Mohammad Asghari, Andy Barto, Dimitri Bertsekas, Adithya Devraj, Shi Dong, Abbas El Gamal, Yanjun Han, Mike Harrison, Geoffrey Irving, Anmol Kagrecha, Ayfer Ozgur, Warren Powell, Doina Precup, Omar Rivasplata, David Silver, Satinder Singh, Rich Sutton, David Tse, John Tsitsiklis, and Tsachy Weissman.
\end{acknowledgements}

\appendix

\chapter{Analysis of Thompson Sampling with an Episodic MDP}
\label{se:mdp-analysis}

In this appendix, we provide an analysis of Thompson sampling for the episodic MDP described in Section \ref{se:episodic-mdp}.
We start by establishing useful concentration inequalities in Section \ref{se:mdp-concentration}. We then propose an optimism conjecture in Section \ref{sec:optimism} and support it through empirical simulations. Finally, in Section \ref{se:mdp-regret}, we establish a regret bound for Thompson sampling in the environments described in Section \ref{se:episodic-mdp}, assuming that the optimism conjecture holds.
Note that all the entropy and mutual information terms in this section are measured in nats.

\section{Information and Concentration}
\label{se:mdp-concentration}


\begin{lemma}
\label{le:beta-information-bound}
If $p$ and $\hat{p}$ are independent and identically beta-distributed random variables with parameters $\alpha \geq 1$ and $\beta \geq 1$ then, for all $c > 0$,
$$\Pr(\sqrt{c \I (p; b)} - |p - \hat{p}| \leq 0) \leq 2 e^{- c/6},$$
where $b \sim \mathrm{Bernoulli}(p)$ conditioned on $p$. 
\end{lemma}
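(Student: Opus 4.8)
The plan is to isolate two ingredients and combine them: a sub-Gaussian tail bound for the difference $p-\hat p$, and a matching lower bound on the Beta--Bernoulli mutual information. Write $\bar p = \E[p] = \alpha/(\alpha+\beta)$, recall $\Var(p) = \bar p(1-\bar p)/(\alpha+\beta+1)$, and let $g(q) = -q\log q - (1-q)\log(1-q)$ denote the binary entropy function, so that $\I(p;b) = g(\bar p) - \E[g(p)]$ (all logs natural, matching the $e^{-c/6}$). The two facts I would prove are: (i) $p-\hat p$ is sub-Gaussian with variance proxy $1/\bigl(2(\alpha+\beta+1)\bigr)$, so $\Pr\bigl(|p-\hat p| \ge t\bigr) \le 2\exp\!\bigl(-(\alpha+\beta+1)t^2\bigr)$ for all $t\ge 0$; and (ii) $\I(p;b) \ge 1/\bigl(6(\alpha+\beta+1)\bigr)$. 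Given these, setting $t = \sqrt{c\,\I(p;b)}$ in (i) yields $\Pr\bigl(\sqrt{c\,\I(p;b)} \le |p-\hat p|\bigr) \le 2\exp\!\bigl(-(\alpha+\beta+1)\,c\,\I(p;b)\bigr) \le 2\exp(-c/6)$ by (ii), which is exactly the claim.

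For (i) I would invoke the sharp sub-Gaussianity of the Beta family: $\mathrm{Beta}(\alpha,\beta)$ has variance proxy $1/\bigl(4(\alpha+\beta+1)\bigr)$, i.e. $\log\E\bigl[e^{\lambda(p-\bar p)}\bigr] \le \lambda^2/\bigl(8(\alpha+\beta+1)\bigr)$ for all $\lambda$ (Marchal--Arbel); a self-contained derivation can be obtained from a direct estimate of the confluent-hypergeometric cumulant generating function of $\mathrm{Beta}(\alpha,\beta)$ if one prefers to avoid the citation. Since $p\perp\hat p$ and both are identically distributed, the centred difference has cumulant generating function at most $\lambda^2/\bigl(4(\alpha+\beta+1)\bigr)$, and the stated tail bound follows from the Chernoff bound.

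For (ii) --- the crux --- I would begin with a second-order Taylor expansion of the concave $g$ about $\bar p$: for each realization of $p$ there is $\xi$ strictly between $p$ and $\bar p$ with $g(p) = g(\bar p) + g'(\bar p)(p-\bar p) + \tfrac12 g''(\xi)(p-\bar p)^2$, and since $g''(\xi) = -1/\bigl(\xi(1-\xi)\bigr)$, taking expectations gives the identity $\I(p;b) = \tfrac12\,\E\!\left[\dfrac{(p-\bar p)^2}{\xi(1-\xi)}\right]$. I would then split on the location of $\bar p$. If $\bar p(1-\bar p) \ge 1/12$, bound $\xi(1-\xi) \le 1/4$ to obtain $\I(p;b) \ge 2\Var(p) = 2\bar p(1-\bar p)/(\alpha+\beta+1) \ge 1/\bigl(6(\alpha+\beta+1)\bigr)$. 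If instead $\bar p(1-\bar p) < 1/12$ --- say $\bar p < 1/10$, the case $\bar p > 9/10$ being symmetric in $(\alpha,\beta)$ --- this variance bound is too lossy, so I would instead retain only the ``$b=1$'' term in the posterior-update identity $\I(p;b) = \E_b\bigl[\KL\bigl(\mathrm{Beta}(\alpha+b,\beta+1-b)\,\|\,\mathrm{Beta}(\alpha,\beta)\bigr)\bigr]$ and use the closed form for KL between Beta laws to get $\I(p;b) \ge \tfrac{\alpha}{\alpha+\beta}\Bigl(\log\tfrac{\alpha+\beta}{\alpha} + \psi(\alpha+1) - \psi(\alpha+\beta+1)\Bigr)$, which I would then bound below using the elementary digamma estimates $\log x - 1/x \le \psi(x) \le \log x$ together with $\beta \ge 9\alpha \ge 9$.

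I expect the extreme-mean sub-case of (ii) to be the main obstacle: a clean, dimension-free lower bound on the Beta--Bernoulli mutual information when the Beta law is nearly degenerate at an endpoint. There the variance-based bound $\I(p;b)\ge 2\Var(p)$ collapses like $\bar p(1-\bar p)$, so one genuinely needs the posterior-update/digamma estimate, and the constant $1/6$ (rather than the true $\approx 1/4$) is there precisely to leave slack for the crude bounds in both regimes and for the $\sqrt c$-versus-exponent bookkeeping in the final substitution.
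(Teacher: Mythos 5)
Your overall skeleton is exactly the paper's: show $p-\hat p$ is sub-Gaussian with proxy of order $1/(\alpha+\beta)$ (the paper cites Elder's $\tfrac{1}{4(\alpha+\beta)+2}$ result where you cite Marchal--Arbel's $\tfrac{1}{4(\alpha+\beta+1)}$), apply the Chernoff tail at $t=\sqrt{c\,\I(p;b)}$, and cancel the $(\alpha+\beta)$ factor against a lower bound $\I(p;b)\gtrsim \tfrac{1}{6(\alpha+\beta)}$. Your final assembly and your ``case 1'' (variance) argument, including the Taylor identity $\I(p;b)=\tfrac12\E\bigl[(p-\bar p)^2/(\xi(1-\xi))\bigr]$, are correct. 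The difference is that the paper simply cites this mutual-information lower bound (Lemma~10 of the Lu dissertation), whereas you attempt to prove it from scratch, and that is where your argument has a genuine gap.

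In your extreme-mean case the chain of estimates you specify does not deliver the required bound. Retaining only the $b=1$ term gives, correctly, $\I(p;b)\ge \tfrac{\alpha}{\alpha+\beta}\bigl(\log\tfrac{\alpha+\beta}{\alpha}+\psi(\alpha+1)-\psi(\alpha+\beta+1)\bigr)$, but the digamma inequalities you invoke, $\log x - 1/x\le\psi(x)\le\log x$, are far too lossy at small $\alpha$: they replace $\psi(\alpha+1)$ by $\log(\alpha+1)-\tfrac{1}{\alpha+1}$, which at $\alpha=1$ is $0.193$ versus the true $\psi(2)\approx 0.423$. Concretely, take $\alpha=1$, $\beta=10$ (so $\bar p=1/11$, which lies in your case 2): your estimates give at best $\tfrac{1}{11}\bigl(\log\tfrac{22}{12}-\tfrac12\bigr)\approx 0.0097$, while the target is $\tfrac{1}{6(\alpha+\beta+1)}=\tfrac{1}{72}\approx 0.0139$; the inequality fails throughout roughly $\beta\in[10,30]$ at $\alpha=1$. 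The retained $b=1$ term itself is large enough (its exact value at $\alpha=1,\beta=10$ is about $0.034$), so the approach is salvageable: use the sharper bounds $\log(x+\tfrac12)\le\psi(x+1)\le\log x+\tfrac{1}{2x}$ (the very inequalities the paper quotes in its Lemma~\ref{le:mutual-info-proxy-bound}), which give $\tfrac{\alpha}{\alpha+\beta}\bigl(\log(1+\tfrac{1}{2\alpha})-\tfrac{1}{2(\alpha+\beta)}\bigr)$ and comfortably clear $\tfrac{1}{6(\alpha+\beta+1)}$ in your regime $\beta\ge 9\alpha\ge 9$ --- or simply cite the known bound, as the paper does. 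As written, though, the crux step you yourself flagged is not established.
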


\begin{proof}
A real-valued random variable $X$ is said to be $\sigma^2$-sub-Gaussian if $\E[\exp(\lambda(X-\E X))] \leq \exp(\lambda^2 \sigma^2 / 2)$ for all $\lambda$.
We first prove that $p - \hat{p}$ is $\frac{1}{2(\alpha+\beta)}$-sub-Gaussian. Since $p$ and $\hat{p}$ are i.i.d. from $\rm{Beta}(\alpha, \beta)$, thus $\E [p -
\hat{p}]=0$. Moreover, from Theorem~4 of \citet{elder2016bayesian}, for $p \sim \rm{Beta}(\alpha, \beta)$, $p - \E[p]$ is $\frac{1}{4(\alpha+\beta)+2}$-sub-Gaussian. Consequently, $\E[p] -\hat{p} =
\E[\hat{p}] - \hat{p} = - \left( \hat{p} - \E[\hat{p}] \right)$ is also $\frac{1}{4(\alpha+\beta)+2}$-sub-Gaussian. Since, $p - \E[p]$ and  $\E[p] -\hat{p}$ are also independent, we have
\[
p - \hat{p} = (p - \E[p]) + (\E[p] - \hat{p})
\]
is $\frac{1}{2(\alpha+\beta)+1}$-sub-Gaussian. Since $\frac{1}{2(\alpha+\beta)+1} < \frac{1}{2(\alpha+\beta)}$, $p - \hat{p}$ is also $\frac{1}{2(\alpha+\beta)}$-sub-Gaussian.

Consequently, from the sub-Gaussian tail bound, we have
\begin{align*}
    & \Pr(\sqrt{c \I (p; b)} - |p - \hat{p}| \leq 0) \\
    & = \, \Pr \left(|p - \hat{p}| \geq \sqrt{c \I (p; b)}\right) \\
    & \leq \, 2 \exp \left( - \frac{c \I (p; b) }{2 \cdot \frac{1}{2(\alpha+\beta)}} \right) =
    2 \exp \left( - c \I (p; b) (\alpha + \beta) \right).
\end{align*}
From Lemma~10 of \citet{xlu2020dissertation}, for $p \sim {\rm Beta}(\alpha, \beta)$ with $\alpha \geq 1 \text{ and} \beta \geq 1$, 
$$\I (p; b) \geq \frac{1}{6 (\alpha + \beta)}.$$ Thus, we have
\[
\Pr(\sqrt{c \I(p; b)} - |p - \hat{p}| \leq 0) \leq 2 \exp \left( - c \I(p; b) (\alpha + \beta) \right) \leq 2 e^{-c/6}.
\]
\end{proof}

\begin{lemma}
\label{le:uniform-beta-information-bound}
For any positive integer $N$, if $p_1,\ldots,p_N$ are independent beta-distributed random variables with parameters greater than or equal to $1$ and, for each $n$, $\hat{p}_n$ is independent and distributed identically with $p_n$ then, for all $\delta \in (0, 1)$,
$$\E\left[\min_{n \in \{1,\ldots,N\}} \left(\sqrt{6 \I(p_n; b_n) \ln \frac{2 N}{\delta}} - |p_n - \hat{p}_n| + \delta \right) \right] \geq 0,$$ 
where $b_n \sim \mathrm{Bernoulli}(p_n)$ conditioned on $p_n$ for $n = 1, \dots, N$.
\end{lemma}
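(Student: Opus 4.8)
The plan is to reduce the statement to Lemma~\ref{le:beta-information-bound} through a suitable choice of the free constant $c$, then combine a union bound with a crude almost-sure lower bound to pass from a tail probability to a bound on the expectation.

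First I would set $c = 6 \log \frac{2N}{\delta}$, which is positive since $\delta \in (0,1)$ and $N \geq 1$ force $2N/\delta > 1$. With this choice, $\sqrt{c\,\I(p_n; b_n)} = \sqrt{6\,\I(p_n; b_n) \log \frac{2N}{\delta}}$ for each $n$, and Lemma~\ref{le:beta-information-bound}, applied to the i.i.d.\ pair $(p_n, \hat p_n)$ (both beta-distributed with parameters $\geq 1$) and $b_n$ drawn from $\mathrm{Bern}(p_n)$, yields $\Pr\!\big(\sqrt{6\,\I(p_n; b_n)\log\frac{2N}{\delta}} - |p_n - \hat p_n| \leq 0\big) \leq 2 e^{-c/6} = \delta/N$. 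A union bound over $n \in \{1,\dots,N\}$ then gives $\Pr(Y \leq 0) \leq \delta$, where $Y := \min_{n} \big(\sqrt{6\,\I(p_n; b_n)\log\frac{2N}{\delta}} - |p_n - \hat p_n|\big)$, because the event $\{Y \leq 0\}$ is contained in the union of the $N$ events above. Note that independence of the $p_n$ across $n$ is not even needed for this step.

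Next I would observe that $Y \geq -1$ almost surely, since $\sqrt{6\,\I(p_n; b_n)\log\frac{2N}{\delta}} \geq 0$ and $|p_n - \hat p_n| \leq 1$ (both $p_n$ and $\hat p_n$ lie in $[0,1]$). Decomposing the expectation, $\E[Y] = \E[Y\,\1\{Y > 0\}] + \E[Y\,\1\{Y \leq 0\}] \geq 0 + (-1)\cdot\Pr(Y \leq 0) \geq -\delta$. Since adding the constant $\delta$ commutes with the minimum, $\min_n\big(\sqrt{6\,\I(p_n; b_n)\log\frac{2N}{\delta}} - |p_n - \hat p_n| + \delta\big) = Y + \delta$, so $\E\big[\min_n(\cdots)\big] = \E[Y] + \delta \geq 0$, which is the claim.

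The only point worth flagging is the bookkeeping that makes everything fit together: one must recognize that $c = 6\log(2N/\delta)$ is precisely the value for which $2e^{-c/6} = \delta/N$, so that the union bound over the $N$ terms yields total failure probability exactly $\delta$ and the additive $\delta$ in the statement exactly absorbs the resulting $-\delta$ in $\E[Y]$. Everything else is routine. One should also confirm that $b_n$ is intended to be $\mathrm{Bern}(p_n)$-distributed, matching the role of $b$ in Lemma~\ref{le:beta-information-bound}.
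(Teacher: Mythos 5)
Your proof is correct and follows essentially the same route as the paper: apply Lemma~\ref{le:beta-information-bound} with $c = 6\log\frac{2N}{\delta}$, bound the probability that the minimum is nonpositive by $\delta$, and use the almost-sure lower bound $-1$ on the minimum to convert the tail bound into the stated expectation bound. The only difference is that you use a union bound where the paper exploits independence across $n$ together with Bernoulli's inequality to reach the same $2Ne^{-c/6}$ estimate; your step is marginally simpler and, as you note, does not require independence across the coordinates.
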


\begin{proof}
For any $c > 6\ln(2)$, we have $2 e^{-c/6} <1$, thus
\begin{align*}
& \Pr\left(\min_{n \in \{1, \ldots, N\}} \left( \sqrt{c\I(p_n; b_n)} - |p_n -\hat{p}_n|\right) \leq 0 \right) \\
 & = 1 - \Pr\left(\min_{n \in \{1, \ldots, N\}} \left( \sqrt{c\I(p_n; b_n)} - |p_n -\hat{p}_n|\right) \geq 0 \right) \\
 & = 1 - \prod_{n=1}^{N}\Pr\left( \sqrt{c\I(p_n; b_n)} - |p_n -\hat{p}_n| \geq 0 \right) \\
 & = 1 - \prod_{n=1}^{N}\left( 1-\Pr\left( \sqrt{c\I(p_n; b_n)} - |p_n -\hat{p}_n| \leq 0 \right)\right) \\
 & \leq 1 - \left(1 - 2 e^{-c/6} \right)^N ~~~ \text{ from Lemma \ref{le:beta-information-bound} and $2 e^{-c/6} <1$} \\
 & \leq 1 - 1 + 2Ne^{-c/6} ~~~ \text{ from Bernoulli's inequality} \\
 & =  2Ne^{-c/6}.
\end{align*}

On substituting $c=6 \ln\frac{2N}{\delta} > 6 \ln(2)$, 
\begin{align*}
\Pr\left(\min_{n \in \{1, \ldots, N\}} \left( \sqrt{6 \I (p_n; b_n) \ln \frac{2 N}{\delta}} - |p_n -\hat{p}_n|\right) \leq 0 \right) \leq \delta \label{eq:uniform-beta-prob-bound}
\end{align*}
To simplify the notation, we define
\[
h = \min_{n \in \{1,\ldots,N\}} \left(\sqrt{6 \I(p_n; b_n) \ln \frac{2 N}{\delta}} - |p_n - \hat{p}_n| \right).
\]
Notice that since $p_n, \hat{p}_n \in [0, 1]$, thus $h \geq -1$ always holds.
Also, from the above results, $\Pr(h \leq 0) \leq \delta$.
Therefore,
\begin{align*}
    \E[h] =& \, \E[h | h \geq 0] \Pr(h \geq 0) + \E[h | h < 0] \Pr(h < 0) \\
    \stackrel{(a)}{\geq} & \, \E[h | h < 0] \Pr(h < 0)
    \stackrel{(b)}{\geq}  -\delta,
\end{align*}
where (a) holds since $\E[h | h \geq 0] \Pr(h \geq 0) \geq 0$, and (b) holds since $\E[h | h < 0] \geq -1$ and $\Pr(h < 0) \leq \Pr(h \leq 0) \leq \delta$. Consequently, $\E [h +\delta] \geq 0$, that is
$$\E\left[\min_{n \in \{1,\ldots,N\}} \left(\sqrt{6 \I(p_n; b_n) \ln \frac{2 N}{\delta}} - |p_n - \hat{p}_n| + \delta \right) \right] \geq 0.$$ 



\end{proof}

\begin{lemma}
\label{le:mutual-info-proxy-bound}
Assume that $p$ is a beta-distributed random variable with $\alpha \geq 1$ and $\beta \geq 1$, and $b$ is drawn from $\mathrm{Bernoulli}(p)$ conditioned on $p$. For $\delta = 1/2, 1/3, 1/4, \ldots$, define $q = \left \lceil p/ \delta \right \rceil$. Then we have
\[
\I(p; b) \geq \I(q; b) \geq \I(p; b) - \max \left \{ 3, \ln \left(\frac{2}{\delta} \right) \right \} \delta.
\]
%
\end{lemma}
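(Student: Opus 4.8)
The first inequality $\I(p;b) \geq \I(q;b)$ is immediate from the data processing inequality: since $q = \lceil p/\delta\rceil$ is a deterministic function of $p$, we have $\I(p;b) \geq \I(q;b)$. So the substance of the lemma is the lower bound $\I(q;b) \geq \I(p;b) - \max\{3,\log(1/\delta)\}\,\delta$, i.e., quantizing $p$ into $1/\delta$ bins costs at most $\max\{3,\log(1/\delta)\}\,\delta$ bits of information about $b$. The plan is to exploit the identity $\I(p;b) = \H(b) - \H(b|p)$ and similarly $\I(q;b) = \H(b) - \H(b|q)$, so that $\I(p;b) - \I(q;b) = \H(b|q) - \H(b|p)$. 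Since $b \in \{0,1\}$, $\H(b|p) = \E[h_2(p)]$ and $\H(b|q) = \E[h_2(\E[p|q])]$, where $h_2$ is the binary entropy function and the inner expectation uses that $\Pr(b=1|q) = \E[p|q]$. Thus I must bound $\E[h_2(\bar p_q)] - \E[h_2(p)]$, where $\bar p_q = \E[p \mid q]$.

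Next I would control this difference conditionally on each bin. Within the bin where $q$ takes a fixed value, $p$ ranges over an interval of length at most $\delta$, so $|p - \bar p_q| \leq \delta$ on that bin. By concavity of $h_2$ and Jensen, $\E[h_2(p)|q] \leq h_2(\bar p_q)$, confirming the difference is nonnegative (consistent with data processing). For the upper bound I would use a second-order argument: Taylor-expand $h_2$ around $\bar p_q$, noting $\E[(p - \bar p_q)|q] = 0$, so the difference $h_2(\bar p_q) - \E[h_2(p)|q]$ equals $-\tfrac12 \E[h_2''(\xi)(p-\bar p_q)^2 \mid q]$ for suitable $\xi$, and then bound $|h_2''(\xi)| = \frac{1}{\xi(1-\xi)\ln 2}$ over the interval. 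The term $(p-\bar p_q)^2 \leq \delta |p - \bar p_q|$, so this gives a bound on the order of $\delta \cdot \E[|p - \bar p_q| \cdot |h_2''(\xi)|]$; taking expectation over $q$ and then over $p$, the $|h_2''|$ factor near the endpoints is where the $\max\{3,\log(1/\delta)\}$ comes from — intervals near $0$ or $1$ have $\xi(1-\xi)$ possibly as small as order $\delta$, contributing a $\log(1/\delta)$ factor, whereas interior intervals contribute only a constant. This is plausibly where Lemma~10 of \cite{xlu2020dissertation} or a direct integration is invoked: one bounds $\int_0^1 \min\{1/(x(1-x)), \text{something}\}\,dx$-type quantities against $\max\{3,\log(1/\delta)\}$.

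The main obstacle I anticipate is getting the constant right near the endpoints $p \approx 0$ and $p \approx 1$, where $h_2$ has infinite derivative, so the naive Taylor bound $|h_2''| \leq$ const fails. The careful step is to split into the bin containing $0$ (and by symmetry the one containing $1$) versus the remaining bins: on the extreme bins one uses the crude bound $h_2(\bar p_q) - \E[h_2(p)|q] \leq h_2(\bar p_q) \leq h_2(\delta)$ directly (since $\bar p_q \leq \delta$ there), and $h_2(\delta) \leq \delta \log(1/\delta) + \delta \log e \leq (\log(1/\delta) + 2)\delta$, hence the appearance of $\max\{3,\log(1/\delta)\}\delta$; on interior bins, $\xi(1-\xi) \geq \delta(1-\delta) \geq \delta/2$ (roughly), so $|h_2''(\xi)| \leq 2/(\delta \ln 2)$, making the per-bin contribution $O(\delta \cdot |p-\bar p_q|)$ with a bounded constant, and summing these against $\E|p - \bar p_q| \leq \delta$ keeps everything $O(\delta)$. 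Assembling the endpoint contributions (dominant, giving the $\log(1/\delta)$) with the interior contributions (giving the constant $3$, absorbed into the max) yields the stated bound. The beta-distribution hypothesis with parameters $\geq 1$ ensures $p$ has a bounded density, which is what makes $\E[|p - \bar p_q|]$ and the per-bin estimates well-behaved and lets me reuse the sub-Gaussian/density facts already established for such $p$.
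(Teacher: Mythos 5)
Your reduction is sound as far as it goes, and it is essentially the paper's own decomposition: the first inequality via data processing matches the paper's $\I(p;b)=\I(p,q;b)\ge \I(q;b)$, and your quantity $\E[h_2(\E[p\mid q])]-\E[h_2(p)]$ is exactly the paper's per-bin sum $\sum_i\int_{(i-1)\delta}^{i\delta}f(p)\,\KL\bigl(\mathrm{Bern}(p)\,\|\,\mathrm{Bern}(\tilde q_i)\bigr)dp$ with $\tilde q_i=\E[p\mid q=i\delta]$; your interior-bin curvature bound plays the role of the paper's chi-square bound $\KL\le (p-\tilde q_i)^2/(\tilde q_i(1-\tilde q_i))\le \tfrac83\delta$. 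The genuine gap is in the boundary bin and the final assembly. Your crude estimate $h_2(\bar p_q)\le h_2(\delta)\le \delta\bigl(\log(1/\delta)+\log e\bigr)$ does \emph{not} give the stated constant: since the total loss is bounded only by the maximum per-bin bound, your argument yields $\max\{3,\log(1/\delta)+\log e\}\,\delta$, which exceeds $\max\{3,\log(1/\delta)\}\,\delta$ whenever $\log(1/\delta)>2$ (the paper's $\log$ is natural, so for every $\delta\le 1/8$). The exact form of the constant is then used verbatim in Lemma \ref{le:uniform-beta-proxy-information-bound} and Theorem \ref{th:mdp-regret-bound}, so this is not a cosmetic slip: as written, the lemma does not follow.

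It is also worth noting that the paper closes the boundary bin by a route you do not have, and this is where the beta hypothesis genuinely enters (not, as you suggest, via bounded densities, which your argument never needs): the paper lower-bounds the first-bin conditional mean, $\tilde q_1=\E[p\mid q=\delta]\ge \tfrac{\alpha}{\alpha+\beta}\delta$, through beta-function identities, obtaining a first-bin bound $\max\{2,\log\tfrac{\alpha+\beta}{\alpha}\}\delta$ that can be large, and then trades it off against the digamma-based estimate $\I(p;b)<\tfrac{1}{\alpha+\beta}$ (the loss can never exceed the total information), optimizing $\min\{\delta\log x,\,1/x\}$ at $x=1/\delta$ to land exactly on $\max\{3,\log(1/\delta)\}\delta$. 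Alternatively, your own framework can be repaired without any beta facts by not discarding $\E[h_2(p)\mid q]$ on the boundary bin: since $h_2$ is concave, the Jensen gap there is at most the maximal gap between $h_2$ and its chord on $[0,\delta]$, i.e.\ $\max_{x\in[0,\delta]}\bigl(h_2(x)-\tfrac{x}{\delta}h_2(\delta)\bigr)=-\log(1-x^*)$ at the point $x^*$ where $h_2'(x^*)=h_2(\delta)/\delta$, and since $h_2(\delta)/\delta\ge\log(1/\delta)$ one gets $x^*\le\delta/(1+\delta)$ and hence a gap at most $\delta$ — the $\log(1/\delta)$ terms cancel, giving the distribution-free bound $\I(p;b)-\I(q;b)\le\delta$, which implies the lemma. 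Either repair is needed; the crude $h_2(\delta)$ bound by itself is not enough.
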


\begin{proof}
Notice that conditioning on $p$, $q$ is deterministic, and $q$ and $b$ are independent. Consequently, 
\[
\I(p; b) = \I(p, q; b) \geq \I(q; b).
\]

Let $f(\cdot)$ denote the probability density function of $p$. To simplify exposition, we use $\tilde{q}_i$ to denote $\Pr (b=1 | q= i \delta)$ for $i = 1, \dots, 1/\delta$. Note that
\[
\tilde{q}_i = \Pr (b=1 | q= i \delta) = \E \left[ p | q = i \delta \right] = \frac{\int_{(i-1)\delta}^{i \delta} p f(p) dp}{\int_{(i-1) \delta}^{i \delta} f(p) dp} \quad \forall i = 1, 2, \ldots, 1/\delta .
\]
With some algebraic manipulation, we can show that
\begin{align*}
\H(b | q) &= \sum_{i=1}^{1/\delta} \Pr(q = i\delta) \left( \tilde{q}_i \ln \frac{1}{\tilde{q}_i} + (1-\tilde{q}_i) \ln \frac{1}{1-\tilde{q}_i} \right) \\
&= \sum_{i=1}^{1/\delta} \int_{(i-1)\delta}^{i\delta} p f(p) dp \ln\frac{1}{\tilde{q}_i} + \int_{(i-1)\delta}^{i\delta} (1-p) f(p) dp \ln\frac{1}{1-\tilde{q}_i} \\
&= \sum_{i=1}^{1/\delta} \int_{(i-1)\delta}^{i\delta} \left( p \ln \frac{1}{\tilde{q}_i} + (1-p) \ln \frac{1}{1-\tilde{q}_i} \right) f(p) dp.
\end{align*}
Thus,
\begin{align*}
\I (p; b) -  \I (q; b)
&= \H(b|q) - \H(b|p) \\
&= \sum_{i=1}^{1/\delta} \int_{(i-1)\delta}^{i\delta} \left( p \ln \frac{p}{\tilde{q}_i} + (1-p) \ln \frac{1-p}{1-\tilde{q}_i} \right) f(p) dp \\
&= \sum_{i=1}^{1/\delta} \int_{(i-1)\delta}^{i \delta} \KL \left(p \| \tilde{q}_i \right) f(p) dp,
\end{align*}
where, with some abuse of notation, we use $\KL (p \| \tilde{q}_i )$ to denote a shorthand for $\KL \left(\mathrm{Bern}(p) \| \mathrm{Bern}(\tilde{q}_i) \right)$.


Without loss of generality, we assume that $\alpha \leq \beta$ (the other case is symmetric). 
Obviously, $\forall i =1 , 2, \ldots, 1/\delta$, we have $(i-1) \delta \leq \tilde{q}_i \leq i \delta$. Moreover, since $1 \leq \alpha \leq \beta$, we have $\tilde{q}_{1/\delta} \leq 1 - \delta/2$. This is because for $\mathrm{Beta}(\alpha, \beta)$ with $1 \leq \alpha \leq \beta$, $\mathrm{Beta}(\alpha, \beta)$ is either a uniform distribution, or a uni-modal distribution with mode less than or equal to $0.5$. Hence, $f(p)$ is strictly decreasing on interval $[1-\delta, 1]$, and hence $\tilde{q}_{1/\delta} \leq 1- \delta/2$. Consequently, for $i = 2, \ldots, 1/\delta$, we have



\begin{align*}
    \KL (p \| \tilde{q}_i ) \stackrel{(a)}{\leq} \frac{p^2}{\tilde{q}_i} + \frac{(1-p)^2}{1 - \tilde{q}_i} - 1 = 
    \frac{(p - \tilde{q}_i)^2}{\tilde{q}_i (1 - \tilde{q}_i)} \stackrel{(b)}{\leq} \frac{\delta^2}{\frac{3}{8} \delta} = \frac{8}{3} \delta < 3 \delta,
\end{align*}
where (a) follows from Theorem~1 of \citet{dragomir2000some}, and (b) follows from $|p - \tilde{q}_i| \leq \delta $, and $\delta \leq \tilde{q}_i \leq 1 - \delta/2$ for $i \geq 2$. Specifically, for $\delta \leq \tilde{q}_i \leq 1 - \delta/2$, we have
\[
\tilde{q}_i (1 - \tilde{q}_i) \geq \frac{\delta}{2} \left(1 - \frac{\delta}{2} \right) \geq \frac{\delta}{2} \left(1 - \frac{1}{4} \right) = \frac{3}{8} \delta,
\]
where the second inequality follows from $\delta \leq \frac{1}{2}$. 

We now consider the case when $i=1$ and bound $\KL(p \| \tilde{q}_1)$ for $p \in (0, \delta]$. Notice that for $p \in (0, \delta]$, we have
\begin{align*}
    \KL(p \| q_1) \stackrel{(a)}{\leq} & \, \max \left \{\KL(0 \| q_1), \,
    \KL(\delta \| q_1) \right \} \\
    =& \, \max \left \{
    \ln \frac{1}{1 - \tilde{q}_1}, \, \delta \ln \frac{\delta}{\tilde{q}_1} + (1-\delta) \ln \frac{1 -\delta}{1 - \tilde{q}_1}
    \right \} \\
    \stackrel{(b)}{\leq}& \,
    \max \left \{
    2 \delta, \delta \ln \frac{\delta}{\tilde{q}_1}
    \right \},
\end{align*}
where (a) follows from $p \in (0, \delta]$, and (b) follows from $ \ln \frac{1 -\delta}{1 - \tilde{q}_1} \leq 0$ and 
\[
\ln \frac{1}{1 - \tilde{q}_1} \leq \ln \frac{1}{1 - \delta} \leq \ln \left(1 + \frac{\delta}{1-\delta} \right) \leq \frac{\delta}{1-\delta} \leq 2 \delta,
\]
where the last inequality follows from $\delta \leq 1/2$. We now derive a lower bound on $\tilde{q}_1$. Let $F(\cdot ; \alpha, \beta)$ denote the CDF of $\mathrm{Beta}(\alpha, \beta)$, then we have
\begin{align*}
    \tilde{q}_1 = & \,
    \frac{1}{F(\delta; \alpha, \beta)}   \int_{0}^{\delta} \frac{x^{\alpha} (1-x)^{\beta-1}}{B(\alpha, \beta)} dx \\
    =& \, \frac{B(\alpha+1, \beta)}{B(\alpha, \beta) F(\delta; \alpha, \beta)}   \int_{0}^{\delta} \frac{x^{\alpha} (1-x)^{\beta-1}}{B(\alpha+1, \beta)} dx \\
    =& \, \frac{B(\alpha+1, \beta) F(\delta; \alpha+1, \beta)}{B(\alpha, \beta) F(\delta; \alpha, \beta)} \\
    \stackrel{(a)}{=}& \, \frac{\alpha}{\alpha+\beta} \left[1 -\frac{\delta^{\alpha}(1-\delta)^{\beta}}{\alpha \int_{0}^{\delta} x^{\alpha-1} (1-x)^{\beta-1} dx} \right] \\
    \stackrel{(b)}{\geq} & \, \frac{\alpha}{\alpha+\beta} \left[1 -\frac{\delta^{\alpha}(1-\delta)^{\beta}}{\alpha \int_{0}^{\delta} x^{\alpha-1} (1-\delta)^{\beta-1} dx} \right] =
    \frac{\alpha}{\alpha+\beta} \delta,
\end{align*}
where $B(\cdot, \cdot)$ is the beta function. Note that (a) follows from $B(\alpha+1, \beta) = B(\alpha, \beta) \frac{\alpha}{\alpha+\beta}$, and 
\[
F(\delta; \alpha+1, \beta) = F(\delta; \alpha, \beta) -\frac{\delta^\alpha (1-\delta)^\beta}{\alpha B(\alpha, \beta)},
\]
and (b) follows from $1-x \geq 1-\delta > 0$ and $\beta \geq 1$, and hence $(1 - x)^{\beta-1} \geq (1-\delta)^{\beta-1}$.

Combining the above results, we have
$\KL(p\| q_1) \leq \max \left \{2 , \ln \frac{\alpha+\beta}{\alpha} \right \} \delta$. Since $\KL(p \| q_i) < 3 \delta$ for $i \geq 2$, we then have
\[
\KL(p\| q_i) \leq \max \left \{3 , \ln \frac{\alpha+\beta}{\alpha} \right \} \delta \quad \forall i = 1, 2, \ldots, 1/\delta \text{ and } \forall p \in ( (i-1)\delta, i\delta] .
\]
This implies that
\[
\I(p;b) - \I (q, b) \leq 
\max \left \{3 , \ln \frac{\alpha+\beta}{\alpha} \right \} \delta .
\]

On the other hand, from Lemma~10 and Lemma~11 in \citet{xlu2020dissertation}, we have
\begin{align*}
    \I(p; b) =& \, \frac{\alpha}{\alpha+\beta} \left( \psi(\alpha+1) - \ln \alpha \right)  \\
    & + \frac{\beta}{\alpha+\beta} \left( \psi(\beta+1) - \ln \beta \right) - \left( \psi(\alpha+\beta+1)- \ln(\alpha+\beta)\right),
\end{align*}
where $\psi$ is the digamma function. From the digamma inequalities $\ln(x + 0.5) \leq \psi(x+1) \leq \ln(x) + \frac{1}{2x}$ for $x>0$ (Lemma~11 of \citet{xlu2020dissertation}), we have $\psi(\alpha+1) - \ln \alpha \leq \frac{1}{2 \alpha}$, $\psi(\beta+1) - \ln \beta \leq \frac{1}{2 \beta}$, and 
$\psi(\alpha+\beta+1) - \ln (\alpha + \beta)>0$. Consequently, we have $\I (p; b) < \frac{1}{\alpha+\beta}$. Thus we have
\[
\I (p; b) - \I (q; b) \leq \I (p; b) < \frac{1}{\alpha+\beta}.
\]
Combining the above results, we have
\[
\I (p;b) - \I (q, b) \leq 
\min \left\{ \max \left \{3 , \ln \frac{\alpha+\beta}{\alpha} \right \} \delta, \,
\frac{1}{\alpha+\beta}
\right \} .
\]
Finally, note that 
\begin{align*}
& \min \left\{ \max \left \{3 , \ln \frac{\alpha+\beta}{\alpha} \right \} \delta, \,
\frac{1}{\alpha+\beta} \right \} \\
\stackrel{(a)}{\leq} & \, \max \left \{
3 \delta, \, \min \left \{ \delta \ln (\alpha + \beta), \, \frac{1}{\alpha + \beta} \right \}
\right \} \\
\stackrel{(b)}{\leq} & \, \max \left \{
3 \delta, \, \max_{x \geq 2} \min \left \{ \delta \ln x, \, \frac{1}{x} \right \}
\right \} \\
\stackrel{(c)}{=} & \, \max \left \{
3 \delta, \, \min_{x \geq 2} \max \left \{ \delta \ln x, \, \frac{1}{x} \right \}
\right \} \\
\stackrel{(b)}{\leq} & \, \max \left\{3 \delta, \delta \ln \left(\frac{2}{\delta} \right)   \right \} = \max \left \{ 3, \ln \left(\frac{2}{\delta} \right) \right \} \delta,
\end{align*}
where (a) follows from $\alpha \geq 1$, (b) follows from $\alpha + \beta \geq 2$, (c) follows from $\max_{x \geq 2} \min \left \{ \delta \ln x, \, \frac{1}{x} \right \} = \min_{x \geq 2} \max \left \{ \delta \ln x, \, \frac{1}{x} \right \}$ for $x \geq 2$ (note that $\delta \leq 1/2$), and (d) follows by choosing $x = 2/\delta$ in $\max \left \{ \delta \ln x, \, \frac{1}{x} \right \}$.
%
\end{proof}

\begin{lemma}
\label{le:uniform-beta-proxy-information-bound}
For any positive integer $N$, let $p_1,\ldots,p_N$ be independent, beta-distributed random variables such that $p_n \sim \mathrm{Beta}(\alpha_n, \beta_n)$ with $\alpha_n > 1$ and $\beta_n > 1$ for each $n$. Moreover, for each $n$, $\hat{p}_n$ is independent and distributed identically with $p_n$, and $b_n \sim \mathrm{Bernoullil}(p_n)$ conditioned on $p_n$. Then, for all $\delta = 1/2, 1/3, 1/4, \ldots$,
\begin{align*}
\E\Bigg[\min_{n \in \{1,\ldots,N\}} \Bigg(& \sqrt{6 \I (q_n; b_n) \ln \frac{2 N}{\delta}} - |q_n - \hat{q}_n| + 2 \delta \\
& + \sqrt{ 6 \max \left \{ 3, \ln \left(\frac{2}{\delta} \right)\right \} \delta \ln \frac{2N}{\delta}} \Bigg) \Bigg] \geq 0,
\end{align*}
where $q_n = \delta \lceil p_n / \delta \rceil$ and $\hat{q}_n = \delta \lceil \hat{p}_n / \delta \rceil$ are quantized approximations.
\end{lemma}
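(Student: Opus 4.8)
The plan is to obtain Lemma~\ref{le:uniform-beta-proxy-information-bound} by combining the two preceding lemmas with two elementary facts: subadditivity of the square root, and a quantization estimate showing $|q_n - \hat q_n| \le |p_n - \hat p_n| + \delta$ (with error $\delta$, not $2\delta$). Fix $\delta \in \{1/2, 1/3, 1/4, \ldots\}$ and abbreviate $c = \max\{3, \log(1/\delta)\}$; note $\log\frac{2N}{\delta} > 0$ since $\frac{2N}{\delta} \ge 4$.

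First I would control the mutual information appearing in the statement. Since $k \mapsto \delta k$ is a bijection, $\I(q_n; b_n) = \I(\lceil p_n/\delta \rceil; b_n)$, so Lemma~\ref{le:mutual-info-proxy-bound} applies (its hypotheses $\alpha_n \ge 1$, $\beta_n \ge 1$ follow from $\alpha_n, \beta_n > 1$) and gives $\I(q_n; b_n) \ge \I(p_n; b_n) - c\delta$ for each $n$. Multiplying by $6 \log\frac{2N}{\delta} \ge 0$, and using that $6\,\I(q_n;b_n)\log\frac{2N}{\delta} \ge 0$ as well, we obtain $6\,\I(q_n;b_n)\log\frac{2N}{\delta} \ge \max\{\,6\,\I(p_n;b_n)\log\frac{2N}{\delta} - 6 c \delta \log\frac{2N}{\delta},\,0\}$. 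Applying $\sqrt{\max\{u-v,0\}} \ge \sqrt{u} - \sqrt{v}$ (valid for $u, v \ge 0$; square in the case $u > v$),
\[
\sqrt{6\,\I(q_n;b_n)\log\tfrac{2N}{\delta}} \;\ge\; \sqrt{6\,\I(p_n;b_n)\log\tfrac{2N}{\delta}} \;-\; \sqrt{6 c \delta \log\tfrac{2N}{\delta}}.
\]

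Next I would bound the perturbation. As $q_n = \delta \lceil p_n/\delta \rceil$ is the least multiple of $\delta$ at or above $p_n$, subadditivity of the ceiling ($\lceil x + y \rceil \le \lceil x \rceil + \lceil y \rceil$) gives, when $p_n \le \hat p_n$ (the reverse case is symmetric), $\lceil \hat p_n/\delta\rceil - \lceil p_n/\delta\rceil \le \lceil (\hat p_n - p_n)/\delta\rceil < (\hat p_n - p_n)/\delta + 1$, hence $|q_n - \hat q_n| \le |p_n - \hat p_n| + \delta$. I regard this as the one step deserving care: the crude estimate obtained by rounding $p_n$ and $\hat p_n$ separately only yields $|q_n - \hat q_n| \le |p_n - \hat p_n| + 2\delta$, which would make the final expectation at most $-\delta$ rather than $0$; the sharper $\delta$-bound is exactly what is needed.

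Finally I would combine the two estimates: for each $n$,
\[
\sqrt{6\,\I(q_n;b_n)\log\tfrac{2N}{\delta}} - |q_n - \hat q_n| + 2\delta + \sqrt{6 c \delta \log\tfrac{2N}{\delta}} \;\ge\; \sqrt{6\,\I(p_n;b_n)\log\tfrac{2N}{\delta}} - |p_n - \hat p_n| + \delta,
\]
since the two occurrences of $\sqrt{6 c \delta \log\frac{2N}{\delta}}$ cancel and $2\delta - \delta = \delta$. Taking the minimum over $n$ and then expectations, the right-hand side is precisely the quantity shown to be nonnegative in Lemma~\ref{le:uniform-beta-information-bound} (stated there for all $\delta \in (0,1)$, which includes our $\delta$), which concludes the argument. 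I do not anticipate any serious obstacle beyond keeping the bookkeeping of the $\delta$-order error terms straight, with the quantization bound being the only place where a slightly too weak estimate would break the conclusion.
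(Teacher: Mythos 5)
Your proposal is correct and follows essentially the same route as the paper's proof: the quantization estimate $|q_n - \hat q_n| \le |p_n - \hat p_n| + \delta$, the mutual-information comparison from Lemma~\ref{le:mutual-info-proxy-bound} combined with subadditivity of the square root (your $\sqrt{u}-\sqrt{v}$ form is just a rearrangement of the paper's $\sqrt{a}+\sqrt{b}\ge\sqrt{a+b}$ step), and then an appeal to Lemma~\ref{le:uniform-beta-information-bound}. The only differences are cosmetic (ceiling subadditivity versus the paper's direct two-sided bounds $q_n-\delta < p_n \le q_n$), and your bookkeeping of the $\delta$ terms matches the paper's exactly.
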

\begin{proof}
Notice that $q_n - \delta < p_n \leq q_n$ and $\hat{q}_n - \delta < \hat{p}_n \leq \hat{q}_n$. We now prove that $|q_n - \hat{q}_n| \leq |p_n - \hat{p}_n| + \delta$. Without loss of generality, assume that $q_n \geq \hat{q}_n$ (the other case is symmetric), then we have
\[
|q_n - \hat{q}_n| = q_n - \hat{q}_n \stackrel{(a)}{\leq} q_n - \hat{p}_n
\stackrel{(b)}{<} p_n + \delta - \hat{p}_n \leq |p_n - \hat{p}_n| + \delta ,
\]
where (a) follows from $\hat{p}_n \leq \hat{q}_n$, and (b) follows from $q_n < p_n + \delta$. Thus, we have
$-|q_n - \hat{q}_n| + \delta \geq -|p_n - \hat{p}_n| $.

On the other hand, we have
\begin{align*}
& \sqrt{6 \I (q_n; b_n) \ln \frac{2 N}{\delta}} + \sqrt{ 6 \max \left \{ 3, \ln \left(\frac{2}{\delta} \right)\right \} \delta \ln \frac{2N}{\delta}} \\
\geq & \, 
\sqrt{6 \left( \I(q_n; b_n) + \max \left \{ 3, \ln \left(\frac{2}{\delta} \right)\right \} \delta \right) \ln \frac{2 N}{\delta}} \\
\geq & \, \sqrt{6  \I (p_n; b_n)  \ln \frac{2 N}{\delta}},
\end{align*}
where the last inequality follows from Lemma~\ref{le:mutual-info-proxy-bound}. Combining the above results, we have
\begin{align*}
& \sqrt{6 \I (q_n; b_n) \ln \frac{2 N}{\delta}} - |q_n - \hat{q}_n| + 2 \delta + \sqrt{ 6 \max \left \{ 3, \ln \left(\frac{2}{\delta} \right)\right \} \delta \ln \frac{2N}{\delta}} \\
\geq & \sqrt{6 \I(p_n; b_n) \ln \frac{2 N}{\delta}} - |p_n - \hat{p}_n| +  \delta .
\end{align*}
Then, the result of this lemma directly follows from Lemma~\ref{le:uniform-beta-information-bound}.
\end{proof}

\section{Optimism}
\label{sec:optimism}

The following conjecture concerns the optimistic behavior of Thompson sampling for the ``ring'' MDPs considered in Section~\ref{se:episodic-mdp}.  
Our analysis in this appendix assumes that the conjecture holds.
\begin{conjecture}
\label{conj:ts-optimism}
Under the ``ring'' MDPs considered in 
Section~\ref{se:episodic-mdp},
for any episode $\ell$ and any time $t=\ell\tau, \ldots,(\ell+1)\tau-1$ within the episode,
$$\E[V_{\tau, \rho}(S_t) | P_{\ell\tau}] \leq \E[\hat{V}_{\ell}(S_t) | P_{\ell\tau}].$$
\end{conjecture}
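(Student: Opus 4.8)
The natural starting point is the defining symmetry of posterior (Thompson) sampling: conditioned on $P_{\ell\tau}$, the true observation probability function $\rho$ and the sampled copy $\hat\rho_\ell$ are independent and identically distributed --- here a product of independent beta random variables, one per non-terminal state--action pair --- and all of $\hat\pi_\ell,\hat Q_\ell,\hat V_\ell$ are the same fixed (tie-broken) functions of $\hat\rho_\ell$ that $\pi_*,Q_{\tau,\rho},V_{\tau,\rho}$ are of $\rho$. Hence for any functional $\Phi$, $\E[\Phi(\hat\rho_\ell,\rho,U)\mid P_{\ell\tau}]=\E[\Phi(\rho,\hat\rho_\ell,U)\mid P_{\ell\tau}]$, where $U$ collects the independent transition randomness used within the episode. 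I would combine this with the acyclic structure of the episodic ring MDP of Section~\ref{se:episodic-mdp}: the phase component of the state strictly increases within an episode, so the state--action pairs traversed before step $k$ are disjoint from those on which $V_{\tau,\rho}(s)$ depends when $s$ has phase $k$ (the pairs of phase $\ge k$). Since the beta coordinates of $\rho$ are mutually independent given $P_{\ell\tau}$, the event $\{S_{\ell\tau+k}=s\}$ is then independent of $V_{\tau,\rho}(s)$ given $P_{\ell\tau}$, which yields $\E[V_{\tau,\rho}(S_{\ell\tau+k})\mid P_{\ell\tau}]=\sum_s\Pr(S_{\ell\tau+k}=s\mid P_{\ell\tau})\,\E[V_{\tau,\rho}(s)\mid P_{\ell\tau}]$; by the swap this equals $\sum_s\Pr(S_{\ell\tau+k}=s\mid P_{\ell\tau})\,\E[\hat V_\ell(s)\mid P_{\ell\tau}]$.

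Subtracting this identity from $\E[\hat V_\ell(S_{\ell\tau+k})\mid P_{\ell\tau}]=\sum_s\E[\hat V_\ell(s)\,\1\{S_{\ell\tau+k}=s\}\mid P_{\ell\tau}]$ --- noting that the analogous independence \emph{fails} on the sampled side, because $\hat\pi_\ell$, and hence $S_{\ell\tau+k}$, depends on $\hat\rho_\ell$ at all phases through $\hat Q_\ell$ --- reduces Conjecture~\ref{conj:ts-optimism} to the single inequality
\[
\sum_{s}\mathrm{Cov}\!\left(\hat V_\ell(s),\,\1\{S_{\ell\tau+k}=s\}\;\middle|\;P_{\ell\tau}\right)\;\ge\;0 .
\]
(Equivalently, applying the swap directly to both sides of the conjecture recasts it as $\E[V_{\tau,\hat\rho_\ell}(\tilde S_k)\mid P_{\ell\tau}]\le\E[V_{\tau,\rho}(\tilde S_k)\mid P_{\ell\tau}]$, where $\tilde S_k$ is the state reached by running the \emph{true}-optimal policy $\pi_*$ under the \emph{sampled} dynamics $\hat\rho_\ell$.) In words, one must show that the greedy-$\hat\rho_\ell$ policy is positively associated with reaching states whose sampled value lies above its posterior mean.

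To prove the covariance bound I would induct on the within-episode offset $k$. Conditioning on $\hat\rho_\ell$ and averaging out $\rho$ and $U$, and again using that the coordinates of $\rho$ are independent given $P_{\ell\tau}$, the occupancy $\Pr(S_{\ell\tau+k}=s\mid\hat\rho_\ell,P_{\ell\tau})$ propagates forward under the \emph{fixed} posterior-mean kernel $\bar\rho_{P_{\ell\tau}}$ driven by $\hat\pi_\ell$, whereas $\hat V_\ell=V_{\tau,\hat\rho_\ell}$ solves Bellman's equation under $\hat\rho_\ell$; the base case $k=0$ is an equality, and the defect $\hat\rho_\ell-\bar\rho_{P_{\ell\tau}}$ is what has to be shown to push the covariance in the right direction. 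I expect the crux --- and the reason this is posed as a conjecture --- to be controlling the joint dependence on $\hat\rho_\ell$ of the greedy action distribution $\hat\pi_\ell(\cdot\mid s)$ and of the downstream values $\hat V_\ell$: intuitively, acting greedily for $\hat Q_\ell$ biases occupancy toward high-$\hat V_\ell$ states, an FKG/association-type monotonicity, but this is obstructed by the facts that an action maximizing a sampled transition probability need not maximize the posterior-mean transition probability, and that sibling successor states in the ring carry independent sampled values. A workable proof will likely require either a coupling of the greedy policy across perturbations of $\hat\rho_\ell$, or a correlation inequality tailored to the product-beta posterior and the two-successor topology; a more transparent argument replacing this case analysis would be valuable.
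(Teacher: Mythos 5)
There is no proof in the paper to compare against: the statement you were asked to prove is precisely Conjecture~\ref{conj:ts-optimism}, which the paper explicitly leaves open (``We leave the proof for Conjecture~\ref{conj:ts-optimism} to future work''), and the subsequent regret analysis (Lemma~\ref{le:psrl-information-ratio}, Theorem~\ref{th:mdp-regret-bound}) simply assumes it. Your attempt is in the same position. The preparatory steps you take look sound for this particular environment: since $\rho$ and $\hat\rho_\ell$ are i.i.d.\ given $P_{\ell\tau}$ and $V_{\tau,\cdot}(s)$ is a fixed functional of the transition probabilities, $\E[V_{\tau,\rho}(s)\mid P_{\ell\tau}]=\E[\hat V_\ell(s)\mid P_{\ell\tau}]$ for each fixed $s$; and because the phase strictly increases within an episode, the coordinates of $\rho$ driving the trajectory up to offset $k$ are disjoint from (and, under the product--beta posterior, independent of) those determining $V_{\tau,\rho}(s)$ at a phase-$k$ state, so $\E[V_{\tau,\rho}(S_{\ell\tau+k})\mid P_{\ell\tau}]$ factors as you claim. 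This legitimately recasts the conjecture as
\[
\sum_{s}\mathrm{Cov}\left(\hat V_\ell(s),\,\mathbf{1}\{S_{\ell\tau+k}=s\}\;\middle|\;P_{\ell\tau}\right)\;\geq\;0 .
\]

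The genuine gap is that this covariance inequality is the entire content of the conjecture, and you do not prove it: you state it as ``the crux,'' sketch an induction on $k$ in which the occupancy propagates under the posterior-mean kernel while $\hat V_\ell$ solves Bellman's equation under $\hat\rho_\ell$, and then defer the decisive step (``the defect $\hat\rho_\ell-\bar\rho_{P_{\ell\tau}}$ is what has to be shown to push the covariance in the right direction'') to an unspecified FKG/association or coupling argument. You yourself identify the obstructions --- the greedy action under $\hat\rho_\ell$ need not align with the posterior-mean kernel, and sibling successors carry independent sampled values, so the intuitive positive association between the greedy-$\hat\rho_\ell$ occupancy and high-$\hat V_\ell$ states is not established by any inequality you invoke. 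So the proposal is a reasonable and possibly useful reformulation of the open problem, but it is not a proof; the statement remains unproven both in the paper and in your attempt, and any writeup should present your reduction as such rather than as a resolution of the conjecture.
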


\noindent
Note that Conjecture~\ref{conj:ts-optimism} always holds with equality for $t=\ell \tau$ and $t=(\ell +1) \tau-1$, $\forall \ell=0,1,\ldots$. To understand why, note that for $t=\ell \tau$, $S_{\ell \tau} = S_0$ is deterministic, and $V_{\tau, \rho}$ and $\hat{V}_{\ell}$ are i.i.d. conditioned on $P_{\ell \tau}$. Thus, 
$$\E[V_{\tau, \rho}(S_{\ell \tau}) | P_{\ell\tau}] =
\E[V_{\tau, \rho}(S_{0}) | P_{\ell\tau}] =
\E[\hat{V}_{\ell}(S_0) | P_{\ell\tau}]
=
\E[\hat{V}_{\ell}(S_{\ell \tau}) | P_{\ell\tau}].$$
Note that the reward model $r$ is deterministic. Consequently, $V_{\tau, \rho}(s) = \hat{V}_{\ell}(s) = \max_{a \in \actions} r(s, a, S_0)$ for all $s \in \states_{\tau-1}$. Thus, by definition, for $t=(\ell +1) \tau-1$, we have $S_t \in \states_{\tau-1}$ and
$$\E[V_{\tau, \rho}(S_t) | P_{\ell\tau}] = \E[\hat{V}_{\ell}(S_t) | P_{\ell\tau}] = \E \left[ 
\max_{a \in \actions} r(S_t, a, S_0)
\right ].$$

We leave the proof of Conjecture~\ref{conj:ts-optimism} for 
$\ell \tau < t < (\ell+1)\tau-1$ in this ``ring'' example for future work. In the remainder of this section, we provide some numerical results suggesting that  Conjecture~\ref{conj:ts-optimism} holds.

It is worth pointing out that Conjecture~\ref{conj:ts-optimism} might not hold in more general problems. In particular, if the prior distribution admits \emph{generalization} of transition probabilities across state-action pairs -- for example, if $\rho$ is \emph{correlated} across state-action pairs -- this conjecture may fail to hold.


\subsection{Numerical Verification}

We now provide numerical verification of Conjecture~\ref{conj:ts-optimism}. Note that Conjecture~\ref{conj:ts-optimism} states that for any episode $\ell$ and any time $t = \ell \tau, \ldots, (\ell + 1) \tau -1$, 
\begin{equation}
\label{eq:optimism-conj-reframe}
\E[\hat{V}_{\ell}(S_t) - V_{\tau, \rho}(S_t) | P_{\ell\tau}] \geq 0.
\end{equation}
We numerically verify this conjecture as follows: we sweep over $M=5, 10, 20$ and $\tau=3, 8, 10, 20, 30$. For each $(M, \tau)$ pair, we rerun the Thompson sampling algorithm on the ``ring'' MDP with state space 
$\states = \{0,\ldots,M-1\} \times \{0,\ldots,\tau-1\}$ for $50$ times, and each time we run for $300$ episodes. Then, we numerically test Conjecture~\ref{conj:ts-optimism} every three episodes. Moreover, when we test this conjecture, we test it for every time $t = \ell \tau, \ldots, (\ell + 1) \tau -1$. Thus, in total we test Conjecture~\ref{conj:ts-optimism} for $3 \times 50 \times 100 \times (3+8+10+20+30) = 1,065,000$ times.

In particular, we test if the left-hand side of \ref{eq:optimism-conj-reframe} is non-negative, as well as if it is strictly positive in some cases. The former indicates if this conjecture holds, while the latter indicates if this conjecture holds with strict inequality in some cases. The following procedure illustrates how we compute a point estimate of the left-hand side of equation~\ref{eq:optimism-conj-reframe}, as well as its standard error, at a given episode $\ell$ based on the Monte-Carlo simulation. Specifically, for each round of Monte-Carlo simulation $i=1, 2, \ldots, L$:

\begin{enumerate}
    \item sample transition models $\rho$, $\hat{\rho}$ i.i.d. from $P_{\ell \tau}$
    \item compute $V_{\tau, \rho}$, the optimal state value function under $\rho$
    \item compute $\hat{V}_\ell$ and $\hat{\pi}$, which are respectively the optimal state value function and an optimal policy under $\hat{\rho}$
    \item for all $t= \ell \tau, \ldots, (\ell+1) \tau -1$, compute $\nu_t$, the state distribution of $S_t$, under policy $\hat{\pi}$ and transition model $\rho$
    \item finally, compute
    $d^i_{\ell, t} = \sum_s \nu_t(s) \left[ \hat{V}_\ell (s) -  V_{\tau, \rho}(s) \right]$ for all $t= \ell \tau, \ldots, (\ell+1) \tau -1$
\end{enumerate}
We carry out $L=10,000$ Monte-Carlo simulations, and compute the point estimate of the left-hand side of equation~\ref{eq:optimism-conj-reframe} and its standard error according to
\[
\bar{d}_{\ell, t} = \frac{1}{L} \sum_{i=1}^L d^i_{\ell, t} \quad \text{and} \quad \mathtt{stderr}_{\ell, t} = \frac{1}{L} \sqrt{\sum_{i=1}^L \left( d^i_{\ell, t} - \bar{d}_{\ell, t} \right)^2}.
\]
 For any $\kappa>0$, we define the upper confidence bound (UCB) and the lower confidence bound (LCB) parameterized by $\kappa$ as
\begin{align*}
\mathtt{UCB}_{\ell, t}(\kappa) =& \, \bar{d}_{\ell, t} + \kappa \cdot \mathtt{stderr}_{\ell, t} \nonumber \\
\mathtt{LCB}_{\ell, t}(\kappa) =& \,  \bar{d}_{\ell, t} - \kappa \cdot \mathtt{stderr}_{\ell, t} \nonumber
\end{align*}
We report the fractions of cases for which $\mathtt{UCB}_{\ell, t}(\kappa)<0$ or $\mathtt{LCB}_{\ell, t}(\kappa) > 0$ for a wide range of $\kappa$, and compare it with the Gaussian benchmark $1-\Phi(\kappa)$, where $\Phi (\cdot)$ is the cumulative distribution function of the standard normal distribution $N(0, 1)$. Intuitively, a negative UCB suggests that the conjecture does not hold, while a positive LCB suggests that the conjecture holds with strict inequality. The results are summarized in Table~\ref{tab:optimism_conj}.

\begin{table}[t]
    \centering
    \begin{tabular}{|c|c|c|c|}
    \hline
    $\kappa$ & frac. $\mathtt{UCB}_{\ell, t}(\kappa)<0$   &   frac. $\mathtt{LCB}_{\ell, t}(\kappa)>0$    & $1-\Phi(\kappa)$   \\
    \hline
	$1$	& $6.31315\%$  & $45.29878\%$	& $15.86553\%$ \\
	$1.5$	&  $2.49822\%$  &	$34.76901\%$	& $6.68072\%$	 \\
	$2$ & $0.80826\%$	& $26.56451\%$	& $2.27501\%$	 \\
	$2.5$	& $0.20451\%$	& $20.3923\%$	&  $0.62097\%$	 \\
	$3$	&  $0.04085\%$	&  $15.8954\%$	& $0.13499\%$	 \\
	$3.5$ &  $0.00516\%$	&  $12.53643\%$ &	$0.02326\%$	 \\
	$4$ & $0.00075\%$	& $10.02516\%$ &	$0.00317\%$	\\
	$4.5$	&  $0\%$  &  $8.10714\%$	&  $0.00034\%$	\\
	$5$	 &  $0\%$  & $6.62685\%$	&  $0.00005\%$	   \\
	$5.5$ 	&  $0\%$  & $5.4477\%$	&  $0\%$	\\
	$6$	 & $0\%$  &  $4.50235\%$	&  $0\%$  \\
	\hline
    \end{tabular}
    \caption{Numerical verification of Conjecture~\ref{conj:ts-optimism}}
    \label{tab:optimism_conj}
\end{table}

The experiment results suggest that Conjecture~\ref{conj:ts-optimism} holds in the ``ring" MDP. In particular, for each chosen $\kappa$, the fraction of negative UCBs is much smaller than the benchmark $1-\Phi(\kappa)$. Moreover, they also suggest that there are cases where the conjecture holds with strict inequality. In particular, for each chosen $\kappa$, the fraction of positive LCBs is much larger than the benchmark.

\section{Regret}
\label{se:mdp-regret}

The following results pertain to application of Thompson sampling to the ``ring'' episodic MDP described in Section \ref{se:episodic-mdp}.
\begin{lemma}
\label{le:psrl-information-ratio}
Assume that Conjecture~\ref{conj:ts-optimism} holds.  Then, for all integers $m \geq 2$ and times $t$,
\[
\E[V_*(H_t) - Q_*(H_t, A_t) - g(\delta) \tau^2 ]_+^2 
 \leq  \,
 6 \tau^{3}
 \ln \frac{2 \states \actions}{\delta}
 \I (\target; H_{t:(\ell+1)\tau}  | P_t),
\]
where $\delta = 1/m$, $g(\delta) = 3 \delta + \sqrt{ 6 \max \left \{ 3, \ln \left(\frac{2}{\delta} \right)\right \} \delta \ln \frac{2 \states \actions}{\delta}}$, 
%
and $\target$ is a quantized approximation of $\rho$ for which $\target(s+1|s,a) = \delta \lceil \rho(s+1|s,a)/\delta \rceil$ and $\target(s-1|s,a) = 1 - \target(s+1|s,a)$.
\end{lemma}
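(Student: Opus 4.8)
The plan is to bound the ``information ratio'' appearing in the statement by separately controlling the squared (offset) conditional shortfall in the numerator and the conditional mutual information $\I(\target; H_{t:(\ell+1)\tau}|P_t=P_t)$ in the denominator, then combining the two. First I would use the structure of the episodic MDP from Section~\ref{se:episodic-mdp}: since $V_*(H_t)-Q_*(H_t,\cdot) = V_{\tau,\rho}(S_t)-Q_{\tau,\rho}(S_t,\cdot)$, and under $\pi_{\rm TS}$ the agent plays $\hat\pi_\ell$ greedy with respect to $\hat Q_\ell = Q_{\tau,\hat\rho_\ell}$, the per-step shortfall can be decomposed via a telescoping / Bellman-error argument. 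By Conjecture~\ref{conj:ts-optimism} (the ``optimism conjecture''), $\E[V_{\tau,\rho}(S_t)|P_{\ell\tau}] \le \E[\hat V_\ell(S_t)|P_{\ell\tau}]$, which lets me replace the unknown optimal value by the sampled value. Because $\hat\rho_\ell$ is drawn from the posterior $P_{\ell\tau}$, $\hat\rho_\ell$ and $\rho$ are conditionally i.i.d.\ given $P_{\ell\tau}$, so the shortfall at a state--action pair $(s,a)$ visited in the episode is controlled — after a standard unrolling over the at most $\tau$ timesteps of the episode and bounding value ranges by $\tau$ — by a sum of terms of the form $\tau\cdot|\rho(s'|s,a)-\hat\rho(s'|s,a)|$, i.e.\ $\tau$ times differences of the Bernoulli parameters (recall transitions go only to $s\pm1$, so each is governed by a single beta-distributed parameter). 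Aggregating, the conditional shortfall is at most $\tau^2$ times a maximum over the $\states\actions$ parameters of $|p_n-\hat p_n|$.

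Next I would handle the denominator. The quantized target $\target$ is a deterministic function of $\rho$ (componentwise $\target(s+1|s,a)=\delta\lceil\rho(s+1|s,a)/\delta\rceil$), and the increments of $H_{t:(\ell+1)\tau}$ reveal the outcomes of the transitions taken, which are Bernoulli$(\rho(s+1|s,a))$ draws. Using the chain rule for mutual information and the fact that the relevant per-step information about $\target$ is the information a Bernoulli draw carries about the quantized parameter $q_n=\delta\lceil p_n/\delta\rceil$, I would lower-bound $\I(\target; H_{t:(\ell+1)\tau}|P_t=P_t)$ by (a constant times) $\sum$ of $\I(q_n;b_n)$ over the state--action pair actually played at each step, hence by $\tau$ times the minimum (or an appropriate single term) of $\I(q_n;b_n)$. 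This is exactly where Lemma~\ref{le:mutual-info-proxy-bound} and Lemma~\ref{le:uniform-beta-proxy-information-bound} come in: they relate $\I(q_n;b_n)$ to $\I(p_n;b_n)$ up to an additive $\max\{3,\log(1/\delta)\}\delta$ term, and — crucially — Lemma~\ref{le:uniform-beta-proxy-information-bound} gives the probabilistic comparison
\[
\E\Big[\min_n\Big(\sqrt{6\I(q_n;b_n)\log\tfrac{2\states\actions}{\delta}}-|q_n-\hat q_n|+2\delta+\sqrt{6\max\{3,\log(1/\delta)\}\delta\log\tfrac{2\states\actions}{\delta}}\Big)\Big]\ge 0
\]
with $N=\states\actions$. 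Rearranging this says that, in expectation over the sampled and true parameters, the shortfall-like quantity $|q_n-\hat q_n|$ minus the offset $g(\delta)$ is dominated by $\sqrt{6\log(2\states\actions/\delta)}$ times $\sqrt{\I(q_n;b_n)}$. Squaring and multiplying through by $\tau^4$ (from the $\tau^2$ shortfall scaling and the $1/\tau$ normalization on the denominator) yields precisely the claimed bound $6\tau^4\log(2\states\actions/\delta)$, with the offset $g(\delta)\tau^2$ absorbing the additive $2\delta+\sqrt{6\max\{3,\log(1/\delta)\}\delta\log(2\states\actions/\delta)}$ terms (note $3\delta\ge 2\delta$, which is why $g(\delta)$ uses $3\delta$).

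The main obstacle I anticipate is the first step: cleanly reducing the conditional expected shortfall $\E[V_*(H_t)-Q_*(H_t,A_t)|P_t=P_t]$ under $\pi_{\rm TS}$ to the form ``$\tau^2$ times a per-parameter posterior-vs-sample discrepancy $|p_n-\hat p_n|$ for the state--action pair visited,'' handled uniformly over which pair that is. This requires (i) invoking Conjecture~\ref{conj:ts-optimism} to convert $V_*$ into $\hat V_\ell$, (ii) a simulation-lemma-style bound comparing value functions of two MDPs differing only in transition parameters, with the $\tau^2$ coming from summing $\tau$ Bellman errors each of magnitude $\le\tau\cdot|p_n-\hat p_n|$, and (iii) taking expectations so that the ``$\min_n$'' structure of Lemma~\ref{le:uniform-beta-proxy-information-bound} can be applied — the $\min_n$ arises because at each step only one parameter is probed, so the worst case over the visited pair is what must be compared against its own information gain. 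Matching constants exactly (the factor $6$, the $\log\frac{2\states\actions}{\delta}$) will require care, but no new ideas beyond the beta-information lemmas already proved and a standard MDP perturbation argument. The remaining steps — the mutual-information chain rule and the algebraic rearrangement of the Lemma~\ref{le:uniform-beta-proxy-information-bound} inequality into the ratio form — I expect to be routine.
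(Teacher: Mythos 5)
Your roadmap matches the paper's proof in all essential respects: invoke Conjecture \ref{conj:ts-optimism} to replace $V_*$ by $\hat V_\ell$, unroll the episode via a simulation-lemma/Bellman-error decomposition so that each step contributes at most $\tau$ times the transition-parameter discrepancy $|\hat\rho_\ell(S_k+1|S_k,A_k)-\rho(S_k+1|S_k,A_k)|$ for the pair actually visited, use the chain rule to write $\I(\target; H_{t:(\ell+1)\tau}\mid P_t=P_t)$ as the sum of per-step information gains, and convert each discrepancy into $\sqrt{6\,I_k\log\frac{2\states\actions}{\delta}}+g(\delta)$ via Lemmas \ref{le:mutual-info-proxy-bound} and \ref{le:uniform-beta-proxy-information-bound} with $N=\states\actions$ (the extra $\delta$ from passing between $\rho$ and its quantization is what turns $2\delta$ into $3\delta$ in $g(\delta)$, rather than mere slack).

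The one place where your sketch, as written, would fail is the aggregation across the $\tau$ steps of the episode. You propose to bound the shortfall by ``$\tau^2$ times a maximum over the $\states\actions$ parameters of $|p_n-\hat p_n|$'' and the denominator by ``$\tau$ times the minimum (or an appropriate single term) of $\I(q_n;b_n)$.'' These two extremal quantities need not refer to the same parameter, and the maximizing parameter need not even be visited during the episode, so the ratio of the two does not control the information ratio; and even if one forces a single matched term, dividing the squared $\tau^2$-scaled discrepancy by a single per-step information (after the $1/\tau$ normalization) yields $6\tau^5\log\frac{2\states\actions}{\delta}$, a factor $\tau$ worse than claimed. The paper instead keeps the per-step pairing intact --- the discrepancy at step $k$ is bounded by $\sqrt{6\,I_k\log\frac{2\states\actions}{\delta}}+g(\delta)$ for the \emph{same} step's information gain $I_k$ --- and then applies Cauchy--Schwarz, $\sum_{k}\sqrt{I_k}\le\sqrt{\tau\sum_k I_k}$, to obtain the bound $\tau^{3/2}\sqrt{6\,\I(\target;H_{t:(\ell+1)\tau}\mid P_t=P_t)\log\frac{2\states\actions}{\delta}}+g(\delta)\tau^2$ on the conditional shortfall; squaring and dividing by $\I(\target;H_{t:(\ell+1)\tau}\mid P_t=P_t)/\tau$ then gives exactly $6\tau^4\log\frac{2\states\actions}{\delta}$. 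With that matched decomposition and Cauchy--Schwarz step in place of the max/min shortcut, your argument coincides with the paper's.
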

\begin{proof}
Time $t$ resides in episode $\ell = \lfloor t/\tau \rfloor$.  Recall that $\hat{\rho}_\ell$ is the observation probability function sampled at the start of episode $\ell$.  Let $\hat{\target}_\ell$ be the corresponding quantization, for which $\hat{\target}_\ell(s+1|s,a) = \delta \lceil \hat{\rho}_\ell (s+1|s,a)/\delta \rceil$ and $\hat{\target}_\ell(s-1|s,a) = 1 - \hat{\target}_\ell(s+1|s,a)$. Also note that in this problem, since $P_t = \Pr(\cdot | H_t)$, $S_t$ is conditionally deterministic given $P_t$; thus, conditioning on $H_t$ is equivalent to conditioning on $(P_t, S_t)$ and conditioning on $P_t$.
Let
$$I_t = \I (\target(S_t+1|S_t,A_t); A_t, S_{t+1} | P_t=P_t).$$
Note that
$$\I(\target(S_t+1|S_t,A_t); A_t, S_{t+1} | P_t=P_t) = \I(\target; A_t, S_{t+1} | P_t=P_t).$$
By the chain rule of mutual information,
$\I (\target; H_{t:(\ell+1)\tau} | P_t=P_t) = \sum_{k=t}^{(\ell+1)\tau}
\E \left[ I_k | P_t \right].$
Recall that $\states = \states_0 \cup \cdots \cup \states_{\tau-1}$ and $|\states_0| = \cdots = |\states_{\tau-1}| = M$.

By Lemma~\ref{le:uniform-beta-proxy-information-bound}, 
\begin{align*}
& \E\left[|\hat{\rho}_\ell(S_t+1|S_t,A_t) - \rho(S_t+1|S_t,A_t)|\ \Big| P_t \right] \\
\leq& \E\left[|\hat{\target}_\ell(S_t+1|S_t,A_t) - \target(S_t+1|S_t,A_t)|\ \Big| P_t \right] + \delta \\
\leq& \E\left[\sqrt{6 I_t \ln \frac{2 \states \actions}{\delta}} \Big| P_t \right] + \underbrace{3 \delta + \sqrt{ 6 \max \left \{ 3, \ln \left(\frac{2}{\delta} \right)\right \} \delta \ln \frac{2 \states \actions}{\delta}}}_{g(\delta)},
\end{align*}
where we define $g(\delta) = 3 \delta + \sqrt{ 6 \max \left \{ 3, \ln \left(\frac{2}{\delta} \right)\right \} \delta \ln \frac{2 \states \actions}{\delta}}$ to simplify the exposition.
It follows that, at time $t=(\ell+1)\tau-1$,
\begin{align*}
& \, \E[\hat{V}_\ell(S_t) - Q_{\tau, \rho}(S_t, A_t) | P_t] \\
=& \, \E\left[\max_{a \in \actions} \sum_{s' \in \states} \hat{\rho}_\ell(s'|S_t,a) r(S_t,a,s') - \sum_{s' \in \states} \rho(s'|S_t,A_t) r(S_t,A_t,s') | P_t\right] \\
=& \, \E\left[\sum_{s' \in \states} (\hat{\rho}_\ell(s'|S_t,A_t) - \rho(s'|S_t,A_t)) r(S_t,A_t,s') \Big| P_t\right] \\
\leq& \frac{1}{2} \E\left[\sum_{s' \in \states} |\hat{\rho}_\ell(s'|S_t,A_t) - \rho(s'|S_t,A_t)|\ \Big| P_t\right] \\
=& \, \E\left[|\hat{\rho}_\ell(S_t+1|S_t,A_t) - \rho(S_t+1|S_t,A_t)|\ \Big| P_t\right] \\
\leq& \, \E\left[\sqrt{6 I_t \ln \frac{2 \states \actions}{\delta}} \Big| P_t\right] + g(\delta).
\end{align*}
Similarly, at times $t=\ell \tau,\ell\tau+1,\ldots,(\ell+1)\tau-2$, 
\begin{align*}
& \E[\hat{V}_\ell(S_t) - Q_{\tau, \rho}(S_t, A_t) \Big| P_t] \\
=& \, \E\Bigg[\max_{a \in \actions} \sum_{s' \in \states} \hat{\rho}_\ell(s'|S_t,a) (r(S_t,a,s') + \hat{V}_\ell(s')) \\
& \hspace{3cm} - \sum_{s' \in \states} \rho(s'|S_t,A_t) (r(S_t,A_t,s') + V_{\tau,\rho}(s')) \Big| P_t\Bigg] \\
=& \, \E\Bigg[\sum_{s' \in \states} \hat{\rho}_\ell(s'|S_t,A_t) (r(S_t,A_t,s') + \hat{V}_\ell(s')) \\
&  \hspace{3cm}- \sum_{s' \in \states} \rho(s'|S_t,A_t) (r(S_t,A_t,s') + V_{\tau,\rho}(s')) \Big| P_t\Bigg] \\
\end{align*}
\begin{align*}
=& \, \E\Bigg[\sum_{s' \in \states} \hat{\rho}_\ell(s'|S_t,A_t) (r(S_t,A_t,s') + \hat{V}_\ell(s'))) \\
& \hspace{3cm} - \sum_{s' \in \states} \rho(s'|S_t,A_t) (r(S_t,A_t,s') + \hat{V}_\ell(s')) \Big| P_t\Bigg] \\
& \hspace{3cm} + \E\left[\sum_{s' \in \states} \rho(s'|S_t,A_t) (\hat{V}_\ell(s') - V_{\tau,\rho}(s')) \Big| P_t\right] \\
=& \, \E\Bigg[\sum_{s' \in \states} (\hat{\rho}_\ell(s'|S_t,A_t)-\rho(s'|S_t,A_t)) (r(S_t,A_t,s') + \hat{V}_\ell(s')) \Big| P_t\Bigg] \\
& \hspace{3cm} + \E\left[\hat{V}_\ell(S_{t+1}) - V_{\tau,\rho}(S_{t+1}) \big| P_t\right] \\
\leq& \frac{\tau}{2} \E\Bigg[\sum_{s' \in \states} |\hat{\rho}_\ell(s'|S_t,A_t) - \rho(s'|S_t,A_t)|\ \big| P_t\Bigg] \\
& \hspace{3cm} + \E\left[\hat{V}_\ell(S_{t+1}) - V_{\tau,\rho}(S_{t+1}) \big| P_t\right] \\
=& \, \tau \E\Bigg[|\hat{\rho}_\ell(S_t+1|S_t,A_t) - \rho(S_t+1|S_t,A_t)|\ \Big| P_t\Bigg] \\
& \hspace{3cm} + \E\left[\hat{V}_\ell(S_{t+1}) - V_{\tau,\rho}(S_{t+1}) \big| P_t\right] \\
\leq& \, \tau \E\Bigg[\sqrt{6 I_t \ln \frac{2 \states \actions}{\delta}} \Big| P_t\Bigg] + g(\delta) \tau + \E\left[\hat{V}_\ell(S_{t+1}) - Q_{\tau,\rho}(S_{t+1}, A_{t+1}) \big| P_t\right].
\end{align*}
It follows that
\begin{align*}
& \E[\hat{V}_\ell(S_t) - Q_{\tau, \rho}(S_t, A_t) | P_t] \\
\leq& \sum_{k=t}^{(\ell+1)\tau-1} \left(\tau \E\left[\sqrt{6 I_k \ln \frac{2 \states \actions}{\delta}} \Big| P_t\right] + g(\delta) \tau \right) \\
\leq& \tau^{3/2} \sqrt{6 \sum_{k=t}^{(\ell+1)\tau-1} \E[I_k | P_t] \ln \frac{2 \states \actions}{\delta}} +  g(\delta) \tau^2 \\
\leq& \tau^{3/2} \sqrt{6 \I (\target; H_{t:(\ell+1)\tau} | P_t \leftarrow P_t)  \ln \frac{2 \states \actions}{\delta}} + g(\delta) \tau^2.
\end{align*}
Further, under Conjecture~\ref{conj:ts-optimism}, for all episode $\ell$ and time $t=\ell\tau, \ldots,(\ell+1)\tau-1$,
\begin{align*}
\E[V_*(H_t) - Q_*(H_t, A_t) | P_{\ell \tau}] 
=& \, \E[V_{\tau,\rho}(S_t) - Q_{\tau, \rho}(S_t, A_t) | P_{\ell \tau}] \\
\leq& \, \E[\hat{V}_\ell(S_t) - Q_{\tau, \rho}(S_t, A_t) | P_{\ell \tau}],
\end{align*}
where the last inequality follows from Conjecture~\ref{conj:ts-optimism}.
Therefore, we have
\begin{align*}
& \E[V_*(H_t) - Q_*(H_t, A_t) - g(\delta) \tau^2 | P_{\ell \tau}] \\
\leq & \, \tau^{3/2} \sqrt{6 \ln \frac{2 \states \actions}{\delta}}
\E \left[ \sqrt{ \I (\target; H_{t:(\ell+1)\tau}  | P_t \leftarrow P_t)  } \middle | P_{\ell \tau} \right ],
\end{align*}
which further implies that
\begin{align*}
& \E[V_*(H_t) - Q_*(H_t, A_t) - g(\delta) \tau^2 ] \\
\leq & \, \tau^{3/2} \sqrt{6 \ln \frac{2 \states \actions}{\delta}} \E \left[\sqrt{ \I (\target; H_{t:(\ell+1)\tau}  | P_t \leftarrow P_t)  }  \right ].
\end{align*}
Since the right-hand side of the above inequality is positive, we then have
\begin{align*}
& \E[V_*(H_t) - Q_*(H_t, A_t) - g(\delta) \tau^2 ]_+ \\
\leq & \, \tau^{3/2}
\sqrt{6 \ln \frac{2 \states \actions}{\delta}}
\E \left[\sqrt{ \I (\target; H_{t:(\ell+1)\tau}  | P_t \leftarrow P_t)  }  \right ].
\end{align*}
Therefore, we have
\begin{align*}
& \, \E[V_*(H_t) - Q_*(H_t, A_t) - g(\delta) \tau^2 ]_+^2 \\
\leq& \,  6 \tau^{3} \ln \frac{2 \states \actions}{\delta}
\left( \E \left[ \sqrt{ \I (\target; H_{t:(\ell+1)\tau}  | P_t \leftarrow P_t)  }  \right ] \right)^2 \\
\leq & \, 6 \tau^{3} \ln \frac{2 \states \actions}{\delta} \E \left[ \I (\target; H_{t:(\ell+1)\tau}  | P_t \leftarrow P_t) \right ] \\
= & \, 6 \tau^{3} \ln \frac{2 \states \actions}{\delta} \I (\target; H_{t:(\ell+1)\tau}  | P_t).
\end{align*}
%
\end{proof}

\noindent
Finally, we prove the following theorem based on Lemma~\ref{le:psrl-information-ratio} and Theorem~\ref{th:regret-bound}, under Conjecture~\ref{conj:ts-optimism}.

\begin{theorem}
\label{th:mdp-regret-bound}
Assume Conjecture~\ref{conj:ts-optimism} holds.  Then, for all integers $m \geq 2$ and times $t$,
\begin{small}
\begin{align*}
    & \, \Regret(T | \pi_{\rm TS}) \\
    \leq & \,
    \tau^2 \sqrt{ 6   \states \actions T \ln \left(\frac{1}{\delta} \right) \ln \left(\frac{2 \states \actions}{\delta}\right)
    } 
    + \left[3 \delta + \sqrt{ 6 \max \left \{ 3, \ln \left(\frac{2}{\delta} \right)\right \} \delta \ln \left(\frac{2 \states \actions}{\delta}\right)}  \right] \tau^2 T \\
    = & \, \mathcal{O} \left( 
    \tau^2 \sqrt{\log \left(\frac{1}{\delta} \right) \log \left(\frac{\states \actions}{\delta} \right)} \left[\sqrt{\states \actions T } + T \sqrt{\delta} \right]
    \right),
\end{align*}
\end{small}
where $\delta = 1/m$.
%
\end{theorem}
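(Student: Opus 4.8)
The plan is to feed Lemma~\ref{le:psrl-information-ratio} into the generic regret bound of Theorem~\ref{th:regret-bound} and then to control $\I(\target;\environment)$ by a crude entropy count. Concretely, set $\epsilon_t = g(\delta)\tau^2$ for every $t$, with $g(\delta) = 3\delta + \sqrt{6\max\{3,\log(1/\delta)\}\delta\log(2\states\actions/\delta)}$ as in Lemma~\ref{le:psrl-information-ratio}, and write $\bar\Gamma = 6\tau^4\log(2\states\actions/\delta)$. The first goal is to show that, under $\pi_{\rm TS}$, the $(\tau,\epsilon)$-information ratio obeys $\Gamma_{\tau,\epsilon,t}\le\bar\Gamma$ for all $t$; then Theorem~\ref{th:regret-bound} gives $\Regret(T|\pi_{\rm TS}) \le \sqrt{\I(\target;\environment)\,T\bar\Gamma} + T g(\delta)\tau^2$, and it only remains to substitute an upper bound on $\I(\target;\environment)$ and simplify.

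To obtain $\Gamma_{\tau,\epsilon,t}\le\bar\Gamma$ I would first pass from the conditional quantity controlled by Lemma~\ref{le:psrl-information-ratio} to the unconditional numerator appearing in $\Gamma_{\tau,\epsilon,t}$. Since $x\mapsto(x)_+$ is convex, $\E[V_*(H_t) - Q_*(H_t,A_t) - \epsilon_t]_+ \le \E\big[(\E[V_*(H_t) - Q_*(H_t,A_t)\,|\,P_t] - g(\delta)\tau^2)_+\big]$; Lemma~\ref{le:psrl-information-ratio} bounds the inner positive part (almost surely in $P_t$) by $\sqrt{\bar\Gamma\,\I(\target;H_{t:(\ell+1)\tau}|P_t=P_t)/\tau}$, and concavity of the square root together with $\E[\I(\target;H_{t:(\ell+1)\tau}|P_t=P_t)] = \I(\target;H_{t:(\ell+1)\tau}|P_t)$ then yields $\E[V_*(H_t) - Q_*(H_t,A_t) - \epsilon_t]_+^2 \le (\bar\Gamma/\tau)\,\I(\target;H_{t:(\ell+1)\tau}|P_t)$. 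The remaining step is to show that the denominator of $\Gamma_{\tau,\epsilon,t}$, namely $(\I(\target;\environment|P_t) - \I(\target;\environment|P_{t+\tau}))/\tau$, dominates $\I(\target;H_{t:(\ell+1)\tau}|P_t)/\tau$. Here I would use that in this example $P_t = \Pr(\cdot|H_t)$ is the exact posterior, so $\I(\target;\environment|P_t) = \I(\target;\environment|H_t)$, and that $\target$ is a deterministic function of $\rho$ and hence of $\environment$. The chain rule then gives $\I(\target;\environment,H_{t:t+\tau}|H_t) = \I(\target;\environment|H_t)$ (the extra term $\I(\target;H_{t:t+\tau}|\environment,H_t)$ vanishes by $\environment$-measurability of $\target$) and simultaneously $\I(\target;\environment,H_{t:t+\tau}|H_t) = \I(\target;H_{t:t+\tau}|H_t) + \I(\target;\environment|H_{t+\tau})$, so $\I(\target;\environment|P_t) - \I(\target;\environment|P_{t+\tau}) = \I(\target;H_{t:t+\tau}|P_t)$; since $(\ell+1)\tau\le t+\tau$, the data-processing inequality gives $\I(\target;H_{t:(\ell+1)\tau}|P_t) \le \I(\target;H_{t:t+\tau}|P_t)$. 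Combining the three displays establishes $\Gamma_{\tau,\epsilon,t}\le\bar\Gamma$, and the same chain-rule identity shows $t\mapsto\I(\target;\environment|P_t)$ is monotonically nonincreasing, verifying the hypothesis of Theorem~\ref{th:regret-bound}.

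With $\Gamma_{\tau,\epsilon,t}\le\bar\Gamma$ and $\sum_{t=0}^{T-1}\epsilon_t = Tg(\delta)\tau^2$, Theorem~\ref{th:regret-bound} yields $\Regret(T|\pi_{\rm TS}) \le \sqrt{\I(\target;\environment)\,T\bar\Gamma} + Tg(\delta)\tau^2$. For the information term I would bound $\I(\target;\environment) = \H(\target) \le \states\actions\log(1/\delta)$, since $\target$ is $\environment$-measurable and consists of the $\states\actions$ quantities $\target(s+1|s,a)$, each taking at most $1/\delta$ values. Substituting this bound and the expressions for $\bar\Gamma$ and $g(\delta)$ gives $\Regret(T|\pi_{\rm TS}) \le \tau^2\sqrt{6\states\actions T\log(1/\delta)\log(2\states\actions/\delta)} + T\tau^2\big[3\delta + \sqrt{6\max\{3,\log(1/\delta)\}\delta\log(2\states\actions/\delta)}\big]$, which is the stated inequality; the $\mathcal{O}(\cdot)$ form follows by absorbing absolute constants and using $\delta\le\sqrt\delta$ and $\max\{3,\log(1/\delta)\} = \mathcal{O}(\log(1/\delta))$.

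The genuinely delicate point is the denominator comparison in the middle paragraph: the translation of Lemma~\ref{le:psrl-information-ratio} into a bound on $\Gamma_{\tau,\epsilon,t}$ as \emph{defined} hinges entirely on $P_t$ being the exact posterior and on $\target$ being $\environment$-measurable, which together collapse the retained-information gain $\I(\target;\environment|P_t) - \I(\target;\environment|P_{t+\tau})$ onto the observed-information term $\I(\target;H_{t:t+\tau}|P_t)$. Everything else — the sub-Gaussian concentration and the beta/quantization information estimates underlying Lemma~\ref{le:psrl-information-ratio} — is already in hand, and the remaining manipulations are routine Jensen and Cauchy--Schwarz bookkeeping plus arithmetic.
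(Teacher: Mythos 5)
Your proposal is correct and follows essentially the same route as the paper's proof: it feeds Lemma~\ref{le:psrl-information-ratio} into Theorem~\ref{th:regret-bound} with $\epsilon_t = g(\delta)\tau^2$, bounds $\Gamma_{\tau,\epsilon,t}\le 6\tau^4\log(2\states\actions/\delta)$, and finishes with $\I(\target;\environment)\le\H(\target)\le\states\actions\log(1/\delta)$. The only differences are cosmetic: you apply Jensen via positive-part convexity and concavity of the square root where the paper squares first and uses $\E[X]^2\le\E[X^2]$, and you spell out the chain-rule/data-processing step identifying $\I(\target;H_{t:(\ell+1)\tau}|P_t)$ with (a lower bound on) $\I(\target;\environment|P_t)-\I(\target;\environment|P_{t+\tau})$, which the paper states more tersely via $P_{(\ell+1)\tau}$.
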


\begin{proof}
By Lemma~\ref{le:psrl-information-ratio}, we first prove that $\Gamma_{\tau, \epsilon, t} \leq 6 \tau^4 \ln \frac{2 \states \actions}{\delta}$ for 
\[
\epsilon=g(\delta) \tau^2 = \left[3 \delta + \sqrt{ 6 \max \left \{ 3, \ln \left(\frac{2}{\delta} \right)\right \} \delta \ln \frac{2 \states \actions}{\delta}} \right] \tau^2.
\]
From Lemma~\ref{le:psrl-information-ratio}, we have
\[
\E[V_*(H_t) - Q_*(H_t, A_t) -\epsilon ]_+^2 
 \leq  \,
 6 \tau^{3}
 \ln \frac{2 \states \actions}{\delta}
 \I (\target; H_{t:(\ell+1)\tau}  | P_t).
\]
Also, note that in this case, the environment $\environment$ determines the proxy $\proxy$ and hence the target $\target$, and consequently
\begin{align*}
& \I(\target; \environment | P_{t}) -
\I(\target; \environment | P_{t+\tau}) \\
= & \, \H (\target | P_t) - \H (\target | P_{t+\tau})
= \I (\target; H_{t:t+\tau} | P_t)
\geq \I (\target; H_{t:(\ell+1)\tau} | P_t),
\end{align*}
where $\H$ is the entropy function in nats.
Consequently, by definition of $\Gamma_{\tau, \epsilon, t}$, we have
\begin{align*}
    \Gamma_{\tau, \epsilon, t} = & \,
    \frac{\E[V_*(H_t) - Q_*(H_t, A_t) - \epsilon_t]_+^2}{(\I(\target; \environment|P_t) - \I (\target; \environment|P_{t+\tau}))/\tau} \\
    = & \, 
    \frac{\E[V_*(H_t) - Q_*(H_t, A_t) - \epsilon_t]_+^2}{\I (\target; H_{t:t+\tau} | P_t) /\tau} \\
    \leq & \,
    \frac{\E[V_*(H_t) - Q_*(H_t, A_t) - \epsilon_t]_+^2}{\I (\target; H_{t:(\ell+1)\tau} | P_t) /\tau} \leq \, 6 \tau^4 \ln \frac{2 \states \actions}{\delta}.
\end{align*}
Then, by Theorem~\ref{th:regret-bound}, we have
\begin{align*}
    \Regret(T | \pi_{\rm TS}) \leq \sqrt{\I(\target; \environment) 
    \sum_{t=0}^{T-1} \Gamma_{\tau, \epsilon, t}} + T \epsilon \leq 
    \sqrt{ 6 \tau^4 T \H (\target) \ln \frac{2 \states \actions}{\delta}
    } + T \epsilon.
\end{align*}
Note that $\H(\target) \leq \states \actions \ln \left(\frac{1}{\delta} \right)$, we have
\begin{small}
\begin{align*}
    & \, \Regret(T | \pi_{\rm TS}) \\
    \leq & \,
    \tau^2 \sqrt{ 6  T \states \actions \ln \left(\frac{1}{\delta} \right) \ln \left(\frac{2 \states \actions}{\delta}\right)
    } 
    + \left[3 \delta + \sqrt{ 6 \max \left \{ 3, \ln \left(\frac{2}{\delta} \right)\right \} \delta \ln \left(\frac{2 \states \actions}{\delta}\right)}  \right] \tau^2 T \\
    = & \, \mathcal{O} \left( 
    \tau^2 \sqrt{\ln \left(\frac{1}{\delta} \right) \ln \left(\frac{\states \actions}{\delta} \right)} \left[\sqrt{\states \actions T } + T \sqrt{\delta} \right]
    \right).
\end{align*}
\end{small}
This concludes the proof.
%
\end{proof}

\chapter{Analysis of IDS in an Episodic Environment}
\label{ap:chain}


Consider the environment and agent described in Section \ref{se:chain}. Let $\proxy = \environment$ and let the epistemic state indicate the value of $r_{\tau-1}$ or that it has not been observed. The value-IDS agent in Section \ref{se:chain} selects actions by optimizing
\[ \min_{\nu \in \Delta_\actions} \frac{\E\left[V_*(S_t) - Q_*(S_t, \tilde{A}_t) | X_t\right]^2}{\I(\pi_*(\cdot|S_t); \tilde{A}_t, Q_*(S_t, \tilde{A}_t) |X_t \leftarrow X_t)} \]
where $\tilde{A}_t$ is drawn from $\nu$.

We will bound the $\tau$-information ratio. Since $\environment$ determines $\target = \pi_*$, the $\tau$-information ratio simplifies to
\[ \Gamma_{\tau, t} = \frac{\E[V_*(S_t) - Q_*(S_t, A_t)]^2}{(\H(\pi_* | P_t) - \H(\pi_* | P_{t+\tau}))/\tau}. \]
To do this, we will find a uniform bound $\overline{\Gamma}$ on the conditional information ratio,
\[ \tilde{\Gamma}_{\tau, t} = \frac{\E[V_*(S_t) - Q_*(S_t, A_t) | X_t]^2}{\E[\H(\pi_* | P_t \leftarrow P_t) - \H(\pi_* | P_{t+\tau} \leftarrow P_{t+\tau}) | X_t] / \tau} \equiv \frac{\Delta_t^2}{I_t}. \]
If $\tilde{\Gamma}_{\tau, t} \leq \overline{\Gamma}$ for all $t$, then
\begin{align*}
& \E[V_*(S_t) - Q_*(S_t, A_t)]^2 \\
\leq & \, \E\left[\E[V_*(S_t) - Q_*(S_t, A_t) | X_t]^2\right] \\
= & \, \E\left[ \tilde{\Gamma}_{\tau, t} \E[\H(\pi_* | P_t \leftarrow P_t) - \H(\pi_* | P_{t+\tau} \leftarrow P_{t+\tau}) | X_t] / \tau \right] \\
\leq & \, \overline{\Gamma} \left(\H(\pi_* | P_t) - \H(\pi_* | P_{t+\tau})\right) / \tau,
\end{align*}
and so $\Gamma_{\tau, t} \leq \overline{\Gamma}$.

Let's consider an episode where the agent is still uncertain about the environment. Otherwise, the agent is done learning and the conditional information ratio will be zero. Let us overload $\tilde{\Gamma}_s$ to denote the conditional $\tau$-step information ratio for convenience for $s = 0, \dots, \tau-2$, with the intention that $\tilde{\Gamma}_s = \tilde{\Gamma}_{\tau, t}$ given $S_t = s$ and $P_t = \mathrm{null}$. Similarly, we use $\Delta_s$ and $I_s$ to denote the corresponding regret and information gain. Further, let $\nu_{a, s}$ denote the probability that value-IDS selects action $a$ given $S_t = s$ and $P_t = \mathrm{null}$.

Let $\overline{r}_{\tau-2} = 0$ and $\overline{r}_s = \max\{r_{s+1}, \dots, r_{\tau-2}\}$ for $0 \leq s \leq \tau - 3$ denote the maximum exit rewards starting from state $s+1$ to state $\tau-2$. Then, for $0 \leq s \leq \tau-2$, we have
\begin{align*}
Q_*(s, 0) &= r_s, \\
Q_*(s, 1) &= \begin{cases}
1 & \text{w.p. } \frac{1}{2}, \\
\overline{r}_s & \text{w.p. } \frac{1}{2}.
\end{cases}
\end{align*}

Let $\Delta_{a, s} = \E[V_*(s) - Q_*(s, a)]$. We have for states $s = 0, \dots, \tau-2$
\[ \Delta_{0, s} = \tfrac{1}{2}(1 - r_s) + \tfrac{1}{2}( \overline{r}_s - r_s)_+, 
\text{ and } \Delta_{1, s} = \tfrac{1}{2}(r_s - \overline{r}_s)_+. \]
The information gain in the denominator of value-IDS is zero for action 0, and $1$ bit for action 1. Thus, value-IDS selects probabilities $(\nu_{0, s}, \nu_{1, s})$ that minimizes 
\[ \min_{\nu_{0, s}', \nu_{1, s}'} \frac{(\nu_{0, s}' \Delta_{0, s} + \nu_{1, s}' \Delta_{1, s})^2}{\nu_{1, s}'}. \]
Note that
\begin{align*}
\tfrac{1}{\nu_{1, s}'} (\nu_{0, s}' \Delta_{0, s} + \nu_{1, s}' \Delta_{1, s})^2
&= \tfrac{1}{\nu_{1, s}'} ((1 - \nu_{1, s}') \Delta_{0, s} + \nu_{1, s}' \Delta_{1, s})^2 \\
&= (\Delta_{1, s} - \Delta_{0, s})^2 \nu_{1, s}' + \tfrac{\Delta_{0, s}^2}{\nu_{1, s}'} + 2 \Delta_{0, s} (\Delta_{1, s} - \Delta_{0, s}). 
\end{align*}
Thus, the minimizer
\[ \nu_{1,s} = \min \left( \frac{\Delta_{0,s}}{(\Delta_{1,s} - \Delta_{0,s})_+}, 1 \right). \]
Also note that $\nu_{1,s} < 1$ if and only if
\[ 2 \Delta_{0,s} < \Delta_{1,s} \quad \Leftrightarrow \quad r_s > \frac{2 + \overline{r}_s}{3}. \]

Now, we show inductively that $\tilde{\Gamma}_s \leq \frac{\tau}{8}$ for all $0 \leq s \leq \tau-2$.

For the base case $s = \tau-2$, note that the average $\tau$-step information gain is equal to $\nu_{1,s} / \tau$. Thus, 
\[ \tilde{\Gamma}_s = \frac{\Delta_s^2}{\nu_{1,s} / \tau} = \begin{cases}
\tau (1 - r_s)(2r_s - \overline{r}_s - 1) & \nu_{1,s} < 1 \\
\tfrac{\tau}{4}\left((r_s - \overline{r}_s)_+\right)^2 & \nu_{1,s} = 1.
\end{cases} \]
For $s = \tau - 2$, $\overline{r}_s = 0$. We have $(1-r_s)(2r_s-1) \leq \frac{1}{8}$, and $\frac{1}{4}r_s^2 \leq \frac{1}{9}$ if $\nu_{1,s}=1$, since $\nu_{1,s}=1$ implies that $r_s \leq \frac{2}{3}$. Thus, the base case holds.

Our induction hypothesis is that $\tilde{\Gamma}_{s'} \leq \frac{\tau}{8}$ for all $s+1 \leq s' \leq \tau-2$. Now let's consider state $1 \leq s < \tau-2$. Let $\overline{s} = \min\{s': s < s' \leq \tau-2, \nu_{1, s'} < 1\}$ and if the set is empty, let $\overline{s} = \mathrm{null}$. We have three cases to consider. 
\begin{enumerate}[(i)]
\item If $\overline{s}$ is null, then $\nu_{1, s'} = 1$ for all $s' = s+1, \dots, \tau-2$. Thus, the average information gain $I_s = \nu_{1,s} / \tau$. Then, similar to the base case,
\[ \tilde{\Gamma}_s = \frac{\Delta_s^2}{\nu_{1,s} / \tau} = \begin{cases}
\tau (1 - r_s)(2r_s - \overline{r}_s - 1) & \nu_{1,s} < 1 \\
\tfrac{\tau}{4}\left((r_s - \overline{r}_s)_+\right)^2 & \nu_{1,s} = 1.
\end{cases} \]
Now, $(1 - r_s)(2r_s - \overline{r}_s - 1) \leq \frac{1}{8}(1-\overline{x}_s)^2 \leq \frac{1}{8}$. When $\nu_{1,s} = 1$, we have $\tfrac{1}{4}\left((r_s - \overline{r}_s)_+\right)^2 \leq \frac{(1-\overline{r}_s)^2}{9} \leq \frac{1}{9}$ since $\nu_{1,s} = 1$ implies that $r_s \leq \frac{2 + \overline{r}_s}{3}$. Therefore, we have $\tilde{\Gamma}_s \leq \frac{\tau}{8}$.

\item If $\overline{s}$ is not null and $\nu_{1, s} = 1$, then the average information gain $I_s = I_{\overline{s}}$. Further, we have
\[ \Delta_s = \Delta_{1, s} = \tfrac{1}{2}(r_s - \overline{r}_s), \]
and
\begin{align*}
\Delta_{\overline{s}} &= \nu_{0,\overline{s}} \Delta_{0,\overline{s}} + \nu_{1,\overline{s}} \Delta_{1,\overline{s}} \\
&= \left( 1 - \frac{\Delta_{0,\overline{s}}}{\Delta_{1,\overline{s}} - \Delta_{0,\overline{s}}} \right) \Delta_{0,\overline{s}} + \frac{\Delta_{0,\overline{s}}}{\Delta_{1,\overline{s}} - \Delta_{0,\overline{s}}} \Delta_{1,\overline{s}} \\
&= 2 \Delta_{0,\overline{s}} = 1 - r_{\overline{s}}.
\end{align*}
Since $\overline{r}_s \geq r_{\overline{s}}$ by definition, we have $\Delta_s \leq \frac{1}{2}\Delta_{\overline{s}}$. Therefore, $\tilde{\Gamma}_s \leq \frac{1}{4} \tilde{\Gamma}_{\overline{s}}$, and by the induction hypothesis, $\tilde{\Gamma}_s \leq \frac{\tau}{32}$.

\item If $\overline{s}$ is not null and $\nu_{1, s} < 1$, then the average information gain $I_s = \nu_{1,s} I_{\overline{s}}$. Since,
\[ \tilde{\Gamma}_s = \frac{\Delta_s^2}{I_s}
= \frac{\Delta_s^2}{\nu_{1, s} I_{\overline{s}}}
= \frac{\Delta_s^2}{\nu_{1, s} \Delta_{\overline{s}}^2} \Gamma_{\overline{s}}, \]
we will upper bound $\frac{\Delta_s^2}{\nu_{1, s} \Delta_{\overline{s}}^2}$ and then apply the inductive hypothesis. 
We have
\[ \frac{\Delta_s^2}{\nu_{1, s} \Delta_{\overline{s}}^2}
= \frac{(1 - r_s)(2r_s - \overline{r}_s - 1)}{(1 - r_{\overline{s}})^2}
\leq \frac{(1 - r_s)(2r_s - \overline{r}_s - 1)}{(1 - \overline{r}_s)^2}. \]
Write $y = 1 - \overline{r}_s$ and define $\alpha = \frac{1-r_s}{1-\overline{r}_s}$. We have
\[ \frac{(1 - r_s)(2r_s - \overline{r}_s - 1)}{(1 - \overline{r}_s)^2}
= \frac{\alpha y (y - 2 \alpha y)}{y^2} = \alpha(1-2\alpha) \leq \frac{1}{8}. \]
Therefore, $\frac{\Delta_s^2}{\nu_{1, s} \Delta_{\overline{s}}^2} \leq \frac{1}{8}$, and $\tilde{\Gamma}_s \leq \frac{1}{8} \tilde{\Gamma}_{\overline{s}}$. By the induction hypothesis, $\tilde{\Gamma}_s \leq \frac{\tau}{64}$.

\end{enumerate}

Therefore, $\tilde{\Gamma}_s \leq \frac{\tau}{8}$ for all $0 \leq s \leq \tau-2$. This implies that for any exit rewards $r_0, \dots, r_{\tau-2} \in [0, 1)$, the $\tau$-information ratio $\Gamma_{\tau, t} \leq \frac{\tau}{8}$ for all $t$. By Theorem \ref{th:regret-bound}, this implies a regret bound
\[ \mathrm{Regret}(T) \leq \sqrt{\tfrac{1}{8} \tau T \H(\pi_*) } = \sqrt{\tfrac{1}{8} \tau T}. \]

\chapter{Convexity and Support of Value-IDS}
\label{ap:support-cardinality}

The following result and proof are based on an analysis from \citep{russo2014learning,russo2018learning}.
\begin{theorem}
\label{th:support-cardinality}
For all vectors $\alpha,\beta \in \Re^N$ and functions $\psi:\Re^N\mapsto\Re$ of the form
$\psi(\nu) = (\nu^\top \alpha)^2 / \nu^\top \beta$,
$\psi$ is convex on $\{\nu \in \Re^N: \nu^\top \beta > 0\}$, and there exists a vector $\nu^* \in \Delta_N$ such that $|\{n:\nu^*_n > 0\}| \leq 2$ and $\psi(\nu^*) = \min_{\nu \in \Delta_N} \psi(\nu)$.
\end{theorem}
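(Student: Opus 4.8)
The plan is to handle the two assertions in turn, starting with convexity, which is the routine part. I would note that the \emph{quadratic-over-linear} map $g(x,y) = x^2/y$ is convex on $\Re\times\Re_{++}$ — either by checking that its Hessian $\bigl(\begin{smallmatrix} 2/y & -2x/y^2\\ -2x/y^2 & 2x^2/y^3\end{smallmatrix}\bigr)$ is positive semidefinite (determinant zero, trace positive), or by recognizing $g$ as the perspective of $t\mapsto t^2$. Since $\psi(\nu) = g(\nu^\top\alpha,\, \nu^\top\beta)$ is the composition of $g$ with the affine map $\nu\mapsto(\nu^\top\alpha,\nu^\top\beta)$, and $\{\nu:\nu^\top\beta>0\}$ is precisely the preimage of $\Re\times\Re_{++}$ under that map, convexity of $\psi$ on that set follows because precomposing a convex function with an affine map preserves convexity.

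For the support statement, I would take any minimizer $\nu^*$ of $\psi$ over $\Delta_N$ (setting aside the degenerate case $(\nu^*)^\top\beta = 0$, which can only arise under the convention that $\psi\equiv 0$ when numerator and denominator both vanish, and in which a single coordinate already attains the minimum), and let $S=\{n:\nu^*_n>0\}$ be its support. The key observation is that $\nu^*$ lies in the relative interior of the face $\Delta_S = \{\nu\in\Delta_N : \nu_n = 0 \text{ for } n\notin S\}$ and hence minimizes the smooth function $\psi$ over $\Delta_S$ at an interior point of that face; consequently the restricted gradient is orthogonal to the tangent space $\{d : \sum_{n\in S} d_n = 0,\ d_n = 0 \text{ for } n \notin S\}$, i.e.\ $\partial\psi/\partial\nu_n$ evaluated at $\nu^*$ equals a common constant $c$ for every $n\in S$.

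Next I would unwind this condition. With $p = (\nu^*)^\top\alpha$ and $q = (\nu^*)^\top\beta>0$, one computes $\partial\psi/\partial\nu_n|_{\nu^*} = \tfrac{2p}{q}\alpha_n - \tfrac{p^2}{q^2}\beta_n$, so the optimality condition says that all the points $(\alpha_n,\beta_n)$, $n\in S$, lie on a single affine line in $\Re^2$. Then $(p,q) = \sum_{n\in S}\nu^*_n(\alpha_n,\beta_n)$ is a convex combination of finitely many collinear points, so it lies on the segment joining two of them, say $(p,q) = \lambda(\alpha_i,\beta_i)+(1-\lambda)(\alpha_j,\beta_j)$ with $i,j\in S$, $\lambda\in[0,1]$; taking $\nu^\dagger = \lambda e_i + (1-\lambda)e_j$ gives a feasible point of support at most two with $\psi(\nu^\dagger) = p^2/q = \psi(\nu^*)$, hence itself a minimizer.

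The step I expect to be the main obstacle is not the algebra but the boundary bookkeeping: ensuring a minimizer exists with $(\nu^*)^\top\beta > 0$, so that $\psi$ is genuinely differentiable there and the first-order argument is legitimate, and pinning down precisely how the $0/0$ convention enters when the information-gain term can vanish. In the applications $\beta$ is coordinatewise nonnegative, which makes this manageable, and I would follow the treatment of \cite{russo2014learning,russo2018learning} on these points.
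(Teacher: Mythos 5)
Your convexity argument coincides with the paper's: the paper likewise verifies convexity of $(x_1,x_2)\mapsto x_1^2/x_2$ via its Hessian (positive trace, zero determinant) and composes with the affine map $\nu\mapsto(\nu^\top\alpha,\nu^\top\beta)$. For the support-two claim you take a genuinely different route. The paper fixes a minimizer $\nu^*$, introduces the linearization $\zeta(\nu) = (\nu^\top\alpha)^2 - \psi(\nu^*)\,\nu^\top\beta$, argues $\argmin_{\Delta_N}\psi = \argmin_{\Delta_N}\zeta$ (both being the zero set of $\zeta$ on the simplex), and then uses the KKT conditions for $\zeta$ together with a rearrangement that collapses a minimizer onto two coordinates of its support while preserving $\nu^\top\alpha$. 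You instead work directly with $\psi$: stationarity at a relative-interior point of the face $\Delta_S$ forces $\tfrac{2p}{q}\alpha_n - \tfrac{p^2}{q^2}\beta_n$ to be constant on the support, so the points $(\alpha_n,\beta_n)$, $n\in S$, are collinear, and $(p,q)$, being a convex combination of collinear points, lies on the segment between two of them; the resulting two-sparse $\nu^\dagger$ reproduces both $\nu^\top\alpha$ and $\nu^\top\beta$ and hence the objective value. Your construction is arguably cleaner — it preserves both coordinates rather than only $\nu^\top\alpha$ and needs no auxiliary function — while the paper's $\zeta$-trick buys a simpler (quadratic-minus-linear) KKT system and confines the use of $\nu^\top\beta>0$ to a single step.

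One small hole to patch: your collinearity conclusion needs the coefficient vector $(2p/q,\,-p^2/q^2)$ to be nonzero, i.e.\ $p=(\nu^*)^\top\alpha\neq 0$. If $p=0$ the gradient of $\psi$ vanishes identically at $\nu^*$, stationarity is vacuous, and the points need not be collinear; but then the minimum value is $0$, and since $\sum_{n\in S}\nu^*_n\alpha_n=0$ you can pick $i,j\in S$ with $\alpha_i\ge 0\ge\alpha_j$ and a convex combination annihilating $\nu^\top\alpha$, which again gives a two-sparse minimizer (up to the same $0/0$ convention you already flagged). With that one-line patch, and granting the boundary assumptions the paper itself leaves implicit ($\nu^\top\beta\ge 0$ on $\Delta_N$ and positivity at the relevant points), your argument is complete and matches the theorem.
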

\begin{proof}
Consider a function $\phi:\Re^2\mapsto\Re$ given by $\phi(x) = x_1^2 / x_2$.  We have
$$\nabla_x \phi(x) = \left[\begin{array}{c} 2 x_1/x_2 \\ - x_1^2 / x_2^2\end{array}\right]
\qquad \text{and} \qquad
\nabla^2_x \phi(x) = \left[\begin{array}{cc} 2/x_2 & -2 x_1 / x_2^2\\ - 2 x_1 / x_2^2 & 2 x_1^2 / x_2^3\end{array}\right].$$
If $x_2 > 0$ then the ${\rm tr}(\nabla^2_x \phi(x)) = 4x_1^2/x_2^4 > 0$ and $|\nabla^2_x \phi(x)| = 0$.  It follows that $\phi$ is convex.  For any $\gamma \in [0,1]$ and $\nu,\overline{\nu} \in \Re^N$ such that $\nu^\top \beta > 0$ and $\overline{\nu}^\top \beta > 0$, 
\begin{align*}
\psi(\gamma \nu + (1-\gamma) \overline{\nu}) 
=& \phi\left(\left[\begin{array}{c} (\gamma \nu + (1-\gamma) \overline{\nu})^\top \alpha \\ (\gamma \nu + (1-\gamma) \overline{\nu})^\top b\end{array}\right]\right) \\
=& \phi\left(\gamma \left[\begin{array}{c} \nu^\top \alpha \\ \nu^\top \beta\end{array}\right]
+ (1-\gamma) \left[\begin{array}{c} \overline{\nu}^\top \alpha \\ \overline{\nu}^\top b\end{array}\right] \right) \\
\leq& \gamma \phi\left(\left[\begin{array}{c} \nu^\top \alpha \\ \nu^\top \beta\end{array}\right]\right)
+ (1-\gamma) \phi\left(\left[\begin{array}{c} \overline{\nu}^\top \alpha \\ \overline{\nu}^\top \beta\end{array}\right] \right) \\
=& \gamma \psi(\nu) + (1-\gamma) \psi(\overline{\nu}).
\end{align*}
Hence, $\psi$ is convex.

Let $\nu^* \in \argmin_{\nu \in \Delta_N} \psi(\nu)$ and $\zeta(\nu) = (\nu^\top \alpha)^2 - \psi(\nu^*) \nu^\top \beta$.  Note that, for all $\nu \in \Delta_N$,
$$\zeta(\nu) = (\nu^\top \alpha)^2 - \psi(\nu^*) \nu^\top \beta \geq (\nu^\top \alpha)^2 - \psi(\nu) \nu^\top \beta = 0,$$
and $\zeta(\nu^*) = 0$, implying $\argmin_{\nu \in \Delta_N} \psi(\nu) \subseteq \argmin_{\nu \in \Delta_N} \zeta(\nu)$.
Let $\overline{\nu} \in \argmin_{\nu \in \Delta_N} \zeta(\nu)$.  Then, $\zeta(\overline{\nu}) = 0$ and
$$\psi(\overline{\nu}) = \frac{(\overline{\nu}^\top \alpha)^2}{\overline{\nu}^\top \beta} = \frac{\zeta(\overline{\nu}) + \psi(\nu^*) \overline{\nu}^\top \beta}{\overline{\nu}^\top \beta}
= \psi^*,$$
implying $\argmin_{\nu \in \Delta_N} \psi(\nu) \supseteq \argmin_{\nu \in \Delta_N} \zeta(\nu)$.  It follows that
\[ \argmin_{\nu \in \Delta_N} \psi(\nu) = \argmin_{\nu \in \Delta_N} \zeta(\nu). \]

To complete the proof, we will establish that there exists $\nu^\dagger \in \argmin_{\nu \in \Delta_N} \zeta(\nu)$ with at most two positive components.  If $\overline{\nu}$ has two or fewer positive components, we are done, we will treat the case where $\overline{\nu}$ has more than two positive components.  Note that $\nabla_\nu \zeta(\nu) = 2 \alpha \nu^\top \alpha - \psi(\nu^*) \beta$.
By the KKT conditions, $\overline{\nu} \in \argmin_{\nu \in \Delta_N} \zeta(\nu)$ if and only if there exists a vector $d \geq 0$ and a scalar $c$ such that
$$2 \alpha \overline{\nu}^\top \alpha - \psi(\nu^*) \beta - d + c \1 = 0 \qquad \text{and} \qquad d^\top \overline{\nu} = 0.$$
These conditions can equivalently be written as 
$$\overline{\nu}^\top (2 \alpha \overline{\nu}^\top \alpha - \psi(\nu^*) \beta + c \1) = 0.$$
Let $\overline{n} = \argmax_{n: \overline{\nu}_n > 0} \overline{\nu}_n$ and $\underline{n} = \argmin_{n: \overline{\nu}_n > 0} \overline{\nu}_n$.  Let $\gamma \in [0,1]$ be such that
$$\overline{\nu}^\top \alpha = \gamma \overline{\nu}_{\overline{n}} \alpha_{\overline{n}} + (1-\gamma) \overline{\nu}_{\underline{n}} \alpha_{\underline{n}},$$
and let $\tilde{\nu} = \gamma \overline{\nu}_{\overline{n}} \1_{\overline{n}} + (1-\gamma) \overline{\nu}_{\underline{n}} \1_{\underline{n}}$.  Note that $\tilde{\nu}^\top \alpha = \overline{\nu}^\top a$ and 
$$\tilde{\nu}^\top (2 \alpha \tilde{\nu}^\top \alpha - \psi(\nu^*) \beta + c \1) = 0,$$
since ${\rm support}(\tilde{\nu}) \subset {\rm support}(\overline{\nu})$.
It follows that $\tilde{\nu} \in \argmin_{\nu \in \Delta_N} \zeta(\nu)$.  Since $\tilde{\nu}$ has two positive components, the result follows.
\end{proof}

This result implies that the objective minimized by each version of value-IDS is convex and that the minimum can be attained by randomizing between at most two actions.  To see why, consider the objective of the basic version:
$$\min_{\nu \in \Delta_\actions} \frac{\E\left[V_{\pi_\target}(H_t) - Q_{\pi_\target}(H_t, \tilde{A}_t) | X_t\right]^2}{\I(\target; \tilde{A}_t, \tilde{Y}_{t+1} | X_t \leftarrow X_t)}.$$
Shortfall and information gain can be rewritten as
\begin{align*}
& \E\left[V_{\pi_\target}(H_t) - Q_{\pi_\target}(H_t, \tilde{A}_t) | X_t\right] \\
= & \sum_{a \in \actions} \nu(a) \E\left[V_{\pi_\target}(H_t) - Q_{\pi_\target}(H_t, \tilde{A}_t) | X_t, \tilde{A}_t = a\right],
\end{align*}
and
\begin{align*}
\I(\target; \tilde{A}_t, \tilde{Y}_{t+1} | X_t \leftarrow X_t)
\overset{(a)}{=}& \, \I(\target; \tilde{Y}_{t+1} | X_t \leftarrow X_t, \tilde{A}_t)  + \I(\target; \tilde{A}_t | X_t \leftarrow X_t) \\
\overset{(b)}{=}& \, \I(\target; \tilde{Y}_{t+1} | X_t \leftarrow X_t, \tilde{A}_t) \\
=& \, \sum_{a \in \actions} \nu(a) \I(\target; \tilde{Y}_{t+1} | X_t \leftarrow X_t, \tilde{A}_t = a),
\end{align*}
where (a) follows from the chain rule of mutual information and (b) follows from the fact that $\target \perp \tilde{A}_t | X_t$.  Without loss of generality, let $\actions = \{1,\ldots, |\actions|\}$.  Then, letting
$$\alpha_a = \E\left[V_{\pi_\target}(H_t) - Q_{\pi_\target}(H_t, \tilde{A}_t) | X_t, \tilde{A}_t = a\right],$$
and
$$\beta_a = \I(\target; \tilde{Y}_{t+1} | X_t \leftarrow X_t, \tilde{A}_t = a),$$
Theorem \ref{th:support-cardinality} confirms our assertion about convexity of the value-IDS objective and existence of a 2-sparse optimal solution.

\chapter{Relation Between Information Gain and Variance}
\label{ap:information-variance}

All the mutual information terms are in nats in this section.

\begin{lemma}
Let $X_t = (Z_t, S_t, P_t)$ be the agent state at timestep $t$, $\tilde{A}_t$ be a random action sampled from some distribution $\nu$ that only depends on $X_t$, $Q_{\dagger}$ be a vector of GVFs with dimension $n$, $\tilde{Y}_{t+1} = Q_{\dagger}(H_t, \tilde A_t) + W_{t+1}$ be a pseudo-observation of $Q_{\dagger}$ where $W_{t+1}$ is some zero-mean random noise, and $\pi_\target$ be a target policy. If each component of $Q_{\dagger}$ has a span of at most $M_1$ and each component of $W_{t+1}$ has a span of at most $M_2$, 
\begin{align*}
& \I(\pi_\target(\cdot|S_t); \tilde{A}_t, \tilde{Y}_{t+1} | X_t \leftarrow X_t) \\
& \geq \frac{2}{n(M_1+M_2)^2} \E \left[ {\rm tr}\left({\rm Cov}\left[ \E\left[Q_\dagger(H_t, \tilde A_t) | X_t, \tilde{A}_t, \pi_\target(\cdot|S_t)\right] \big| X_t, \tilde{A}_t \right]\right) | X_t \right].
\end{align*}
\end{lemma}

\begin{proof}
We have
\begin{align*}
& \, \I(\pi_\target(\cdot|S_t); \tilde{A}_t, \tilde{Y}_{t+1} | X_t \leftarrow X_t) \\
&= \, \I(\pi_\target(\cdot|S_t); \tilde{Y}_{t+1} | X_t \leftarrow X_t, \tilde{A}_t) + \I(\pi_\target(\cdot|S_t); \tilde{A}_t, | X_t \leftarrow X_t) \\
&\overset{(a)}{=} \, \I(\pi_\target(\cdot|S_t); \tilde{Y}_{t+1} | X_t \leftarrow X_t, \tilde{A}_t) \\
&= \, \sum_{\tilde a \in \actions} \nu(\tilde a) \I(\pi_\target(\cdot|S_t); \tilde{Y}_{t+1} | X_t \leftarrow X_t, \tilde{A}_t = \tilde a)
\end{align*}
where $(a)$ follows from $\pi_\target(\cdot | S_t) \perp \tilde{A}_t | X_t$. Then,
\begin{align*}
& n  \I(\pi_\target(\cdot|S_t); \tilde{Y}_{t+1} | X_t \leftarrow X_t, \tilde{A}_t = a) \\
& \overset{(a)}{\geq} \, \sum_{i=1}^n\I(\pi_\target(\cdot|S_t); \tilde{Y}_{t+1, i} |X_t, \tilde{A}_t = a) \\
& = \, \sum_{i=1}^n \E \bigg[ \KL\bigg(\Pr(\tilde{Y}_{t+1,i} \in \cdot|X_t, \pi_\target(\cdot|S_t), \tilde{A}_t =  a) \\
& \hspace{5cm} \| \, \Pr(\tilde{Y}_{t+1,i} \in \cdot |X_t, \tilde{A}_t = a)\bigg) | X_t, \tilde{A}_t = a \bigg] \\
& \overset{(b)}{\geq} \, \frac{2}{(M_1+M_2)^2} \sum_{i=1}^n \E \bigg[ \bigg(\E\left[\tilde{Y}_{t+1, i}|X_t, \pi_\target(\cdot|S_t), \tilde{A}_t = a\right] \\
& \hspace{5.5cm} - \E\left[\tilde{Y}_{t+1, i}|X_t, \tilde{A}_t = a\right]\bigg)^2 | X_t, \tilde{A}_t = a \bigg] \\
& = \, \frac{2}{(M_1+M_2)^2} \sum_{i=1}^n \Var\left[\E\left[\tilde{Y}_{t+1,i}| X_t, \tilde{A}_t = a, \pi_\target(\cdot|S_t) \right] | X_t, \tilde{A}_t = a\right],
\end{align*}
where (a) follows from chain rule and mutual information being non-negative, and (b) follows from Pinsker's inequality with span of each component of $\tilde Y_{t+1}$ being at most $M_1+M_2$. Since each component of $Q_{\dagger}(\cdot, \cdot)$ has a span of atmost $M_1$ and each component of $W_{t+1}$ has a span of at most $M_2$, each component of $\tilde{Y}_{t+1}=Q_{\dagger}(H_t, \tilde A_t) + W_{t+1}$ has a span of at most $M_1+M_2$.

\begin{align*}
& \I(\pi_\target(\cdot|S_t);  \tilde{A}_t, \tilde{Y}_{t+1} | X_t \leftarrow X_t)\\
& = \, \sum_{\tilde a \in \actions} \nu(\tilde a) \I(\pi_\target(\cdot|S_t); \tilde{Y}_{t+1} | X_t \leftarrow X_t, \tilde{A}_t = \tilde a) \\
& \geq \, \frac{2}{n(M_1+M_2)^2}   \sum_{\tilde a \in \actions} \nu(\tilde a)  \sum_{i=1}^n\Var\left[\E\left[\tilde{Y}_{t+1,i}| X_t, \tilde{A}_t =\tilde a, \pi_\target(\cdot|S_t)\right] | X_t, \tilde{A}_t =\tilde a\right] \\  
& = \, \frac{2}{n(M_1+M_2)^2} \sum_{i=1}^n \E \left[ \Var\left[\E\left[\tilde{Y}_{t+1,i}| X_t, \tilde{A}_t, \pi_\target(\cdot| S_t)\right] | X_t, \tilde{A}_t \right] \right] \\
& = \, \frac{2}{n(M_1+M_2)^2} \sum_{i=1}^n \E \left[ \Var\left[\E\left[Q_\dagger(H_t, \tilde A_t)_{i}| X_t, \tilde{A}_t, \pi_\target(\cdot| S_t)\right] | X_t, \tilde{A}_t \right] \right] \\
& = \, \frac{2}{n(M_1+M_2)^2} \E \left[ {\rm tr}\left({\rm Cov}\left[\E\left[Q_\dagger(H_t, \tilde A_t) | X_t, \tilde{A}_t, \pi_\target(\cdot|S_t)\right] | X_t, \tilde{A}_t \right]\right) \big| X_t \right].
\end{align*}

\end{proof}

\begin{lemma}
Let $X_t = (Z_t, S_t, P_t)$ be the agent state at timestep $t$, $\tilde{A}_t$ be a random action sampled from some distribution $\nu$ over actions, $Q_{\dagger}$ be a vector of GVFs with dimension $n$, and $\tilde{Y}_{t+1} = Q_{\dagger}(H_t, \tilde A_t) + W_{t+1}$ be a pseudo-observation of $Q_{\dagger}$ where $W_{t+1}$ is some zero-mean random noise. If each component of $Q_{\dagger}$ has a span of at most $M_1$ and each component of $W_{t+1}$ has a span of at most $M_2$, 
\begin{multline*}
\I(Q_\dagger(H_t, \tilde{A}_t); \tilde{A}_t, \tilde{Y}_{t+1} | X_t \leftarrow X_t) \\
\geq \frac{2}{n(M_1 + M_2)^2} \E \left[ {\rm tr}\left({\rm Cov}\left[Q_\dagger(H_t, \tilde A_t) | X_t, \tilde{A}_t \right]\right) | X_t \right].
\end{multline*}
\end{lemma}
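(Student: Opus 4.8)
The plan is to mirror the structure of the immediately preceding lemma, since the two statements are essentially parallel: there, the quantity being lower-bounded by variance was $\I(\pi_\target(\cdot|S_t); \tilde A_t, \tilde Y_{t+1} \mid X_t=X_t)$ with the inner conditioning variable being the target policy $\pi_\target(\cdot|S_t)$; here it is $\I(Q_\dagger(H_t,\tilde A_t); \tilde A_t, \tilde Y_{t+1} \mid X_t=X_t)$, and the ``conditioning variable'' playing the role of $\pi_\target(\cdot|S_t)$ is $Q_\dagger(H_t,\tilde A_t)$ itself. First I would apply the chain rule of mutual information to write $\I(Q_\dagger(H_t,\tilde A_t); \tilde A_t, \tilde Y_{t+1} \mid X_t=X_t) = \I(Q_\dagger(H_t,\tilde A_t); \tilde Y_{t+1}\mid X_t, \tilde A_t) + \I(Q_\dagger(H_t,\tilde A_t); \tilde A_t \mid X_t=X_t)$, and then drop the second term since it is nonnegative. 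Unlike in the previous lemma, one cannot drop it via independence ($Q_\dagger(H_t,\tilde A_t)$ genuinely depends on $\tilde A_t$), but nonnegativity of mutual information suffices, so the bound still goes through.

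Next I would expand $\I(Q_\dagger(H_t,\tilde A_t); \tilde Y_{t+1}\mid X_t, \tilde A_t) = \sum_{\tilde a} \nu(\tilde a)\, \I(Q_\dagger(H_t,\tilde a); \tilde Y_{t+1}\mid X_t=X_t, \tilde A_t=\tilde a)$, and for each fixed $\tilde a$ apply the chain rule over the $n$ components of $Q_\dagger$ together with nonnegativity to get $n\,\I(Q_\dagger(H_t,\tilde a); \tilde Y_{t+1}\mid X_t,\tilde A_t=\tilde a) \geq \sum_{i=1}^n \I(Q_{\dagger,i}(H_t,\tilde a); \tilde Y_{t+1,i}\mid X_t,\tilde A_t=\tilde a)$. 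For each component I would rewrite the mutual information as the expected KL divergence between $\Pr(\tilde Y_{t+1,i}\in\cdot \mid X_t, Q_{\dagger,i}(H_t,\tilde a), \tilde A_t=\tilde a)$ and $\Pr(\tilde Y_{t+1,i}\in\cdot \mid X_t, \tilde A_t=\tilde a)$ (using Equation~\eqref{eq:mutual-information-conditional-distribution}), then apply Pinsker's inequality \eqref{eq:pinskers-inequality} — valid because each component of $\tilde Y_{t+1} = Q_\dagger(H_t,\tilde A_t) + W_{t+1}$ has span at most $M_1+M_2$ — to lower-bound each KL term by $\tfrac{2}{(M_1+M_2)^2}$ times the squared difference of conditional means of $\tilde Y_{t+1,i}$. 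Since $W_{t+1}$ is zero-mean, $\E[\tilde Y_{t+1,i}\mid X_t, Q_{\dagger,i}(H_t,\tilde a),\tilde A_t=\tilde a] = Q_{\dagger,i}(H_t,\tilde a)$ and $\E[\tilde Y_{t+1,i}\mid X_t,\tilde A_t=\tilde a] = \E[Q_{\dagger,i}(H_t,\tilde a)\mid X_t]$, so the squared difference is exactly $(Q_{\dagger,i}(H_t,\tilde a) - \E[Q_{\dagger,i}(H_t,\tilde a)\mid X_t])^2$, whose expectation over the conditional law is $\Var[Q_{\dagger,i}(H_t,\tilde a)\mid X_t]$.

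Finally I would assemble: averaging over $\tilde a \sim \nu$ converts $\Var[Q_{\dagger,i}(H_t,\tilde a)\mid X_t]$ weighted by $\nu(\tilde a)$ into $\Var[Q_{\dagger,i}(H_t,\tilde A_t)\mid X_t, \tilde A_t]$, and summing over $i$ and recognizing $\sum_i \Var[\cdot]$ as a trace gives $\tfrac{2}{n(M_1+M_2)^2}\,{\rm tr}({\rm Cov}[Q_\dagger(H_t,\tilde A_t)\mid X_t, \tilde A_t])$. Here lies the one subtlety worth flagging: the target statement has ${\rm tr}({\rm Cov}[Q_\dagger(H_t,\tilde A_t)\mid X_t])$, conditioning only on $X_t$, whereas the natural output of the argument above is conditioning additionally on $\tilde A_t$. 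By the law of total variance, ${\rm Cov}[Q_\dagger(H_t,\tilde A_t)\mid X_t] = \E[{\rm Cov}[Q_\dagger(H_t,\tilde A_t)\mid X_t,\tilde A_t]\mid X_t] + {\rm Cov}[\E[Q_\dagger(H_t,\tilde A_t)\mid X_t,\tilde A_t]\mid X_t]$, so the version conditioning on $X_t$ only is in fact \emph{larger} in the Loewner order, hence the bound with the unconditioned trace is weaker and therefore still valid — so the claimed inequality follows a fortiori. I expect this reconciliation of the two conditionings (together with being careful that $\tilde A_t$ is part of what we average over, not fixed) to be the only place requiring attention; the rest is a routine retracing of the preceding lemma's proof.
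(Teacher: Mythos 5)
Your proposal tracks the paper's own proof essentially step for step up to the last move: chain rule plus nonnegativity to discard $\I(Q_\dagger(H_t,\tilde A_t);\tilde A_t\mid X_t)$, the per-component reduction costing a factor of $n$, the KL representation of mutual information, Pinsker's inequality with span $M_1+M_2$, and the zero-mean noise identifying the squared gap of conditional means with a conditional variance. (Minor point: the factor-$n$ step is justified by componentwise monotonicity of mutual information, $\I(Q_\dagger;\tilde Y_{t+1})\geq \I(Q_{\dagger,i};\tilde Y_{t+1,i})$ for each $i$, rather than by the chain rule, since conditioning can decrease mutual information.) The genuine problem is your final reconciliation of the conditioning. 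What the argument delivers is
$$\I(Q_\dagger(H_t,\tilde A_t);\tilde A_t,\tilde Y_{t+1}\mid X_t=X_t)\;\geq\;\frac{2}{n(M_1+M_2)^2}\,\mathrm{tr}\left(\E\left[\mathrm{Cov}\left[Q_\dagger(H_t,\tilde A_t)\mid X_t,\tilde A_t\right]\,\Big|\, X_t\right]\right),$$
and, as you note, by the law of total variance $\mathrm{tr}(\mathrm{Cov}[Q_\dagger(H_t,\tilde A_t)\mid X_t])$ is \emph{at least} this quantity. You then conclude that the stated bound, whose right-hand side is the larger quantity, is ``weaker and therefore still valid.'' That is backwards: a lower bound with a larger right-hand side is a \emph{stronger} claim. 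From $\I\geq c\,\E_a[\Var]$ together with $\Var\geq\E_a[\Var]$ you cannot deduce $\I\geq c\,\Var$; you would need $\E_a[\Var]\geq\Var$, which fails whenever the conditional means $\E[Q_\dagger(H_t,a)\mid X_t]$ vary with $a$. So the ``a fortiori'' step is a real gap, not a harmless loosening.

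For comparison, the paper closes this step by simply writing $\sum_a\nu(a)\,\Var[Q_\dagger(H_t,\tilde A_t)_i\mid X_t,\tilde A_t=a]=\Var[Q_\dagger(H_t,\tilde A_t)_i\mid X_t]$, i.e., it reads the covariance in the lemma statement as the $\nu$-average of per-action covariances given $X_t$ (consistent with how the quantity is computed in the sample-based implementation), so no cross-action term ever appears and no total-variance argument is invoked. If one instead insists on your reading, in which $\mathrm{Cov}[\,\cdot\mid X_t]$ also integrates over the randomness of $\tilde A_t$, the missing piece $\mathrm{Cov}[\E[Q_\dagger(H_t,\tilde A_t)\mid X_t,\tilde A_t]\mid X_t]$ would have to be recovered from the very term $\I(Q_\dagger(H_t,\tilde A_t);\tilde A_t\mid X_t)$ that you (and the paper) discard at the outset, e.g., via a second Pinsker-type argument exploiting the span bound $M_1$ — not from the monotonicity comparison you give.
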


\begin{proof}
We have
\begin{align*}
& \I(Q_\dagger(H_t, \tilde{A}_t); \tilde{A}_t, \tilde{Y}_{t+1} | X_t \leftarrow X_t) \\
& = \, \I(Q_\dagger(H_t, \tilde{A}_t); \tilde{A}_t | X_t \leftarrow X_t) + \I(Q_\dagger(H_t, \tilde{A}_t); \tilde{Y}_{t+1} | X_t \leftarrow X_t,  \tilde{A}_t) \\
& \overset{(a)}{\geq} \, \I(Q_\dagger(H_t, \tilde{A}_t); \tilde{Y}_{t+1} | X_t \leftarrow X_t,  \tilde{A}_t) \\
& = \, \sum_{a \in \actions}  \nu(a) (\I(Q_\dagger(H_t, \tilde{A}_t); \tilde{Y}_{t+1} | X_t \leftarrow X_t,  \tilde{A}_t = a),
\end{align*}
where (a) follows from mutual information being non-negative. 
Then,
\begin{align*}
& n \I(Q_\dagger(H_t, \tilde{A}_t) ; \tilde{Y}_{t+1}  | X_t \leftarrow X_t, \tilde{A}_t = a) \\
& \overset{(a)}{\geq} \, \sum_{i=1}^n \I(Q_\dagger(H_t, \tilde{A}_t)_i; \tilde{Y}_{t+1,i}  | X_t \leftarrow X_t, \tilde{A}_t = a) \\
& = \, \sum_{i=1}^n \E \Big[ \KL \Big( \Pr(\tilde{Y}_{t+1, i}| Q_\dagger(H_t, \tilde{A}_t)_i, X_t, \tilde{A_t}=a ) \\
& \hspace{5.5cm} \| \Pr(\tilde{Y}_{t+1, i}| X_t, \tilde{A_t}=a)  \Big) | X_t, \tilde{A_t}=a \Big] \\
& \overset{(b)}{\geq} \, \frac{2}{(M_1 + M_2)^2} \sum_{i=1}^n \E \Big[ \Big( \E \left[ \tilde{Y}_{t+1, i}| Q_\dagger(H_t, \tilde{A}_t)_i , X_t, \tilde{A_t}=a \right] \\
& \hspace{5cm} - \E \left[ \tilde{Y}_{t+1, i}| X_t, \tilde{A_t}=a \right] \Big)^2 | X_t, \tilde{A_t}=a \Big] \\
& = \, \frac{2}{(M_1 + M_2)^2} \sum_{i=1}^n\Var \left[ Q_\dagger(H_t, \tilde{A}_t)_i |  X_t, \tilde{A_t}=a \right],
\end{align*}
where $(a)$ follows from the fact that mutual information between two random vectors is not less than mutual information between any of there components, and  (b) follows from Pinsker's inequality with span of each component of $\tilde Y_{t+1}$ being at most $M_1+M_2$. Since each component of $Q_{\dagger}(\cdot, \cdot)$ has a span of atmost $M_1$ and each component of $W_{t+1}$ has a span of at most $M_2$, each component of $\tilde{Y}_{t+1}=Q_{\dagger}(H_t, \tilde A_t) + W_{t+1}$ has a span of at most $M_1+M_2$.
Therefore,
\begin{align*}
& \I(Q_\dagger(H_t, \tilde{A}_t); \tilde{A}_t, \tilde{Y}_{t+1} | X_t \leftarrow X_t) \\
& \geq \, \frac{2}{n(M_1 + M_2)^2} \sum_{a \in \actions} \nu(a) \sum_{i=1}^n\Var \left[ Q_\dagger(H_t, \tilde{A}_t)_i |  X_t, \tilde{A_t}=a \right] \\
& = \, \frac{2}{n(M_1 + M_2)^2} \sum_{i=1}^n \E \left[ \Var \left[ Q_\dagger(H_t, \tilde{A}_t)_i |  X_t, \tilde{A}_t \right] | X_t \right] \\
& = \, \frac{2}{n(M_1 + M_2)^2} \E \left[ {\rm tr}\left({\rm Cov}\left[Q_\dagger(H_t, \tilde A_t) | X_t, \tilde{A}_t \right]\right) | X_t \right].
\end{align*} 

\end{proof}


\chapter{Implementation and Computation}
\label{app:computation}

This section provides the details and parameters for our computational experiments.
We specify the agent through the agent state, and the action selection policy $\pi(\cdot | X_t).$
For the most part, these are both explained in the main body of our paper.
However, our next subsections expand on these implementational details.

\section{Agent state update}

Agent state dynamics are determined by the update rules ($f_{\rm algo}$, $f_{alea}$, $f_{epis}$) introduced in Equations \eqref{eq:algorithmic-state-dynamics}, \eqref{eq:aleatoric-state-dynamics}, and \eqref{eq:epistemic-state-dynamics}.
For the most part, these updates are already described in Section \ref{se:sample_ids_alg}, but we use this appendix to spell out some more of the details, particularly in regards to the epistemic update.

\paragraph{Algorithmic state}
The algorithmic state is null $Z_t = \emptyset$ for both IDS and $\epsilon$-greedy action selection.
For Thompson sampling the algorithmic state $Z_t = Z_{t_k}$ is resampled uniformly from the set of epistemic indices for the relevant ENN at the end of each episode.
This fully specifies the algorithmic update for all our experiments, and so we will not address this further.

\paragraph{Aleatoric state} All of the agents considered are `feed-forward' variants of DQN with aleatoric state given by the current observation $S_t = O_t$.
This fully specifies the aleatoric update for all our experiments, and so we will not address this further.

\paragraph{Epistemic state} All agents' epistemic state are given $P_t = (\theta_t, B_t)$.
Here $\theta_t$ are parameters of an ENN $f$ and $B_t$ is a FIFO experience replay buffer.
Further, all of our experiments are specialized to the case where $f$ represents a (potentially general) value function over $\actions$ finite actions.
In all of our experiments we learn via minibatch SGD, according to Equation \eqref{eq:average_enn_learn}.

For each experiment, we can therefore fully specify the epistemic update through the ENN $f$, the loss function \mbox{$\ell \left(\theta; f, \theta^-, z, (s, a, r, s')\right) \rightarrow \R$}, the initial parameters $\theta_0$, the SGD update procedure, $n_{\rm batch}$, $n_{\rm index}$.
For the replay buffer in each experiment we set a minimum replay size equal to $n_{\rm batch}$ and a maximum replay size of 10,000.

\section{Action selection}

Action selection via $\epsilon$-greedy \citep{mnih-atari-2013} and Thompson sampling \citep{osband2019deep} are relatively straightforward, and have been covered extensively in prior work.
In this subsection we expand on the sample-based implementation of IDS that approximates equations \eqref{eq:variance-ids-target-policy} and \eqref{eq:variance-ids-target-gvfs}.
Note that, since we know the solution has support on at most two actions in $\Ac$, we can approximately optimize the objective by effectively searching over a probability grid for each \textit{pair} of actions in $\Ac$.
In all of our experiments we search with granularity of $\frac{1}{100}$ in each action probability.

In all of our experiments, we use a simple sample-based approach to approximating the shortfall and variance in equations \eqref{eq:variance-ids-target-policy} and \eqref{eq:variance-ids-target-gvfs}.
The first step is to generate $n_{\rm IDS}$ samples from the ENN given the agents epistemic state $P_t$.
The expected shortfall is then calculated simply as the average of the shortfall for each action, for each sample. 
The variance-based information gain is approximated through the sample variance, as detailed below.
\begin{enumerate}[(a)]
\item {\bf Learning Target = Optimal Action.}
We use the same $n_{\rm IDS}$ samples from the ENN to approximate the information gain in Equation \eqref{eq:variance-ids-target-policy}.
In our experiments we only perform experiments with action-value learning targets, and so we can simplify the exposition significantly.
Let $Q_1,..,Q_{n_{\rm IDS}} \in \R^\Ac$ be samples of the action value generated by the agent,
 $\Qc_a := \{Q_n \mid a \in \argmax_\alpha Q_n(S_t, \alpha) \}$,
 $\overline{Q}_a := \frac{1}{|\Qc_a|} \sum_{q \in \Qc_a} q $ and $\overline{Q} = \frac{1}{n_{\rm IDS}} \sum_{n=1}^{n_{\rm IDS}} Q_n$.
Then the approximate information gain used in our experiments can be written:
\begin{multline}
    \label{eq:approx_action_variance}
    \E \left[ \mathrm{tr}\left(\mathrm{Cov}\left[\E[Q_\dagger(H_t, \tilde{A}_t) | X_t, \tilde{A}_t, \pi_\target(\cdot|S_t)])] | X_t, \tilde{A}_t \right]\right) | X_t \right] \\
    \simeq
    \frac{1}{n_{\rm IDS}} \sum_{a=1}^{n_{\Ac}} |\Qc_a| \left( \overline{Q}_a - \overline{Q} \right)^2.
\end{multline}

\item {\bf Learning Target = GVF.}
In a similar manner, we reuse the same $n_{\rm IDS}$ samples from the ENN to approximate the information gain in Equation \eqref{eq:variance-ids-target-gvfs}.
For a problem with general value function $Q_\dagger \in \R^d$, let $Q_1,..,Q_{n_{\rm IDS}} \in \R^\Ac$ be samples of the action value generated by the agent and $\overline{Q}$ the sample mean.
The approximate information gain used in our experiments is then:
\begin{multline}
    \label{eq:approx_gvf_variance}
    \E \left[ \mathrm{tr}\left(\mathrm{Cov}\left[Q_\dagger(H_t, \tilde{A}_t) | X_t, \tilde{A}_t \right]\right) | X_t \right] \\
    \simeq
    \frac{1}{n_{\rm IDS}} \sum_{i=1}^{d} \sum_{j=1}^{n_{\rm IDS}} \left(Q_{i,j} - \overline{Q}_i\right)^2.
\end{multline}
\end{enumerate}

\section{Parameters for each experiment}

In this section we list the settings used to generate the results in Section \ref{se:computation}.
It is our intention to provide some elements of our agents and code for opensource.

\paragraph{7.2.1, 7.3.1, and \bsuite\  IDS implementation}
We use the exact same settings for the experiments in these sections:
\begin{itemize}
    \item ENN = ensemble of 20 50-50-MLPs with matched prior functions initialized according to JAX standards \citep{osband2018randomized}.
    \vspace{-1mm}
    \item Loss $\ell = \ell^{Q, \gamma}$ (Equation \eqref{eq:q_learning_enn}) with $\gamma=0.99$.
    \vspace{-1mm}
    \item Action selection via Equation \eqref{eq:approx_action_variance} with $n_{\rm IDS}=40$.
    \vspace{-1mm}
    \item SGD update according with ADAM with learning rate $\alpha=0.001$. 
    \vspace{-1mm}
    \item $n_{\rm batch}=128$ for RL tasks and $n_{\rm batch}=1$ for bandit task.
\end{itemize}

\paragraph{7.3.2 Sparse bandit}
Same settings as 7.2.1 except for the ENN and loss function designed to encode the prior knowledge:
\begin{itemize}
    \item ENN = ensemble of 20 logits over $N$ possible rewarding arms.
    \vspace{-1mm}
    \item Loss $\ell$ of the cross-entropy on the posterior probabilty of the observation given the rewarding arm.
    \vspace{-1mm}
    \item Action selection via Equation \eqref{eq:approx_action_variance} with $n_{\rm IDS}=40$, given the knowledge of how to convert logits to associated action values.
    \vspace{-1mm}
    \item Vanilla SGD update with learning rate $\alpha=0.1$. 
\end{itemize}

\paragraph{7.4 Variance-IDS with general value functions}
These settings are almost identical to 7.3.2, but using a different action selection and with specialized ENNs:
\begin{itemize}
    \item ENN = ensemble of 20 logits over $N$ possible rewarding arms (rewarding states in the chain problem).
    \vspace{-1mm}
    \item Loss $\ell$ of the cross-entropy on the posterior probabilty of the observation given the ENN logits.
    \vspace{-1mm}
    \item Action selection via Equation \eqref{eq:approx_gvf_variance} with $n_{\rm IDS}=40$, given the knowledge of how to convert logits to associated action values.
    \vspace{-1mm}
    \item Vanilla SGD update with learning rate $\alpha=0.1$. 
\end{itemize}

%
%
%
%

\newpage
\onecolumn

\ifx\newgeometry\undefined\else
\newgeometry{top=20mm, bottom=20mm, left=20mm, right=20mm}
\fi

%
\bsuitetitle{Information Directed Sampling}
\label{app:bsuite-report}
\bsuiteabstract

%
\vspace{-2mm}
\subsection{Agent Definition}
\label{app:bsuite-agents}

We compare the performance of three agents as outlined in Section \ref{se:sample_ids_alg}.
In each case, the agents learn with an ensemble ENN formed of 20 50-50-MLPs and identical learning rules.
The only difference is in the action selection `planner`:
\begin{itemize}
    \vspace{-2mm}
    \item \textbf{egreedy}: $\epsilon\hspace{-1mm}=\hspace{-1mm}5\%$ greedy action selection, essentially DQN \citep{mnih-atari-2013}.
    \vspace{-2mm}
    \item \textbf{TS}: Thompson Sampling, aka \textit{bootstrapped DQN} \citep{osband2016deep}.
    \vspace{-2mm}
    \item \textbf{IDS}: Information Directed Sampling, with 40 samples per step.
\end{itemize}

\vspace{-2mm}
\subsection{Summary Scores}
\label{app:bsuite-scores}

Each \bsuite\ experiment outputs a summary score in [0,1].
We aggregate these scores by according to key experiment type, according to the standard analysis notebook.
A detailed analysis of each of these experiments will be released post review.

\begin{figure}[ht]
  \centering
  \includegraphics[width=\textwidth,height=60mm,keepaspectratio]{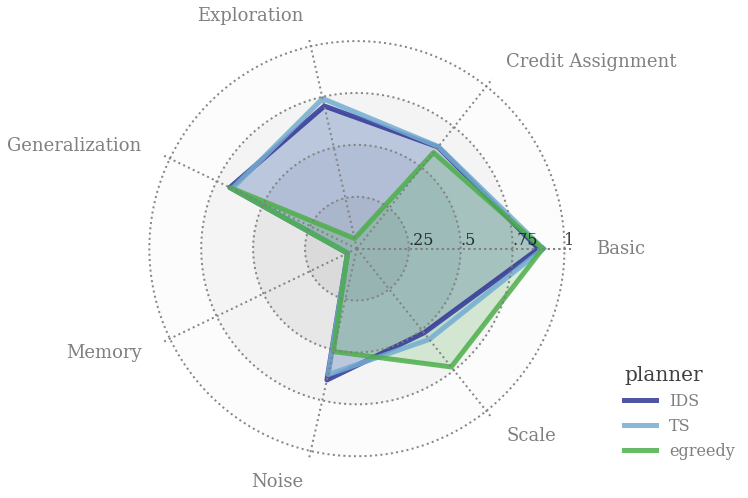}
  \captionof{figure}{Snapshot of agent behaviour.}
  \label{fig:radar}
\end{figure}

\begin{figure}[ht]
  \centering
  \includegraphics[width=\textwidth,height=60mm,keepaspectratio]{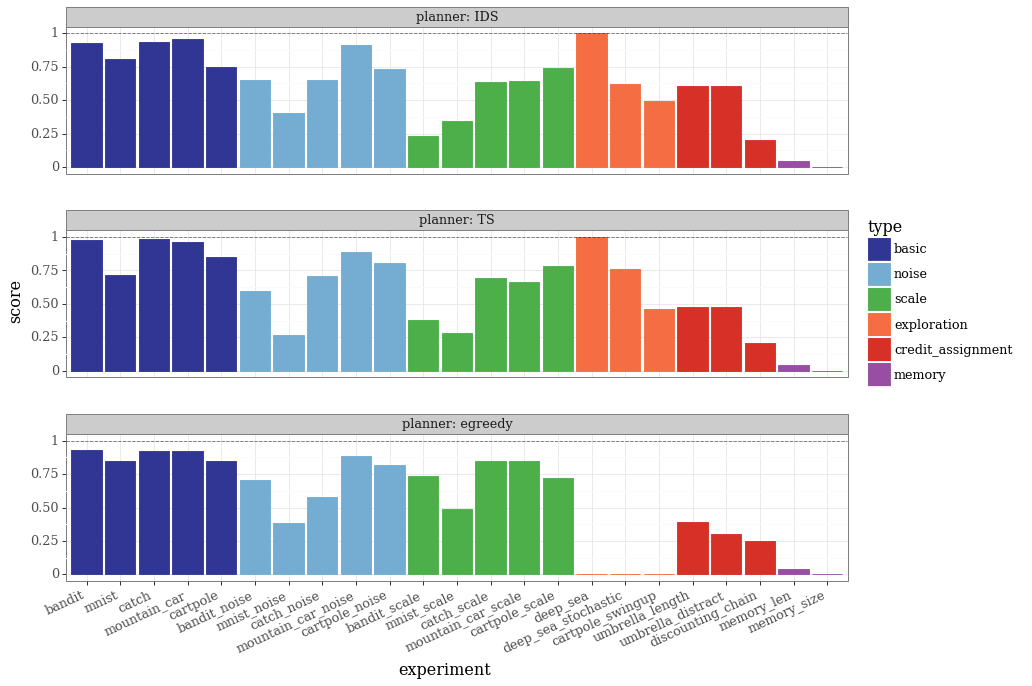}
  \captionof{figure}{Score for each \bsuite\ experiment.}
  \label{fig:bar}
\end{figure}

\subsection{Results Commentary}
\label{app:bsuite-commentary}

These results show a strong signal that action selection via IDS and TS can greatly outperform that of $\epsilon$-greedy in domains where exploration is crucial.
At least in the current collection of \bsuite\ tasks, IDS and TS perform similarly overall.
We also see some evidence that $\epsilon$-greedy action selection is more robust to changes in scale, but less robust to noise.

\backmatter  

\printbibliography

\end{document}